\documentclass[11pt]{article} 

\usepackage[letterpaper,margin=1in]{geometry}


\usepackage[letterpaper,margin=1in]{geometry}

\def\colorful{1}
\ifnum\colorful=1

\else

\fi

\newif\ifred
\redtrue

\usepackage[T1]{fontenc}
\usepackage{microtype}

\usepackage{titlesec}
\titleformat*{\paragraph}{\bfseries}

\usepackage{amsthm,amsmath,amssymb,amsfonts,amssymb,mathtools}
\usepackage{amsmath,amssymb,amsfonts,amssymb,mathtools}
\usepackage{xcolor}
\usepackage{empheq}
\usepackage{dsfont}
\usepackage{mathrsfs}
\usepackage{graphicx}
\usepackage{enumitem}
\usepackage{aliascnt} 
\usepackage{xspace}
\usepackage{bbm}

\usepackage{caption}
\usepackage{tikz}
\usetikzlibrary{patterns}
\usetikzlibrary{patterns}
\usetikzlibrary{arrows,shapes,automata,backgrounds,petri,positioning}
\usetikzlibrary{shadows}
\usetikzlibrary{calc}
\usetikzlibrary{spy}
\usetikzlibrary{angles, quotes}
\usetikzlibrary{matrix}
\usepackage{pgf,pgfplots}
\usepackage{pgfmath,pgffor}
\pgfplotsset{compat=1.17}

\usepackage{framed}
 \usepackage{algorithm}
\usepackage[noend]{algpseudocode}

\definecolor[named]{ACMBlue}{cmyk}{1,0.1,0,0.1}
\definecolor[named]{ACMYellow}{cmyk}{0,0.16,1,0}
\definecolor[named]{ACMOrange}{cmyk}{0,0.42,1,0.01}
\definecolor[named]{ACMRed}{cmyk}{0,0.90,0.86,0}
\definecolor[named]{ACMLightBlue}{cmyk}{0.49,0.01,0,0}
\definecolor[named]{ACMGreen}{cmyk}{0.20,0,1,0.19}
\definecolor[named]{ACMPurple}{cmyk}{0.55,1,0,0.15}
\definecolor[named]{ACMDarkBlue}{cmyk}{1,0.58,0,0.21}

\usepackage[colorlinks,citecolor=blue,linkcolor=magenta,bookmarks=true]{hyperref}
 \usepackage[nameinlink]{cleveref}
\creflabelformat{ineq}{#2{\upshape(#1)}#3}
\crefname{sub}{Subsection}{Subsection}
\creflabelformat{Subsection}{#2{\upshape(#1)}#3}
\crefname{sdp}{SDP}{SDP}
\creflabelformat{sdp}{#2{\upshape(#1)}#3}
\crefname{lp}{LP}{LP}
\creflabelformat{lp}{#2{\upshape(#1)}#3}
\usepackage{aliascnt}
\usepackage{cleveref}
\crefname{ineq}{Inequality}{Inequality}
\creflabelformat{ineq}{#2{\upshape(#1)}#3}
\crefname{sub}{Subsection}{Subsection}
\creflabelformat{Subsection}{#2{\upshape(#1)}#3}
\crefname{sdp}{SDP}{SDP}
\creflabelformat{sdp}{#2{\upshape(#1)}#3}
\crefname{lp}{LP}{LP}
\creflabelformat{lp}{#2{\upshape(#1)}#3}

\makeatletter

\makeatother

\newtheorem{theorem}{Theorem}[section]

\newtheorem{lemma}[theorem]{Lemma}

\newtheorem{informal theorem}[theorem]{Theorem (informal statement)}

\newtheorem{proposition}[theorem]{Proposition}
\newtheorem{corollary}[theorem]{Corollary}
 
\newtheorem{fact}[theorem]{Fact}

\newtheorem{remark}[theorem]{Remark}

\newtheorem{definition}[theorem]{Definition}

\newcommand{\eqdef}{\coloneqq}

\newcommand{\lp}{\left}
\newcommand{\rp}{\right}
\newcommand\norm[1]{\left\| #1 \right\|}

\renewcommand\vec[1]{\mathbf{#1}}
\DeclareMathOperator*{\pr}{\mathbf{Pr}}
\DeclareMathOperator*{\E}{\mathbf{E}}

\newcommand{\normal}{\mathcal{N}}

\DeclareMathOperator*{\argmax}{argmax}

\newcommand{\bx}{\mathbf{x}}

\newcommand{\err}{\mathrm{err}}

\newcommand{\R}{\mathbb{R}}

\newcommand{\Z}{\mathbb{Z}}
\newcommand{\N}{\mathbb{N}}
\newcommand{\eps}{\epsilon}

\newcommand{\poly}{\mathrm{poly}}

\newcommand{\D}{\mathcal{D}}

\newcommand{\Ind}{\mathds{1}}

\newcommand{\littlesum}{\mathop{\textstyle \sum}}

\newcommand{\x}{\vec x}

\newcommand{\opt}{\mathrm{opt}}

\newcommand{\iid}{{i.i.d.}\ }
\newcommand{\abs}[1]{\lp| #1 \rp|}
\newcommand{\A}{\mathcal{A}}

\renewcommand\Pr{\pr}





\begin{document}

\title{Statistical Query Hardness of Multiclass Linear Classification\\ 
with Random Classification Noise}

\author{
Ilias Diakonikolas\thanks{Supported by NSF Medium Award CCF-2107079 and an H.I. Romnes Faculty Fellowship.}\\
University of Wisconsin-Madison\\
\texttt{ilias@cs.wisc.edu}
\and
Mingchen Ma\thanks{Supported by NSF Award  CCF-2144298 (CAREER).}\\
University of Wisconsin-Madison\\
\texttt{mingchen@cs.wisc.edu}
\and
Lisheng Ren\thanks{Supported in part by NSF Medium Award CCF-2107079.}\\
University of Wisconsin-Madison\\
\texttt{lren29@wisc.edu}
\and
Christos Tzamos\thanks{Supported in part by NSF Award  CCF-2144298 (CAREER).}\\
University of Athens and Archimedes AI\\
\texttt{ctzamos@gmail.com}
}

\maketitle

\begin{abstract}
    We study the task of Multiclass Linear Classification (MLC) 
in the distribution-free PAC model 
with Random Classification Noise (RCN). 
Specifically, the learner is given a set of 
labeled examples $(x, y)$, where $x$ is drawn 
from an unknown distribution on $\R^d$ 
and the labels are generated by a 
multiclass linear classifier corrupted with RCN. 
That is, the label $y$ is flipped from $i$ to $j$ 
with probability $H_{ij}$ 
according to a known noise matrix $H$ with 
non-negative separation 
$\sigma: = \min_{i \neq j} H_{ii}-H_{ij}$. 
The goal is to compute a hypothesis with 
small 0-1 error. For the special case of two labels, 
prior work has given polynomial-time algorithms 
achieving the optimal error. 
Surprisingly, little is known about 
the complexity of this task even for three labels.
As our main contribution, we show that the complexity 
of MLC with RCN becomes drastically different 
in the presence of three or more labels. 
Specifically, we prove {\em super-polynomial} 
Statistical Query (SQ) lower bounds for this problem. 
In more detail, even for three labels and 
constant separation, 
we give a super-polynomial lower bound 
on the complexity of any SQ algorithm achieving optimal error. 
For a larger number of labels  and smaller separation, 
we show a super-polynomial SQ lower bound even 
for the weaker goal of achieving {\em any} constant factor approximation to the optimal loss or even beating the trivial hypothesis.
    \end{abstract}

\setcounter{page}{0}
\thispagestyle{empty}
\newpage

\section{Introduction}
A multiclass linear classifier is any function 
$f: \R^d \to [k]$ of the form 
$f(x) = \argmax_{i \in [k]} (w_i\cdot x)$, 
where 
$w_i \in \R^{d}$ for all $i \in [k]$. 
(If the maximum is achieved by more than one indices, 
the tie is broken by taking the smallest index.). 
Multiclass Linear Classification (MLC)---the task of learning 
an unknown linear classifier from random labeled examples--- 
is a textbook machine learning problem~\cite{shalev2014understanding}, 
which has been extensively studied both theoretically and empirically 
\cite{platt1999large,hsu2002comparison,aly2005survey,duan2005best, 
tewari2007consistency,kakade2008efficient,huang2011extreme,beygelzimer2019bandit}. 
MLC naturally arises in a range 
of critical applications, including face recognition 
\cite{lihong2009face}, cancer diagnosis \cite{panca2017application}, 
ecological indicators \cite{bourel2018multiclass} and more. 
In all these settings, the number of labels is much larger than 
two---hence they cannot be modeled by binary linear classification. 
Moreover, MLC has important connections with modern deep learning 
architectures; the last layer of a neural network is typically 
a softmax function layer---a natural extension of MLC. 

The sample complexity of MLC is fairly well-understood in the 
PAC model, even in the presence of noise. 
Specifically, standard arguments, see, 
e.g.,~\cite{shalev2014understanding}, 
give that $\poly(d, k, 1/\eps)$ samples information-theoretically
suffice to achieve 0-1 error $\opt+\eps$, 
where $\opt$ is the optimal error achievable 
by any function in the class. Yet the computational complexity 
of this task has remained perplexing.
In the realizable setting (i.e., in the presence of clean labels), 
the sample complexity of MLC is $\Tilde{O}(dk/\eps)$ and the 
problem is solvable in polynomial-time via 
a reduction to linear programming (LP). Notably, 
the corresponding LP can be solved efficiently in the 
Statistical Query (SQ) model~\cite{Kearns:98}, 
using a rescaled Perceptron algorithm~\cite{dunagan2004simple}. 
Alas, the realizable setting is highly idealized---in most 
practical applications, some form of partial 
label contamination is unavoidable. 
It is thus natural to ask what is algorithmically possible 
in the presence of label noise.

If the label noise is adversarial~\cite{Haussler:92, KSS:94}
or even semi-random~\cite{Massart2006}, strong computational hardness 
results are known even for {\em binary} linear classification, corresponding to $k=2$~\cite{daniely2016complexity,diakonikolas2022cryptographic,diakonikolas2022near,nasser2022optimal,diakonikolas2023near,tiegel2023hardness}. 
These hardness results, of course, carry over to the multiclass setting.

Interestingly, if the label noise is random---formalized 
by the Random Classification Noise (RCN) 
model~\cite{AL88}---the binary linear classification task 
admits a polynomial-time algorithm. The first such 
algorithm was given in~\cite{blum1998polynomial}; 
see also~\cite{dunagan2004simple,diakonikolas2021forster,diakonikolas2023strongly}. 
All these algorithms are known to fit the SQ model. 

The preceding discussion motivates the algorithmic study of MLC 
in the presence of RCN. 
A positive algorithmic result for the multiclass case 
with RCN would be of significant theoretical and practical interest. {\em As our main contribution, we give strong evidence that such an efficient algorithm does not exist.}

To formally state our contributions, we require 
the definition of multiclass classification 
with RCN that has been widely studied in 
prior works~\cite{patrini2017making,van2018theory,ghosh2017robust}.

\begin{definition}[Multiclass Classification with RCN]
Let $X$ be the space of examples and let $Y=[k]$ be the label space.  
A multiclass classifier is any function $f: X \to Y$.
A noise matrix $H \in [0,1]^{k \times k}$ is a row stochastic matrix such that for every $i \in [k]$, $\sum_{j=1}^k H_{ij} =1$. An instance of multiclass classification with RCN is parameterized 
by $(D,f^*,H)$, where $f^*$ is the ground truth multiclass classifier, $H$ is the noise matrix, and $D$ is a joint 
distribution over $X \times Y$ such that each labeled example $(x,y)\sim D$ is generated as follows. We have $x \sim D_X$, where $D_X$ is the marginal of $D$ over $X$.  
The label $y$ of $x$ is drawn from the distribution such that $\Pr(y=j \mid x) = H_{f^*(x)j}$ for $j \in [k]$. 
The error of a hypothesis $h: X \to Y$ is defined as 
$\err(h) := \Pr_{(x,y)\sim D}\left(h(x) \neq y\right)$. 
Given a set $S$ of \iid examples drawn from $D$, 
$\epsilon \in (0,1)$, and a function class $\mathcal{F}$ 
such that $f^* \in \cal F$, 
a learner is asked to output a hypothesis $\hat{h}$ such that $\err(\hat{h}) \le \opt+\epsilon$, where $\opt:= \min_{f \in \cal F} \err(f)$. 
\end{definition}

We will also consider algorithms with 
approximate error guarantees, namely aiming for 0-1 
error $C \opt+\epsilon$, where $C$ is a universal constant.
While the focus of this paper is on the setting that 
$\mathcal{F}$ is the class of multiclass {\em linear} 
classifiers, the broader class of multiclass 
polynomial classifiers will arise in our proof. 
A multiclass degree-$m$ polynomial classifier 
$f_P: \R^d \to [k] $ is characterized 
by a collection $P=(p_1,\dots,p_k)$, 
where $p_i(x): \R^d \to \R, i \in [k]$, 
is a polynomial of degree at most $m$. 
For $x \in \R^d$, 
$f_P(x) = \argmax_{j \in [k]} p_j(x)$. 
(If the maximum is achieved by more than one indices, 
the tie is broken by taking the smallest index.)

If $\sigma \eqdef \min_{i\neq j}H_{ii}-H_{ij}\ge 0$, 
MLC with RCN can be solved up to error $\opt+\eps$ 
with sample complexity 
$\min\{\Tilde{O}(dk/\sigma \eps),\Tilde{O}(dk/\eps^2)\}$ 
via empirical risk minimization (ERM). 
(Moreover, the ground truth $f^*$ achieves the optimal 
0-1 loss of $\opt$.)
This 
can be directly deduced from
\cite{Massart2006}. Though the sample complexity of the problem is understood, its computational complexity 
has remained open even for the case of $3$ labels. 

A long line of work~\cite{wang2017multiclass,patrini2017making,van2018theory,lipton2018detecting,zhang2021learning} has focused on 
understanding MLC, and more general multiclass classification,  
with RCN from an algorithmic perspective---both theoretically 
and empirically. The methods proposed and analyzed in 
these works require inverting the noise matrix $H$ 
during the training process, and achieve sample and time 
complexity scaling inverse polynomially with the 
minimum singular value of $H$. This quantity could be 
arbitrarily small or even zero---
even if $k=3$ and separation $\sigma=0.1$.  
Hence, such approaches do not 
lead to an efficient algorithm in general.


For binary classification, \cite{Kearns:98} showed that 
any efficient SQ algorithm that succeeds 
in the realizable setting can be efficiently 
converted into an efficient SQ algorithm that solves the same problem in the presence of RCN. 
Unfortunately, no such result is known for the multiclass setting. This suggests that an efficient algorithm for MLC 
with RCN would require novel techniques.





Perhaps surprisingly, here we provide strong evidence that such an efficient algorithm does not exist, 
even for the case of 3 labels. Formally, we establish 
the first super-polynomial SQ lower bounds 
for MLC with RCN, suggesting that 
the complexity of the problem 
dramatically changes for $k\ge 3$.

SQ algorithms are a class of algorithms
that, instead of having direct access to samples, 
are allowed to query expectations of bounded functions of the distribution (see Definition~\ref{def:sq}). 
The SQ model was introduced in~\cite{Kearns:98}. 
Subsequently, the model has been extensively studied 
in a variety of contexts~\cite{Feldman16b}.
The class of SQ algorithms is broad and is known to 
capture a range of  
algorithmic techniques~\cite{FeldmanGRVX17, FeldmanGV17}.

All our hardness results hold even if the noise matrix 
$H$ is known and given as input to the learner.
Our first main result pertains to the case of optimal error
guarantee, and holds even if $\sigma$ is a positive constant. In particular, we show: 

\begin{theorem}[Informal Statement of \Cref{th additive}] \label{thm:exact-intro}
    There is a noise matrix $H \in [0,1]^{3\times3}$ with $H_{ii}-H_{ij} \ge 0.1, \forall i \neq j \in [3]$, such that it is SQ-hard to learn an MLC problem on $\R^d$, with RCN specified by $H$, 
    up to error $\opt+\eps$.
\end{theorem}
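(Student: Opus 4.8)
The plan is to prove this through the standard Statistical Query lower bound methodology: exhibit a large family of valid MLC-with-RCN instances on $\R^d$, all sharing one carefully chosen noise matrix $H\in[0,1]^{3\times 3}$ with $H_{ii}-H_{ij}\ge 0.1$, that is ``indistinguishable at low moment order'' from a single trivial reference instance while being rich enough that no fixed hypothesis is near-optimal for more than a negligible fraction of it. The family will be indexed by a hidden low-dimensional structure---for intuition, a hidden direction $v\in\mathbb{S}^{d-1}$---and $D_v$ will have the shape: the marginal $D_X$ is the standard Gaussian on the orthogonal complement of the hidden coordinates times a fixed, specially reshaped distribution $A$ on the hidden coordinates; the ground-truth classifier $f^*_v$ depends on $x$ only through the hidden coordinates; and the label is generated from $f^*_v(x)$ through $H$. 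Once such a family is in hand, an average-correlation SQ-dimension estimate applied to Definition~\ref{def:sq} (by now routine for hidden-direction problems) yields that any SQ algorithm achieving error $\opt+\eps$ must make $d^{\Omega(1)}$ many queries or use tolerance $d^{-\Omega(1)}$.

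The core of the argument is the design of the low-dimensional gadget. One wants $H$, the reshaped marginal $A$, and a ground-truth classifier such that, along the hidden coordinate, the three label-conditional sub-densities $t\mapsto \Pr_{D_v}(y=j\mid t)\cdot(\text{density of } A)$ agree with those of the reference instance $D_0$ up to degree $m=d^{\Omega(1)}$ in the Hermite basis. Summing these three conditions over $j\in[3]$ forces $A$ to match the moments of $\normal(0,1)$ up to degree $m$, which is possible with $A\neq\normal(0,1)$ by a standard moment-interpolation construction; the remaining, genuinely constraining equations are precisely what pin down the choice of $H$ and of the classifier's decision regions. This is exactly where $k=3$ behaves differently from $k=2$: with two labels the analogous system is overdetermined in a way that forces the gadget to degenerate (essentially, the hidden halfspace is forced to be trivial and its low moments betray it), whereas with three labels---and the additional freedom of choosing $H$ among all, in particular rank-deficient, $3\times 3$ noise matrices---the system is solvable non-trivially. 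Since the decision regions of a three-class \emph{linear} classifier cut any line into at most three pieces, a linear classifier by itself does not supply enough independent ``knobs'' to annihilate $m$ low-degree Hermite coefficients; this is why multiclass \emph{polynomial} classifiers enter the argument. Concretely, I would first carry out the gadget construction for a degree-$m$ multiclass polynomial classifier, whose decision regions along the hidden line are unions of $\Theta(m)$ intervals and hence carry enough degrees of freedom to solve the moment system, proving the SQ lower bound for learning such polynomial classifiers with RCN; and then transfer it to linear classifiers through a Veronese-type dimension lift, realizing the degree-$m$ polynomial classifier as a genuine linear classifier on a hidden $\Theta(m)$-dimensional subspace and pushing the gadget marginal forward accordingly, with the remaining directions filled by independent Gaussian noise so that both validity of the instance and the moment-matching are preserved.

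The main obstacle---and the step requiring the most care---is this gadget construction: jointly (a) choosing $H$ with $H_{ii}-H_{ij}\ge 0.1$ together with a valid multiclass classifier so that the moment-matching system over the label-conditional sub-densities has a solution, (b) certifying that the resulting candidate $A$ is an honest probability measure (nonnegativity and normalization), and (c) ensuring the polynomial-to-linear lift preserves all required structure. A more routine but necessary step is to convert moment-matching into the pairwise-correlation bounds that drive the SQ-dimension estimate: Hermite orthogonality makes the $\chi^2$-type correlation between $D_v$ and $D_{v'}$ decay like $|\langle v,v'\rangle|^{m}$, so a packing of $2^{d^{\Omega(1)}}$ nearly orthogonal hidden subspaces delivers the claimed super-polynomial bound.

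It remains to turn ``cannot identify the hidden structure'' into ``cannot achieve error $\opt+\eps$''. Here the separation $\sigma\ge 0.1$ is used twice: it makes the Bayes-optimal classifier for $D_v$ coincide with $f^*_v$ (so $\opt=\err_{D_v}(f^*_v)$), and it gives the pointwise margin bound that whenever a hypothesis $h$ disagrees with $f^*_v$ at a point, its conditional 0-1 loss there exceeds the Bayes loss by at least $\sigma$. Consequently any $h$ with $\err_{D_v}(h)\le \opt+\eps$ must agree with $f^*_v$ on a $(1-O(\eps/\sigma))$-fraction of the mass, i.e.\ must essentially encode the hidden structure; but a single hypothesis can be this close to $f^*_v$ for only an exponentially small fraction of the packing, so any SQ learner succeeding uniformly over the family would in particular distinguish it from $D_0$, contradicting the SQ-dimension bound. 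Assembling these pieces yields the theorem.
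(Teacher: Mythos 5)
Your high-level strategy matches the paper's exactly: a hidden-direction family of instances indexed by $v\in\mathbb{S}^{N-1}$, moment-matching of one-dimensional gadget distributions against $\normal(0,1)$, a correlation/SQ-dimension bound driven by near-orthogonal packings, a Veronese lift from degree-$m$ multiclass polynomial classifiers to linear classifiers on $\R^d=\R^{N^{O(m)}}$, and a margin argument using $H_{ii}-H_{ij}\ge 0.1$ to convert "cannot distinguish" into "cannot learn to $\opt+\eps$." Where the proposal stops short is precisely at what it calls the "main obstacle," and the gesture there is slightly off. The paper does not solve an interpolation system that "pins down" $H$; rather, it fixes $H$ in advance to satisfy a clean structural condition — the last row is a convex combination $h_k=\sum_{j<k} a_j h_j$ (for $k=3$, $h_3=\tfrac12(h_1+h_2)$) — and then allocates class mass so that $\Pr(S_j\mid x\notin S_k)=a_j$. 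That structure is what makes the label marginal $D_y$ identically $h_k$ under both the null and the alternative, and it is exactly what degenerates at $k=2$ (forcing $h_1=h_2$, hence $\sigma\le 0$), which is the real reason the binary case escapes. The second unstated ingredient is the distribution-projection lemma: the class-conditional densities $D_v(\cdot\mid y=i)$ are not a single reshaped $A$ but convex mixtures $\sum_j \frac{a_j H_{ji}}{H_{ki}} P^{A_j}_v$ of $k-1$ base distributions $A_1,\dots,A_{k-1}$ — built by restricting $\normal(0,1)$ to a lattice of intervals and shifting the lattice $k-2$ times so the supports are disjoint inside a window $I_{\mathrm{in}}$ and identical outside it — and it is this mixture identity, together with each $A_j$ matching Gaussian moments, that yields the $|\langle u,v\rangle|^{t}$ correlation decay for the joint distribution (not just the $X$-marginal). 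Finally, the reduction from testing to learning in the paper is simpler than the "only exponentially few $v$'s are consistent with $\hat h$" argument you sketch: one just issues a single extra query to estimate $\err(\hat h)$ and compares against $1-H_{kk}$, using that $\err(k)-\opt=2\alpha$ under the alternative. Your route to the reduction would also work, but it is not the one the paper takes. None of these are fatal — they are precisely the pieces you flagged as needing care — but without the convex-combination condition and the projection lemma the gadget is not actually specified, and the moment-matching step as phrased (a single $A$ whose label-conditional slices each match the reference) does not by itself imply the correlation bound.
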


Given Theorem~\ref{thm:exact-intro}, 
it is natural to ask whether it is possible to {\em approximately} efficiently learning MLC under RCN. 
A learner is said to be 
$C$-approximate if it outputs a hypothesis $\hat{h}$ 
such that $\err(\hat{h}) \le C\opt + \eps, \forall \eps \in (0,1)$, where $C>1$. 
By considering larger values of $k$ and small separation $\sigma$, we show that even approximate learning 
is SQ-hard (even if the noise level is small). In more detail, we show: 
\begin{theorem}[Informal Statement of \Cref{cor approximation}]
    For any $C>1$, there exists a noise matrix $H\in [0,1]^{k\times k}$, where $k=O(C)$ and $\sigma=\min_{i,j} H_{i,i}-H_{i,j}=\Omega(1/C)$, 
    such that it is SQ-hard to learn an MLC problem on $\R^d$ with RCN specified by $H$ 
    up to error $C\opt$, even if $\opt=\Theta(1/C)$.
\end{theorem}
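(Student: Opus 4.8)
The plan is to derive \Cref{cor approximation} by instantiating the same hidden-direction SQ lower bound machinery that underlies \Cref{th additive}, but now with a label set of size $k=\Theta(C)$ and separation $\sigma=\Theta(1/C)$ chosen so that the optimal error of the base instance is already $\opt=\Theta(1/C)$ while its SQ-hardness survives up to a \emph{constant} error threshold. A black-box reduction from the three-label case does not suffice, and it is worth saying why: the cleanest attempt is to take a hard three-label instance, place it on a $\Theta(1/C)$ fraction of the probability mass, and make the remaining mass realizable with fresh ``dummy'' labels, which indeed pushes $\opt$ down to $\Theta(1/C)$; but then a hypothesis with error $\le C\cdot\opt$ is only guaranteed conditional error $O(C\cdot\opt_0)$ on the hard region, where $\opt_0$ is the (constant) optimum of the three-label instance, so this contradicts the three-label hardness only when $C$ is at most a fixed constant --- the factor-$C$ slack is exactly cancelled by the factor-$1/C$ mass of the hard region. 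Beating \emph{every} constant $C$ therefore requires the base instance itself to have $\opt=\Theta(1/C)$, which is what enlarging the label set to $\Theta(C)$ (so that the trivial error becomes $1-\Theta(1/C)$, leaving room) together with a correspondingly smaller $\sigma$ provides.

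Concretely, I would construct a family $\{D_v\}_{v\in V}$ of RCN-MLC instances on $\R^d$ indexed by a hidden direction $v$, with $|V|=2^{d^{\Omega(1)}}$ and the $v$'s pairwise near-orthogonal relative to a reference distribution $D_0$, so that $D_v$ and $D_0$ agree in every direction orthogonal to $v$ and are SQ-indistinguishable in the usual quantitative sense --- exactly as in the proof of \Cref{th additive}. The new elements are: (i) a noise matrix $H\in[0,1]^{k\times k}$ with $k=\Theta(C)$ whose separation is $\Theta(1/C)$ and whose diagonal entries on the rows actually used by the ground truth are $1-\Theta(1/C)$, so that (using $\opt=\E_x[1-H_{f^\ast(x)f^\ast(x)}]$, valid since $\sigma\ge 0$) we get $\opt=\Theta(1/C)$; and (ii) a genuinely multiclass-\emph{linear} ground truth $f^\ast$ realized on a slightly lifted space, via the standard device of appending a few coordinates with large coefficients to select the relevant ``slot'', so that along $v$ the classifier attains the low-error behavior while under $D_0$ the Bayes error is some constant $c_0$ with $C\cdot\opt<c_0<1-\Theta(1/C)$. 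The reduction is then immediate: any SQ algorithm outputting a hypothesis with error $\le C\cdot\opt<c_0$ on every $D_v$ would in particular distinguish each $D_v$ from $D_0$ (on which every hypothesis errs $\ge c_0$), contradicting the SQ-dimension lower bound inherited from \Cref{th additive}; hence $\poly(d)$ queries of tolerance $1/\poly(d)$ do not suffice.

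The main obstacle is the simultaneous bookkeeping on the noise matrix and the ground-truth classifier: one must exhibit a single $H$ with $\sigma=\Theta(1/C)$ for which $\opt$ is as small as $\Theta(1/C)$ --- which forces $f^\ast$ onto the high-diagonal rows almost surely --- while keeping the three quantities $C\cdot\opt$, the threshold $c_0$, and the trivial error $1-\Theta(1/C)$ separated by constants uniformly in $C$, and while $f^\ast$ is still implemented by an honest multiclass linear (not polynomial) classifier. These constraints pull against one another, and the delicate point is to meet all of them without perturbing the marginal geometry that makes the directions in $V$ near-orthogonal, since the SQ-dimension estimate --- and hence the whole lower bound --- must still go through essentially verbatim. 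Once the gadget is in place, the SQ-dimension computation, its conversion into a query/tolerance lower bound, and the ``$C$-approximation $\Rightarrow$ distinguishing $D_v$ from $D_0$'' step are all direct reuses of the argument behind \Cref{th additive}.
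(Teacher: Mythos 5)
Your overall plan is aligned with the paper's philosophy---use the hidden-direction SQ machinery, enlarge the label set to $k=\Theta(C)$, and design a noise matrix with separation $\Theta(1/C)$ and near-$1$ diagonal on the rows used by $f^\ast$ so that $\opt=\Theta(1/C)$---and your discussion of why a mass-scaling black-box reduction from $k=3$ loses exactly the factor it needs to gain is a correct observation. But there is a concrete gap in step~(ii) that would prevent the argument from closing: you propose to realize $f^\ast$ as an honest multiclass \emph{linear} classifier ``on a slightly lifted space, via appending a few coordinates with large coefficients.'' This cannot work as stated. In the hidden-direction construction (\Cref{def 1D}, \Cref{def distribution}), the restriction of $f^\ast$ to the line $v\cdot x$ must partition that line into $\Theta(mk)$ disjoint intervals (each $J_i$ is a union of $m$ intervals), and the decision regions of any multiclass linear classifier are convex. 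Appending $O(1)$ coordinates does not change this; one needs the degree-$O(m)$ Veronese map $V(x)=(x,1)^{\otimes m}$, which lifts $\R^N$ to $\R^d$ with $d=N^{O(m)}$. The choice of $m$ is then not a side detail but the crux of the quantitative argument: $m$ must be small enough that $d=N^{O(m)}$ keeps the lower bound super-polynomial, yet large enough to cover the region $I_{in}$ of mass $1-1/\poly(k)$, which forces $m\delta=\Theta(\sqrt{\log k})$ and hence $m=\Theta(\sqrt{N\log k})$ for $\delta=1/\sqrt{N}$. Your proposal never confronts this tradeoff, and without it the ``$C\cdot\opt<c_0<1-\Theta(1/C)$'' gap you want cannot simultaneously be obtained with a lower bound that remains super-polynomial in $d$.

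A secondary, structural difference: the paper does not re-derive the construction for \Cref{cor approximation}. It first proves \Cref{th multiply}---SQ-hardness of achieving error $1-1/k-\zeta-o(1)$ even when $\opt\le 1/k+\zeta+1/k^3$, using the noise matrix with rows $H_{ii}=(k-1)/k-\zeta$, $H_{ik}=1/k+\zeta$ for $i<k$ and $H_{kj}=1/k-\zeta/k$, $H_{kk}=1/k+\zeta$, and the convex-combination coefficients $a_i=1/(k-1)$---and then obtains \Cref{cor approximation} as a one-line consequence by setting $k=\lceil 3C\rceil$, $\zeta=1/(100k)$, so that $C\opt\le(k/3)\opt\le 2/3<1-1/k-\zeta-1/\poly(d)$. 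Your plan essentially re-proves a specialization of \Cref{th multiply} from scratch; once the Veronese/degree issue above is fixed, the two routes would coincide, but the paper's factoring through \Cref{th multiply} is cleaner because it makes the same construction yield both the approximation hardness and the ``beating random guess'' hardness of \Cref{cor beat constant}.
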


In fact, our final result shows that within the regime $\sigma=\min_{i \neq j} H_{ii}-H_{ij}>0$, 
it is even SQ-hard to learn a hypothesis 
with error $1-1/k-o(1)$, even if $\opt=O(1/k)$. 
That is, it is hard to even find a hypothesis 
better than guessing the labels uniformly at random. 
Specifically, even if we only add $1\%$ RCN 
for an instance of MLC, it is hard to learn a hypothesis with error better than $99\%$ for $k=100$.
\begin{theorem}[Informal Statement of \Cref{cor beat constant}]
     For any $k\in \Z_+$ with $k\geq 3$, there is a noise matrix $H\in [0,1]^{k\times k}$ with 
     $\sigma=\min_{i,j} H_{i,i}-H_{i,j}=\Omega(1/d)$ such that it is SQ-hard to learn an MLC problem on 
     $\R^d$ with RCN specified by $H$ 
     up to error $1-1/k-o(1)$, even if $\opt=O(1/k)$.
\end{theorem}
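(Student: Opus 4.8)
The plan is to obtain \Cref{cor beat constant} from the same moment-matching (non-Gaussian component) construction that drives \Cref{th additive} and \Cref{cor approximation}, re-instantiated with an appropriate number of labels $k$ and a carefully chosen noise matrix. The design requirement on $H$ that makes a \emph{beating-the-trivial-hypothesis} statement possible is that the label marginal of the hard instances be uniform: one picks $H\in[0,1]^{k\times k}$ together with the label-frequency vector $\pi$ of the planted classifier $f^\ast$ (the law of $f^\ast(x)$ under the planted $x$-marginal) so that $\pi^\top H=\tfrac1k\mathbf 1^\top$, i.e.\ $\Pr_{(x,y)\sim D}(y=i)=1/k$ for every $i\in[k]$, while simultaneously keeping $\opt=\err(f^\ast)=1-\sum_i\pi_i H_{ii}$ at the claimed level $O(1/k)$ and $\sigma=\min_{i\neq j}(H_{ii}-H_{ij})=\Omega(1/d)>0$, so that $f^\ast$ is still the Bayes rule; e.g.\ a balanced $f^\ast$ with an $H$ near the identity corrected to be doubly-stochastic-type already gives a uniform label marginal and $\opt$ at the noise level. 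The uniform marginal is what does the work: it forces every constant hypothesis $h\equiv i$ to have error exactly $1-1/k$, so the content of the theorem is that no bounded-query, low-tolerance SQ algorithm (\Cref{def:sq}) can improve on this by more than $o(1)$.

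With $H$ fixed I would build the hard family $\{D_v\}_{v\in\mathcal V}$ exactly as in the proof of \Cref{th additive}: $\mathcal V$ is a super-polynomially large set of pairwise nearly-orthogonal hidden directions, the $x$-marginal of $D_v$ equals $\mathcal N(0,I_d)$ off the direction $v$ and, along $v$, matches a fixed distribution that agrees with $\mathcal N(0,1)$ in its first $m=d^{\Omega(1)}$ moments, the planted classifier $f^\ast_v$ is a function of $\langle v,x\rangle$ with label frequencies $\pi$ (realized as a multiclass linear classifier, after the polynomial lift used in \Cref{th additive} if necessary), and the labels of $D_v$ are corrupted by $H$. The standard pairwise-correlation estimate for such families then shows that, relative to the reference $D_0=\mathcal N(0,I_d)\otimes\mathrm{Unif}([k])$, the $\chi^2$-correlation of $D_v$ and $D_{v'}$ is super-polynomially small for $v\neq v'$, so any procedure that effectively locates $v$ is SQ-hard in the standard sense — it needs either super-polynomially many queries or super-polynomially small tolerance.

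The heart of the argument is to show that beating the trivial error by any non-vanishing amount does locate $v$. Suppose an SQ learner run on $D_v$ outputs $h$ with $\err_{D_v}(h)\le 1-1/k-\alpha$; equivalently $\E_x[H_{f^\ast_v(x),h(x)}]=\Pr_{(x,y)\sim D_v}(h(x)=y)\ge 1/k+\alpha$. Since $\pi^\top H=\tfrac1k\mathbf 1^\top$, any hypothesis whose output is independent of $f^\ast_v(x)$ achieves $\E_x[H_{f^\ast_v(x),h(x)}]=1/k$ exactly; hence $h$ must be $\Omega(\alpha)$-correlated with the partition induced by $f^\ast_v$ — after dividing by the size of the off-diagonal part of $H$ — and therefore with the hidden direction $v$. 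A single fixed $h$ can be correlated at this level with $f^\ast_{v'}$ for only a super-polynomially small fraction of $v'\in\mathcal V$, again by the correlation bound; so running the learner and then spending one extra statistical query to test whether its output correlates with $f^\ast_{v'}$ under $D_0$ would distinguish $\{D_{v'}\}$ from $D_0$, contradicting the correlation bound unless the learner used super-polynomially many queries or super-polynomially small tolerance. The approximate-error versions follow from the same reduction by setting $\alpha$ to the relevant constant-factor gap.

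I expect the main obstacle to be keeping this reduction quantitatively tight: the chain from an error bound of $1-1/k-\alpha$, to $\Omega(\alpha)$-correlation with the hidden partition, to an SQ distinguisher, must already be effective when $\alpha=o(1)$ — otherwise one only rules out beating the trivial error by a fixed constant. This forces one to choose $H$ so that the useful signal $\E_x[H_{f^\ast_v(x),h(x)}-1/k]$ genuinely equals $\Omega(\alpha)$ and cannot be manufactured by a hypothesis uncorrelated with $v$, which is exactly where the uniform label marginal and a quantitative handle on $H-\tfrac1k\mathbf 1\mathbf 1^\top$ are used. A secondary, more routine, difficulty inherited from \Cref{th additive} is satisfying all the constraints on $H$ simultaneously — uniform label marginal, $\opt=O(1/k)$, separation $\Omega(1/d)$ — while still realizing $f^\ast_v$ as a genuine multiclass linear classifier through the polynomial lift; both are handled essentially as in the proof of \Cref{th additive}.
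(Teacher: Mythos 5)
Your high-level plan (use the same moment-matching hidden-direction construction, make the label marginal uniform so that constant hypotheses cost exactly $1-1/k$, then argue any improvement locates $v$) is the right intuition, but the specific noise matrix you propose breaks the machinery, and that gap is fatal.

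You suggest taking $H$ ``near the identity, corrected to be doubly-stochastic-type'' with a balanced planted classifier $\pi$ uniform, so that $\pi^\top H=\tfrac1k\mathbf 1^\top$. This has two problems. First, the whole engine of the hardness proof is Lemma~\ref{lm projection} (Distribution Projection), which shows that the conditional $D^{A,a}_v(x\mid y=i)$ is a mixture $\sum_j \frac{a_j H_{ji}}{H_{ki}}P^{A_j}_v$ of the near-Gaussian base distributions, so that the pairwise $\chi^2$-correlations against $N(0,I)$ are controlled. That lemma needs the ``SQ-hard to distinguish'' condition of Definition~\ref{def condition}, in particular that $h_k$ (the $k$th row of $H$) lies in the \emph{convex hull} of the remaining rows and that all $k-1$ base distributions agree outside $I_{in}$. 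A doubly-stochastic $H$ near identity has $h_k$ near $e_k$, which is not in the convex hull of $e_1,\dots,e_{k-1}$, so the decomposition simply does not apply. Without it, $D(x\mid y=i)$ is essentially $D(x\mid f^*(x)=i)$, which is supported on a narrow interval $J_i$ along $v$: this is nowhere near Gaussian, its $\chi^2$-divergence to $N(0,I)$ is enormous, and the SQ lower bound via Lemma~\ref{lem:sq-lb} breaks down. The paper instead takes $H$ whose $k$th row is nearly uniform, $H_{kk}=1/k+\zeta$ and $H_{kj}=1/k-\Theta(\zeta)$, with $h_k$ an average of $h_1,\dots,h_{k-1}$, and the $i<k$ rows concentrated on two entries; this specific shape is what makes the conditionals mixtures of near-Gaussian pieces.

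Second, your example would not even satisfy the quantitative side conditions in the corollary statement: a noise matrix near the identity has separation $\sigma=\min_{i\ne j}(H_{ii}-H_{ij})$ close to $1$, whereas the corollary is about $\sigma=\Omega(1/d)=1/\poly(d)$ (that tiny separation is what makes the problem hard even though the noise is small). The paper achieves it precisely by putting the tiny $\zeta=1/\poly(d)$ gap on the near-uniform $k$th row.

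The rest of your outline (mass of $I_{in}$ chosen $1-1/\poly(k)$ to drive $\opt$ down to $O(1/k)$; one extra query after running the learner to distinguish; polynomial/Veronese lift) matches the paper's proof of Theorem~\ref{th multiply} and Corollary~\ref{cor beat constant}. Your ``correlation with the hidden partition'' reduction is a looser version of the paper's cleaner argument: Lemma~\ref{th test-to-learn} tests whether the learned hypothesis has error below $1-H_{kk}-\alpha/2$, sidestepping any need to quantify correlation with the partition.
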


\begin{remark}
{\em Our SQ lower bounds imply similar lower bounds for 
low-degree polynomial tests~\cite{HS17,HKP17,Hop18}, via
a result of~\cite{brennan2021statistical}.}
\end{remark}

\section{Technical Overview} \label{sec:techniques}

At a high level, our proof leverages the SQ lower bound framework 
developed in \cite{DKS17-sq} (see also~\cite{DKPZ21, DKRS23}) 
and techniques for constructing 
distributions that match Gaussian moments 
from \cite{diakonikolas2022near} and \cite{nasser2022optimal}. 
We stress that while these ingredients are useful in 
our construction, employing them in our context 
requires novel conceptual ideas. 

Roughly speaking, prior work \cite{DKS17-sq} 
and its generalization \cite{diakonikolas2022near}
give a generic framework for proving SQ-hardness results for 
supervised learning problems. Consider a distribution $D$ over 
$\R^d \times [k]$---a distribution consistent with an instance of a 
classification problem. Suppose that for $y \in [k]$, 
the distribution of $x$ conditioned on $y$ 
has the form of $P^A_v$, for $v$ a hidden direction in $\R^d$, 
such that: (i) $x_v$, the projection of $x$ on the $v$ direction, 
follows a one-dimensional distribution $A$, and 
(ii) $x_{v^\perp}\sim N(0,I)$. Moreover, suppose 
that the one-dimensional distribution $A$ 
nearly matches the first $t$ moments of $N(0,1)$, 
within error $\nu$, and has  chi-squared norm at most $\beta$.
The any SQ algorithm that correctly distinguishes between 
the case where $(x, y)$ is drawn from such a $D$ versus 
the case where the label $y$ is generated independently of $x$ 
according to $D_y$, needs either to make $2^{\Omega(d)}$ statistical queries or to make a query with tolerance $2\sqrt{\tau}$, where $ \tau=\nu^2+2^{-\Omega(t)}\beta$.

To leverage the aforementioned result, it suffices for 
us to construct a distribution $D$ with the form discussed above 
that is consistent with an instance of MLC with RCN. 
Unfortunately, constructing such a $D$ directly is 
technically challenging. 
To overcome this obstacle,
we first note that one can reduce 
learning multiclass degree-$m$ {\em polynomial} classifiers 
with RCN to learning linear classifiers with RCN 
using the Veronese mapping,  
defined as $V(x)\eqdef (x,1)^{\otimes m}$.
Therefore, it suffices for us to give a distribution $D$ 
that is consistent with an instance of a multiclass polynomial classifier $f^*=\argmax\{p_1(v\cdot x),\dots,p_k(v\cdot x)\}$ 
with RCN over $\R^N$ instead. 
We remark that as we want to prove SQ hardness for MLC 
over $\R^d$ and $d=N^{O(m)}$, if $m$ is large, 
then we need to prove a stronger hardness result 
for the polynomial classification problem over $\R^N$. 
As we will discuss in \Cref{sec hard main}, a key towards 
proving our SQ-hardness result is to choose 
the correct value of $m$.


Given the above discussion, it suffices to look 
at one-dimensional distributions along $v$. 
Our key observation, that leads to our SQ-hardness result, 
is that for a noise matrix $H$, if the $k$th row $h_k$ 
can be written as a convex combination 
$h_k=\sum_{j \in [k-1]}a_jh_j$ of the other rows, 
then an example drawn from the marginal distribution 
$\sum_{j\in[k-1]}a_j P^{A_j}_v$, 
where $P^{A_j}_v$ is the marginal distribution for $x$ 
with $f^*(x)=j$, will have observed label following 
the distribution $h_k$. Intuitively, if $D$ has such a marginal 
distribution of $x$, then the conditional distribution 
on $y\in[k]$ should be a mixture of the base distributions $P^{A_j}_v$. As long as $A_i, i \in [k-1]$ is close to $N(0,1)$, 
this in turn can be shown to imply a hardness result. 

Such an intuition is useful but is not formally correct: 
in general, the conditional distribution is not exactly 
a mixture of the base distributions. We overcome this difficulty 
by mixing examples with $f^*(x)=i$ and $f^*(x)=k$ 
carefully to obtain $A_i, i\in [k-1]$. 
Specifically,  we leverage techniques for constructing 
moment-matching distributions from 
\cite{diakonikolas2022near, nasser2022optimal}.
Roughly speaking, these works 
considered the following distribution.
Let $G_{\delta,\xi}$ be the distribution of $z\sim N(0,1)$ 
conditioned on $z\in I_i=[i\delta-\xi,i\delta+\xi]$ for $i\in \Z_+$
i.e., conditioned on equally spaced intervals with   
width $\xi$.
The distribution $G_{\delta,\xi}$ approximately matches moments 
up to degree $1/\delta^2$ up to error $2^{-\Omega(1/\delta^2)}$ with $N(0,1)$ and its chi-squared norm is not too large 
for any $\xi\geq 2^{-\Omega(1/\delta^2)}$. We choose $A_1=G_{\delta,\xi}$. 
Inspired by \cite{nasser2022optimal}, for $i\in[-m,m]$, 
if we make tiny shifts for $A_1$ over these $I_i$, 
$k-2$ times to get $k-1$ unions of disjoint intervals 
$J_j, j\in [k-1]$, we obtain distributions $A_1,\dots,A_{k-1}$ 
that are close to $N(0,1)$. Importantly, one can construct 
polynomials $p_j(z)>0$ if and only if $z \in J_j, j \in [k-1]$, 
and $p_k(z)>0$ if and only if $z \not \in I_{in}:=\textbf{conv}\bigcup_j J_j$. 
This implies that $D$ is consistent with an instance of multiclass 
polynomial classification with degree $O(m)$. 

Moreover, we show that the larger $\Pr(z \in I_{in})$ is 
constructed, the better learning guarantee we can rule out. If we 
choose $\Pr(z \in I_{in})=\eps$, then we are able to rule out 
learning algorithms that achieve error $\opt+\eps$. Moreover, by 
carefully designing the noise matrix $H$ and choosing 
$\Pr(z \in I_{in})=1-1/\poly(k)$, we are additionally 
able to rule out algorithms with error better 
than $1-1/k$, even if $\opt=O(1/k)$.

\section{Preliminaries} \label{sec:prelims}

\noindent {\bf Notation} 
Let $f^*:X \to Y$ be the ground truth hypothesis. 
For $j \in [k]$, denote by $S_j=\{x \mid f^*(x) = j\} \subseteq \R^d$ the set of examples with $f^*(x) = j$. 
Let $h:X \to Y$ be an arbitrary hypothesis. 
For $i,j \in [k]$, we denote by $S_{ji}=\{x \mid f^*(x) = j, h(x) = i\}\subseteq \R^d$, the set of examples with ground 
truth label $j$, but on which $h$ predicts $i$. In this paper, 
we use $\mathbb{S}^{d-1}$ to denote the unit sphere in $\R^d$.
Let $K \subseteq \R^d$ be any set; 
we denote by $\textbf{conv}(K)$, the convex hull of $K$. 
For a noise matrix $H \in [0,1]^{k \times k}$, 
we denote by $h_i, i \in [k]$, the $i$th row vector of $H$. 

For a distribution $D$, we use $\E_{\bx\sim D}(x)$ to denote the expectation of $D$. Let $D$ be a distribution of $(x,y)$ over $\R^d \times [k]$. We use $D_X$ to denote the marginal distribution of $D$ over $\R^d$ and use $D_y$ to denote the marginal distribution of $D$ over $y$. In this paper, we will use $N(0,I)$ to denote the standard Gaussian distribution over $\R^d$ and use $N(0,1)$ to denote the standard one-dimensional normal distribution. For $N(0,1)$, we use $G(x)$ to denote its density function and use $\gamma_t, t \in \N$, to denote its standard $t$th moment $\E_{x\sim N(0,1)} x^t$.

\noindent{\bf Basics of the SQ Model}
We record here the formal definition of the model 
and the definition of pairwise correlation. More detailed
background can be found in \Cref{app pre}.

\begin{definition}[SQ Model] \label{def:sq}
Let $D$ be a distribution over $X \times Y$. 
A \emph{statistical query} is a bounded function $q:X \times Y \rightarrow[-1,1]$. 
We define $\mathrm{STAT}(\tau)$ to be the oracle that given any such query $q$, outputs a value $v$ such that $|v-\E_{(x,y)\sim D}[q(x,y)]|\leq\tau$, where $\tau>0$ is the \emph{tolerance} parameter of the query.
A \emph{statistical query (SQ) algorithm} is an algorithm 
whose objective is to learn some information about an unknown 
distribution $D$ by making adaptive calls to the corresponding $\mathrm{STAT}(\tau)$ oracle.
\end{definition}

\begin{definition} [Pairwise Correlation]
The pairwise correlation of two distributions with probability density function $D_1, D_2:\R^d\mapsto \R_+$ 
with respect to a distribution with density $D:\R^d\mapsto \R_+$, 
where the support of $D$ contains the support of $D_1$ and $D_2$, 
is defined as $\chi_D(D_1,D_2)\eqdef \int_{\R^d}D_1(\x)D_2(\x)/D(\x)d\x-1$. 
Furthermore, the $\chi$-squared divergence of $D_1$ to $D$ is defined as
$\chi^2(D_1,D)\eqdef \chi_D(D_1,D_1)$.
\end{definition}

\paragraph{Organization} 

The structure of this paper is as follows:
In \Cref{sec test to learn}, we introduce an appropriate 
hypothesis testing problem and show how to efficiently reduce it 
to our learning problem. 
In \Cref{sec hard testing}, we construct the hard distribution for
the testing problem.
Finally, in \Cref{sec hard main}, we prove the main results 
of this paper by carefully choosing the parameters 
and putting together the reduction of \Cref{sec test to learn} 
and the hard distribution construction of \Cref{sec hard testing}.

\section{From Hypothesis Testing to Learning}\label{sec test to learn}

The usual way to prove an SQ-hardness result 
for a learning problem is to show hardness for an appropriate
hypothesis testing problem that efficiently reduces to 
the learning problem. 
In this section, we explore the properties of multiclass classification problems and introduce the hypothesis testing problem. 
To start with, we give the following condition, which will be used to create a hard hypothesis testing problem related to the multiclass classification problem.

\begin{definition}[SQ-Hard to Distinguish Condition]\label{def condition}
    Let $I=(D,f^*,H)$ be an instance of multiclass classification 
    with RCN. We say that the instance $I$ satisfies the hard-to-distinguish condition if it satisfies the following conditions:  
    \begin{enumerate}[nosep]
        \item \label{cond convex} There exist $ a_i \ge 0, i \in [k-1], \sum_{i=1}^{k-1} a_i = 1$ such that $h_k = \sum_{i=1}^{k-1} a_i h_i$.
        \item \label{cond margin} $\forall i \in [k-1], \Pr_{x \sim D_X}\left( x \in S_i \mid x \neq S_k  \right) = a_i$.
        \item \label{cond opt} $H_{jj}-H_{ji} \ge 0, \forall j,i \in [k]$. 
    \end{enumerate}
\end{definition}

We give some intuition behind \Cref{def condition}. Given a noise matrix $H$, the first condition, \cref{cond convex}, implies that the $k$th row vector of $H$ can be written as a convex combination of $h_i, i \in [k-1]$, with convex coefficients $a_i$. Recall that an example $x$ with ground truth label $i$ has $H_{ij}$ probability to be observed as label $j$. If the probability mass of $S_1,\dots,S_{k-1}$ are assigned proportionally to these convex coefficients (\cref{cond margin}), then a random example drawn from these regions will have probability $H_{ki}$ to have label $i$. Specifically, if we draw a random example $x\sim D_X$, then
\begin{align*}
 \Pr_{(x,y)\sim D}(y = i) 
   &  = \Pr_{x\sim D_X}(x \not \in S_{k}) \littlesum\nolimits_{j=1}^{k-1} a_j H_{ji} + \Pr_{x\sim D_X}(x \in S_{k}) H_{ki} \\
    &= \Pr_{x\sim D_X}(x \not \in S_{k}) H_{ki} + \Pr_{x\sim D_X}(x \in S_{k}) H_{ki} = H_{ki}.
\end{align*}
This implies that the marginal distribution $D_y$ 
follows the discrete distribution $h_k$. 
This suggests that given a set of examples drawn \iid from $D$, 
without exploring the structure of the data, 
it is hard to tell whether the labels $y$ are generated 
from an instance of multiclass classification 
or generated from the distribution $h_k$ 
independent of $x$.

Motivated by this observation, we define the following testing problem,
which we will show later to be hard to solve if some additional 
distributional assumptions are satisfied.
\begin{definition}[Correlation Testing Problem]\label{def correlation testing}
A correlation testing problem $\mathcal{B}(D_0,\D)$ is defined by a distribution $D_0$ and a family of distributions $\D$ over $X \times Y$. An algorithm is given SQ query access 
to some distribution $D$ and a noise matrix $H$, and is asked to distinguish whether $D=D_0$ or $D \in \D$. In particular, $D_0$ and $\D$ satisfy the following properties.
\begin{enumerate} 
\item  Null hypothesis: $x \sim (D_0)_X$ for some known distribution $(D_0)_X$, where $y$ is independent of $x$ such that $\Pr_{y \sim (D_0)_y}(y = i) = H_{ki}, \forall i \in [k]$.
\item  Alternative hypothesis: 
$D\in \D$, where each distribution $D \in \D$ corresponds to a multiclass classification instance $(D,f^*,H)$ that satisfies \Cref{def condition}. 
\end{enumerate}
\end{definition}

The correlation testing problem asks an algorithm 
to test whether the distribution of $y$ 
is generated according to an instance of a multiclass classification problem or is generated from a known distribution that is independent of $x$. 
In the rest of this section, we establish the 
connection between the testing problem of 
\Cref{def correlation testing} and the multiclass classification problem. We first present the following error-decomposition lemma, that describes the error of any multiclass hypothesis $h$. We defer 
the proof of \Cref{lm error decomposition} 
to \Cref{app error decomposition}.

\begin{lemma}\label{lm error decomposition}
    Let $(D,f^*,H)$ be any instance of multiclass classification with RCN. Let $h: X \to Y$ be an arbitrary multiclass hypothesis over $X$. Then,
    \[
        \err(h) 
        = \sum_{j=1}^k  \Pr(S_j)(1-H_{jj}) + \sum_{i \neq j} \Pr(S_{ji})(H_{jj}-H_{ji})\; .
     \]
In particular, if $H_{jj}-H_{ji} \ge 0$ for every $j \in [k], i \neq j$, then 
\[
    \opt = \err(f^*) = \sum_{j=1}^k \Pr(S_j)(1-H_{jj}).
\]
\end{lemma}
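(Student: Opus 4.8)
The plan is to compute $\err(h)$ by conditioning on $x$ and exploiting the defining feature of the RCN model: the conditional law of $y$ given $x$ depends only on the true label $f^*(x)$, not on the prediction $h(x)$. Concretely, for a fixed $x$ with $f^*(x)=j$ and $h(x)=i$, the example is misclassified by $h$ precisely when $y\neq i$, which occurs with probability $1-H_{ji}$. Since for each $j$ the sets $\{S_{ji}\}_{i\in[k]}$ partition $S_j$, and $\{S_j\}_{j\in[k]}$ partition $X$, integrating this conditional error over $D_X$ yields
\[
\err(h)=\sum_{j\in[k]}\sum_{i\in[k]}\Pr(S_{ji})\,(1-H_{ji}).
\]
First I would establish this identity carefully; it is the only place the RCN structure enters.

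Next I would rewrite each coefficient as $1-H_{ji}=(1-H_{jj})+(H_{jj}-H_{ji})$ and split the double sum accordingly. Using $\sum_{i\in[k]}\Pr(S_{ji})=\Pr(S_j)$, the first group collapses to $\sum_{j}\Pr(S_j)(1-H_{jj})$; in the second group the diagonal terms $i=j$ contribute $H_{jj}-H_{jj}=0$, so only $i\neq j$ survive, giving the claimed decomposition
\[
\err(h)=\sum_{j=1}^k \Pr(S_j)(1-H_{jj})+\sum_{i\neq j}\Pr(S_{ji})(H_{jj}-H_{ji}).
\]

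For the ``in particular'' clause, I would observe that the first sum depends only on $f^*$ (through the regions $S_j$) and not on $h$. When $h=f^*$ we have $S_{jj}=S_j$ and $S_{ji}=\emptyset$ for $i\neq j$, so the second sum vanishes and $\err(f^*)=\sum_{j}\Pr(S_j)(1-H_{jj})$. Under the hypothesis $H_{jj}-H_{ji}\ge 0$ for all $j$ and all $i\neq j$, the second sum in the decomposition is nonnegative for every hypothesis $h$, hence $\err(h)\ge \err(f^*)$ for every $h$, and in particular for every $f\in\mathcal F$. Since $f^*\in\mathcal F$, this forces $\opt=\min_{f\in\mathcal F}\err(f)=\err(f^*)=\sum_{j}\Pr(S_j)(1-H_{jj})$.

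I do not expect a serious obstacle: the argument is essentially bookkeeping over the two nested partitions $\{S_j\}$ and $\{S_{ji}\}$. The one point that needs care is the very first identity — one must remember that the noise row is selected by the \emph{true} label $f^*(x)=j$ while the prediction $h(x)=i$ picks out the entry $H_{ji}$ (not $H_{ii}$) governing the misclassification probability; conflating these two roles is the natural pitfall, and the rest follows by linearity of expectation.
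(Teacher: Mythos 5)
Your proof is correct and follows essentially the same approach as the paper: both establish $\err(h)=\sum_{j,i}\Pr(S_{ji})(1-H_{ji})$ from the RCN structure and then reorganize the sum to extract $\sum_j\Pr(S_j)(1-H_{jj})$, with the optimality of $f^*$ following from nonnegativity of the remainder. Your rewriting $1-H_{ji}=(1-H_{jj})+(H_{jj}-H_{ji})$ is a marginally cleaner way to package the algebra than the paper's substitution $\Pr(S_{jj})=\Pr(S_j)-\sum_{i\neq j}\Pr(S_{ji})$, but the content is identical.
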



Given \Cref{def correlation testing}, the following lemma reduces the correlation testing problem to the classification problem.



\begin{lemma}
    \label{th test-to-learn}
Let $\D$ be a family of distribution over $X \times Y$ such that each distribution $D \in \D$ corresponds to a multiclass classification instance $(D,f^*,H)$ that satisfies \Cref{def condition}. If there is an SQ learning algorithm $\mathcal{A}$ such that for every instance $(D,f^*,H), D \in \D$, $\mathcal{A}$ makes $q$ queries, each of tolerance $\tau$, 
    and outputs a hypothesis $\hat{h}$ such that $\err(\hat{h}) \le \opt+\alpha$, where 
       $ 2\alpha = \sum_{j=1}^{k-1} \Pr(S_j)(H_{jj}-H_{jk}),$
    then there is an SQ algorithm $\mathcal{A}'$ that solves the correlation testing problem defined in \Cref{def correlation testing} by making $q+1$ queries, 
    each of tolerance $\min(\tau, \alpha/2)$.
\end{lemma}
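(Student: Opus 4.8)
\textbf{Proof plan for \Cref{th test-to-learn}.}

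The plan is to run the learning algorithm $\mathcal{A}$ on whatever distribution $D$ the tester is given, use its output hypothesis $\hat h$ to form one additional statistical query that estimates $\err(\hat h)$, and then threshold on that estimate. The key point is that the value $\err(\hat h)$ is provably small in the alternative case (because $\mathcal{A}$ succeeds) but provably bounded away from small in the null case (because in the null case the labels are independent of $x$, so no hypothesis can do better than a fixed baseline). I would first set up $\mathcal{A}'$ as follows: simulate $\mathcal{A}$, answering each of its $q$ queries by passing them along to $\mathrm{STAT}(\min(\tau,\alpha/2))$ on $D$ — this is legitimate since tolerance $\min(\tau,\alpha/2)\le\tau$, so $\mathcal{A}$'s guarantees still apply whenever $D\in\D$. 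Let $\hat h$ be the returned hypothesis. Then $\mathcal{A}'$ issues the query $q_{\hat h}(x,y)\eqdef \Ind\{\hat h(x)\neq y\}$, which is bounded in $[-1,1]$, to $\mathrm{STAT}(\min(\tau,\alpha/2))$, obtaining a value $\hat v$ with $|\hat v-\err(\hat h)|\le \alpha/2$. Finally $\mathcal{A}'$ outputs ``alternative'' if $\hat v\le \opt'+3\alpha/2$ and ``null'' otherwise, where $\opt'$ is whatever the correct threshold turns out to be after the computation below; the total query count is $q+1$, each of tolerance $\min(\tau,\alpha/2)$, as claimed.

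\textbf{Completeness (alternative case).} Suppose $D\in\D$, so $(D,f^*,H)$ satisfies \Cref{def condition}. Then $\mathcal{A}$ outputs $\hat h$ with $\err(\hat h)\le \opt+\alpha$, where $\opt=\sum_{j=1}^k\Pr(S_j)(1-H_{jj})$ by \Cref{lm error decomposition} (using \cref{cond opt}). So $\hat v\le \opt + 3\alpha/2$ and the tester correctly answers ``alternative.''

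\textbf{Soundness (null case).} Suppose $D=D_0$. Here $y$ is independent of $x$ with $\Pr(y=i)=H_{ki}$, so for \emph{any} hypothesis $h$ (in particular the data-dependent $\hat h$ that $\mathcal{A}$ produced), $\err(h)=\Pr_{x,y}(h(x)\neq y)=\E_x\big[1-H_{k,h(x)}\big]\ge \min_{i\in[k]}(1-H_{ki})=1-\max_i H_{ki}$. The crux is therefore to show that this null-case floor strictly exceeds the alternative-case ceiling $\opt+3\alpha/2$ by more than the slack, i.e. that
\[
1-\max_{i\in[k]}H_{ki} \;>\; \sum_{j=1}^k\Pr(S_j)(1-H_{jj}) + 2\alpha\,.
\]
I expect this inequality to be exactly where the definition of $\alpha$ is calibrated: using $h_k=\sum_{i\le k-1}a_ih_i$ and $a_i=\Pr(S_i\mid x\notin S_k)$ from \cref{cond convex,cond margin}, one rewrites $\sum_j\Pr(S_j)(1-H_{jj})$ and compares it to $1-\max_i H_{ki}$, with the gap coming out to be $2\alpha=\sum_{j=1}^{k-1}\Pr(S_j)(H_{jj}-H_{jk})\ge 0$ (nonnegativity from \cref{cond opt}). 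So setting $\opt'$ to be the null-case value $1-\max_iH_{ki}$ minus $\alpha/2$ (or simply choosing the threshold midway between the two regimes) separates the cases with margin at least $\alpha/2$ on each side, which the tolerance $\alpha/2$ of the final query respects.

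\textbf{Main obstacle.} The only genuinely delicate step is the algebraic identity in the soundness argument — verifying that the null-case error floor minus the optimal alternative-case error is precisely $2\alpha$ with $\alpha$ as defined, which forces one to carefully track how $\Pr(S_k)$, the coefficients $a_i$, and the diagonal-vs-off-diagonal entries of $H$ interact via \cref{cond convex,cond margin}. Everything else (query simulation, boundedness of $q_{\hat h}$, tolerance bookkeeping, the $q+1$ count) is routine. A minor point to handle cleanly is that $\hat h$ is random and chosen by $\mathcal{A}$ before the final query, but since the lower bound $\err(h)\ge 1-\max_iH_{ki}$ holds uniformly over all $h$ in the null case, no independence/conditioning subtlety arises.
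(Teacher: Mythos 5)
Your approach matches the paper's exactly: run $\mathcal{A}$, issue one additional query for $\err(\hat h)$ with tolerance $\alpha/2$, and threshold. You also correctly identified the crux — the algebraic identity that calibrates $\alpha$ — and the conditions (1)–(3) it depends on.

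One small correction to the soundness step you deferred: by \cref{cond opt} with $j=k$, we have $H_{kk}\ge H_{ki}$ for all $i$, so $\max_i H_{ki}=H_{kk}$ and the null-case floor is exactly $1-H_{kk}$. The identity the paper proves (using \cref{cond convex,cond margin} to show $\err(k)=1-H_{kk}$, then \Cref{lm error decomposition}) is the \emph{equality} $1-H_{kk}=\opt+2\alpha$, not a strict inequality. The separation therefore comes from the gap between the alternative ceiling $\opt+\alpha=1-H_{kk}-\alpha$ and the null floor $1-H_{kk}$, which is exactly $\alpha$; testing with tolerance $\alpha/2$ against the fixed, computable threshold $1-H_{kk}-\alpha/2$ (which you can use instead of the somewhat muddled $\opt'$ — note $\opt$ itself is not directly accessible to the tester) then distinguishes the two cases. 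This is precisely what the paper does.
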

We defer the full proof of \Cref{th test-to-learn} to \Cref{app test-to-learn lm} and give an overview of the proof 
below. Recall that the goal of the correlation testing problem is to tell whether the label $y$ is generated according to the discrete distribution $h_k$ independent of $x$ (null hypothesis) or is generated according to some $D$ from $\D$ (alternative hypothesis). In the former case, no hypothesis will have an error better than the constant hypothesis $h(x) \equiv k$, which has an error of $1-H_{kk}$. 
In the latter case, by \Cref{lm error decomposition}, we can show that $\err(k)-\opt = 2\alpha$. This implies that, if we can learn the multiclass classification problem up to error $\opt+\alpha$, then we only need to make a single query with tolerance $\alpha/2$ to check whether the hypothesis $\hat{h}$ we learn has error less than $1-H_{kk}-\alpha/2$ to solve the testing problem.

Given \Cref{th test-to-learn}, we briefly explain how 
we will use it to prove the hardness results. 
Notice that if we choose $H_{jj}-H_{jk} \ge c$, 
for some constant $c>0$ and $j \in [k-1]$, 
then $\alpha = \Theta(1-\Pr(S_k))$. 
Thus, \Cref{th test-to-learn} suggests that in this case, 
to prove the hardness of the learning problem, 
it suffices for us to construct instances of a multiclass classification problem 
that satisfies \Cref{def condition} 
and $1-\Pr(S_k)$ is as large as the desired accuracy $\alpha$. 
In particular, the larger $1-\Pr(S_k)$ can be constructed, 
the stronger hardness result we are able to obtain.
We defer the details to \Cref{sec hard testing}.

\section{Hardness of Hypothesis Testing}\label{sec hard testing}
Given \Cref{th test-to-learn}, we know that it suffices 
to construct a correlation testing problem 
$\mathcal{B}(D_0,\mathcal{D})$ that is hard to solve. 
Notice that any multiclass polynomial classification problem 
can be represented as a multiclass linear classification problem 
in a higher dimension via the kernel method. 
So, we will consider constructing a family of correlation testing 
problems $\mathcal{B}(D_0,\mathcal{D})$ for multiclass polynomial 
classification problems. Consider an instance $I=(D,f^*,H)$ of 
multiclass polynomial classification problem with degree-$m$ 
under the SQ-hard to distinguish condition, 
where $f^*$ is characterized by
$k$ degree-$m$ polynomials of the form 
$p_1(v\cdot x),\dots,p_k(v\cdot x)$, $v \in \mathbb{S}^{d-1}$. 
Then the label $y$ is completely dependent on $v$, 
which implies that a learner must look at examples close to $v$ 
in order to solve the testing problem. 
This motivates us to look at the following hidden direction distribution that is frequently 
used in the literature of SQ lower bounds~\cite{DKS17-sq}.
\begin{definition}[Hidden Direction Distribution]
    Let $A$ be a distribution over $\R$, with probability density $A(x)$ and $v \in \R^d$ be a unit vector. Define $P^A_v(x) = A(v\cdot x)\exp\left(-\norm{x-(v\cdot x)v}^2_2/2 \right)/(2\pi)^{(d-1)/2}$, i.e., $P_v^A$ is a product distribution whose orthogonal projection onto $v$ is $A$ and onto the subspace orthogonal to $v$ is the standard $(d-1)$-dimensional Gaussian distribution.
\end{definition}

Based on the definition of the hidden direction distribution, we construct the family of hard distributions $\D$ as follows.

\begin{figure}
    \centering
    \includegraphics[width=0.7\linewidth]{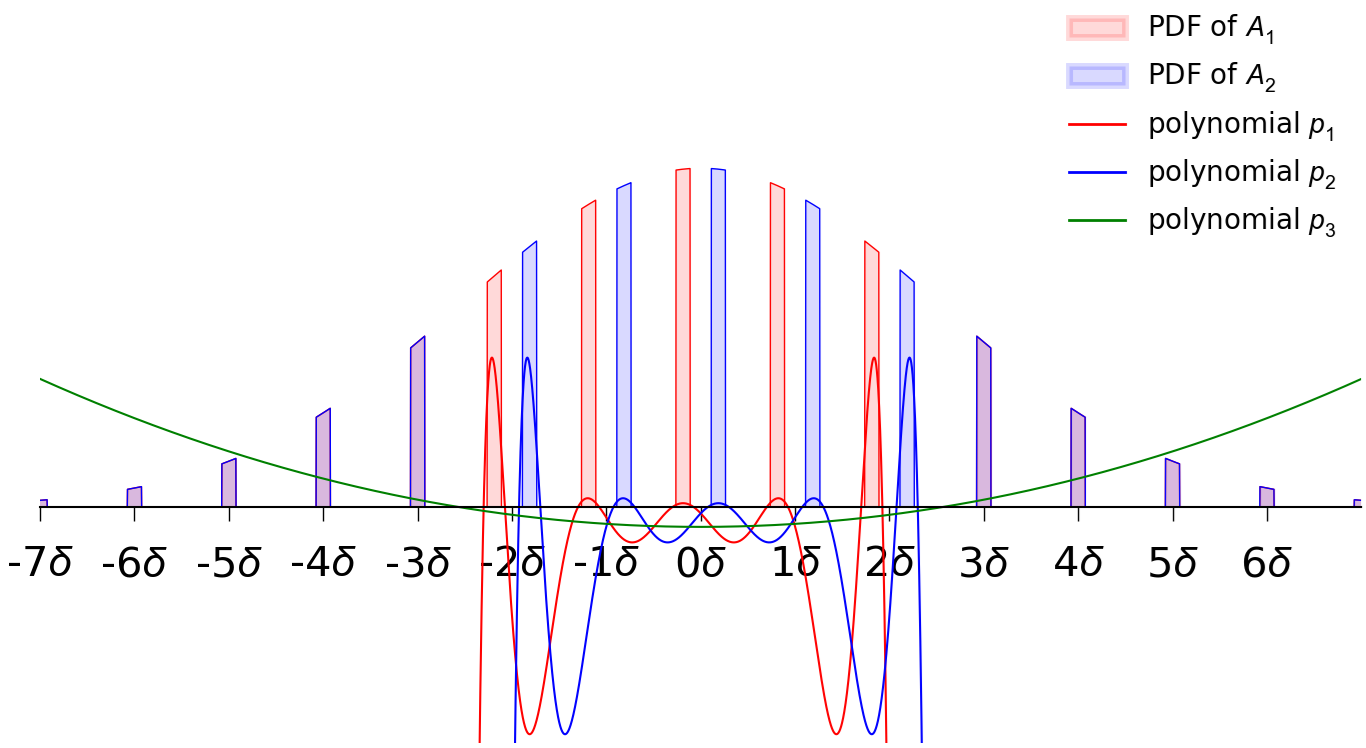}
    \caption{Illustration of base distributions for $k=3$. Histograms that are colored in red (resp. blue) correspond to distribution $A_1$ (resp. $A_2$). $p_1,p_2,p_3$ colored in red, blue, and green are polynomials that characterize the target hypothesis $f^*$. $J_1$(resp. $J_2$) are red (resp. blue) intervals within the range $(-2\delta,2\delta)$, where examples have ground truth label $1$ (resp. $2$). Examples outside $J_1 \cup J_2$ have ground truth label $3$.}
    \label{fig: base}
\end{figure}


\begin{definition}[Hidden Direction Distribution Family]\label{def 1D}
Let $H \in [0,1]^{k \times k}$ be a noise matrix that satisfies \eqref{cond convex}, \eqref{cond opt} in \Cref{def condition} with the convex combination coefficients $a \in [0,1]^{k-1}$. Let $A=(A_1,\dots,A_{k-1})$ be a list of $k-1$ \emph{base distributions}, where for $i \in [k-1], A_i$ is a one-dimensional distribution that satisfies the following conditions: 
\begin{enumerate} 
    \item \label{prop disjoint} \label{prop realizable} $\exists$ a set of $m$ disjoint intervals $J_i, i \in [k-1]$, such that $A_i(x)>0$, for $x \in J_i$ and $A_i(x)=0$, for $x \in I_{in} \setminus J_i$, where $I_{in} = \textbf{conv} \bigcup_{j \in [k-1]}J_j$.
    \item \label{prop projection} $\forall x\in \R \setminus I_{in}$, $A_i(x)=A_j(x), \forall i,j \in [k-1]$. 
\end{enumerate}
We define the hidden direction distribution family $\D=\{D^{A,a}_v\}_{v \in \mathbb{S}^{d-1}}$ over $\R^d \times [k]$ such that $(x,y) \sim D_v^{A,a}$ is sampled as follows. With probability $a_j$, $x \sim P^{A_j}_v$. If $x \in J_j$, sample $y = i$ with probability $H_{ji}$, otherwise, sample $y=i$ with probability $H_{ki}$.
\end{definition}

We summarize the key properties of a hidden direction distribution 
family in \Cref{th hidden family}, the main theorem of this section. 
Due to space limitations, the full proof is deferred to \Cref{app hidden family}.

\begin{theorem}\label{th hidden family}
    Let $\mathcal{B}(D_0,\D)$ be a correlation testing problem, where $(D_0)_X = N(0,I)$ and $\D$ is a hidden direction distribution family. Suppose there exists $\nu>0$ such that for $\ell \le t \in \Z_+$, the family of one-dimensional distribution $A_1,\dots,A_{k-1}$ satisfies $ \abs{\E_{x \sim A_i} x^\ell - \gamma_\ell} \le \nu$. Then: 
    \begin{enumerate} 
        \item \label{prop realize} Every distribution $D^{A,a}_v \in \D$ is consistent with an instance of multiclass polynomial classification with RCN $(D^{A,a}_v,f^*,H)$ with degree at most $2m$ that satisfies \Cref{def condition}.
        \item \label{prop query} For any small enough constant $c>0$,
    let $\beta = \max_{i,j}\chi_{N(0,1)}(A_i,A_j)$ and let $\tau:= \nu^2 + c^t\beta$. Any SQ algorithm that solves $\mathcal{B}(D_0,\D)$ must make a query with accuracy better than $2\sqrt{\tau}$ or make $2^{\Omega_c(d)}\tau/\beta$ queries. 
    \end{enumerate}
\end{theorem}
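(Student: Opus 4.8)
\emph{Part 1 (the realizability claim, \ref{prop realize}).} Fix $v\in\Sp^{d-1}$ and write $z=v\cdot x$. By \Cref{prop disjoint}, $\bigcup_{j\in[k-1]}J_j$ is a union of at most $m$ pairwise disjoint intervals $[\alpha_1,\beta_1],\dots,[\alpha_m,\beta_m]$ (in increasing order), $I_{in}=[\alpha_1,\beta_m]$, and each $J_j=\bigcup_{\ell\in L_j}[\alpha_\ell,\beta_\ell]$ for some $L_j\subseteq[m]$. I would take the degree-$\le 2m$ polynomial classifier $f^*=\argmax_{j\in[k]}p_j(v\cdot x)$ with $p_k\equiv 0$ and $p_j(z):=-\prod_{\ell\in L_j}(z-\alpha_\ell)(z-\beta_\ell)$ for $j\in[k-1]$. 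A sign count (the linear factors pair up outside each interval) shows $p_j>0$ on $\mathrm{int}(J_j)$ and $p_j<0$ on $\R\setminus J_j$, so, since the $J_j$ are pairwise disjoint, $\argmax_jp_j(z)=j$ for $z\in\mathrm{int}(J_j)$ and $\argmax_jp_j(z)=k$ for $z\notin\bigcup_\ell[\alpha_\ell,\beta_\ell]$ (in particular off $I_{in}$); the remaining $z$ (the $\le 2m$ endpoints and the gaps of $\bigcup_\ell[\alpha_\ell,\beta_\ell]$ inside $I_{in}$) form a set on which all $A_i$, hence $(D^{A,a}_v)_X$, vanish. Therefore, off a $(D^{A,a}_v)_X$-null set, $S_j=\{x:v\cdot x\in J_j\}$ for $j\in[k-1]$ and $S_k=\{x:v\cdot x\notin I_{in}\}$, and by construction of $D^{A,a}_v$ the label of $x$ is drawn from $h_{f^*(x)}$; so $(D^{A,a}_v,f^*,H)$ is a valid degree-$\le 2m$ RCN polynomial-classification instance. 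For the conditions of \Cref{def condition}: \eqref{cond convex} and \eqref{cond opt} are imposed on $H$ in \Cref{def 1D}; and \eqref{cond margin} holds because, by \Cref{prop projection}, every $A_i$ puts the same mass $\rho:=\Pr_{z\sim A_i}(z\notin I_{in})$ outside $I_{in}$, while $\Pr_{z\sim A_i}(z\in J_i)=1-\rho$ and $\Pr_{z\sim A_i}(z\in J_{i'})=0$ for $i'\ne i$; since $(D^{A,a}_v)_X=\sum_j a_j P^{A_j}_v$ this gives $\Pr_x(x\in S_i)=a_i(1-\rho)$ and $\Pr_x(x\notin S_k)=1-\rho$, hence $\Pr_x(x\in S_i\mid x\notin S_k)=a_i$.

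\emph{Part 2, reparametrization.} The crucial step is to recast $D^{A,a}_v$ so that the hidden-direction SQ machinery applies per label. Write $A_j=A_j^{\mathrm{in}}+\bar A$, where $A_j^{\mathrm{in}}:=A_j\cdot\1_{J_j}$ and $\bar A:=A_j\cdot\1_{\R\setminus I_{in}}$ is the tail, which is the same for all $j$ by \Cref{prop projection}. Expanding \Cref{def 1D} and using $\sum_j a_j=1$, one finds that, conditioned on $y=i$, the law of $x$ under $D^{A,a}_v$ is the hidden-direction distribution $P^{\tilde A_i}_v$ with $\tilde A_i:=C_i/H_{ki}$, where
\[
C_i:=\sum\nolimits_{j\in[k-1]}a_jH_{ji}\,A_j^{\mathrm{in}}+H_{ki}\,\bar A ,
\]
and $\Pr_{D^{A,a}_v}(y=i)=\int C_i=H_{ki}$ (using $\sum_j a_jH_{ji}=H_{ki}$ from \eqref{cond convex}, $\int A_j^{\mathrm{in}}=1-\rho$, $\int\bar A=\rho$); labels $i$ with $H_{ki}=0$ have $C_i\equiv 0$ and contribute nothing. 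Since $(D_0)_X=N(0,I)$ with $y$ independent and $\Pr_{D_0}(y=i)=H_{ki}$, and $N(0,I)=P^{N(0,1)}_v$, the definition of pairwise correlation gives, for all $v,v'$,
\[
\chi_{D_0}\!\big(D^{A,a}_v,D^{A,a}_{v'}\big)=\sum\nolimits_{i}H_{ki}\,\chi_{N(0,I)}\!\big(P^{\tilde A_i}_v,P^{\tilde A_i}_{v'}\big),\qquad \chi^2\!\big(D^{A,a}_v,D_0\big)=\sum\nolimits_{i}H_{ki}\,\chi^2\!\big(\tilde A_i,N(0,1)\big),
\]
using $\chi^2(P^A_u,N(0,I))=\chi^2(A,N(0,1))$ and $\sum_iH_{ki}=1$. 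So everything reduces to the fixed one-dimensional distributions $\tilde A_i$.

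\emph{Part 2, inheritance and conclusion.} I claim $\tilde A_i$ inherits the two properties the black-box SQ framework needs. (i) \emph{Moment matching:} for $\ell\le t$, $\int z^\ell A_j^{\mathrm{in}}=\E_{A_j}[z^\ell]-\int z^\ell\bar A$, so in $\int z^\ell C_i=\sum_j a_jH_{ji}\E_{A_j}[z^\ell]-(\sum_j a_jH_{ji})\int z^\ell\bar A+H_{ki}\int z^\ell\bar A$ the $\bar A$-moments cancel by \eqref{cond convex}, leaving $\E_{\tilde A_i}[z^\ell]=\sum_j\frac{a_jH_{ji}}{H_{ki}}\E_{A_j}[z^\ell]$, a convex combination; hence $|\E_{\tilde A_i}[z^\ell]-\gamma_\ell|\le\max_j|\E_{A_j}[z^\ell]-\gamma_\ell|\le\nu$. (ii) \emph{Chi-squared bound:} the densities $A_1^{\mathrm{in}},\dots,A_{k-1}^{\mathrm{in}},\bar A$ have pairwise disjoint supports, so $\int C_i^2/G=\sum_j a_j^2H_{ji}^2\int(A_j^{\mathrm{in}})^2/G+H_{ki}^2\int\bar A^2/G$; bounding $\int(A_j^{\mathrm{in}})^2/G,\ \int\bar A^2/G\le\int A_j^2/G=\chi^2(A_j,N(0,1))+1\le\beta+1$ and $\sum_j a_j^2H_{ji}^2\le(\sum_j a_jH_{ji})^2=H_{ki}^2$ (all summands nonnegative) yields $\int C_i^2/G\le 2(\beta+1)H_{ki}^2$, i.e.\ $\chi^2(\tilde A_i,N(0,1))\le 2\beta+1$. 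Feeding (i)--(ii) into the standard hidden-direction correlation bound of \cite{DKS17-sq} (in the approximate-moment form used in \cite{diakonikolas2022near,nasser2022optimal}) shows that, for a small enough absolute constant and $|v\cdot v'|$ at most a suitable constant, $|\chi_{N(0,I)}(P^{\tilde A_i}_v,P^{\tilde A_i}_{v'})|\le\nu^2+c^t\beta$; summing against the weights $H_{ki}$ (and relabelling $c$) gives $|\chi_{D_0}(D^{A,a}_v,D^{A,a}_{v'})|\le\nu^2+c^t\beta=\tau$, while $\chi^2(D^{A,a}_v,D_0)\le 2\beta+1=O(\beta)$. Taking a $2^{\Omega(d)}$-point packing of $\Sp^{d-1}$ with pairwise inner products $d^{-\Omega(1)}$ (so that, once $d$ exceeds a constant depending on $c$, the above applies to every pair) and plugging these two estimates into the generic SQ-dimension lower bound for hypothesis testing (\cite{DKS17-sq,Feldman16b}), any SQ algorithm solving $\mathcal{B}(D_0,\D)$ must make a query of tolerance $<2\sqrt\tau$ or make $2^{\Omega_c(d)}\tau/\beta$ queries.

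\emph{Main obstacle.} The delicate point is the reparametrization $D^{A,a}_v(\cdot\mid y=i)=P^{\tilde A_i}_v$ together with the two inheritance claims; these are not automatic and work exactly because the shared tail $\bar A$ enters $C_i$ with coefficient $H_{ki}=\sum_j a_jH_{ji}$ (condition \eqref{cond convex}) and $A_i=A_j$ off $I_{in}$ (\Cref{prop projection}), so $\bar A$'s moments are precisely what the convex combination of the $A_j$-moments already contributes and it drops out, leaving the near-Gaussian mixture $\sum_j\frac{a_jH_{ji}}{H_{ki}}A_j$. Getting these algebraic identities and the $H_{ki}=0$ bookkeeping right is the crux; after $\tilde A_i$ is identified, the remainder is a black-box use of the $\chi^2$-based SQ framework.
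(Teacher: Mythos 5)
Your proposal is correct and follows essentially the same route as the paper: identify the conditional $D^{A,a}_v(\cdot\mid y=i)$ as the hidden-direction distribution with base $\tilde A_i=\sum_j \frac{a_j H_{ji}}{H_{ki}}A_j$ (the paper states this as Lemma~\ref{lm projection}, proved by pointwise case analysis, while you derive it via the $A_j^{\mathrm{in}}/\bar A$ decomposition — these are equivalent), verify moment matching by convexity, bound $\chi^2(\tilde A_i,N(0,1))$, and feed everything into Lemma~13 of \cite{diakonikolas2022near} plus the standard SQ-dimension lower bound. The only substantive local difference is your $\chi^2$ bound: you use disjoint supports and $\sum_j(a_jH_{ji})^2\le(\sum_j a_jH_{ji})^2$ to get $2\beta+1$, whereas the paper expands the bilinear form $\sum_{j,\ell}\frac{a_jH_{ji}}{H_{ki}}\frac{a_\ell H_{\ell i}}{H_{ki}}\chi_{N(0,1)}(A_j,A_\ell)\le\beta$ directly, which is slightly tighter (and cleaner when $\beta<1$, though in the paper's applications $\beta$ is always large so this does not matter).
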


In the rest of this section, we give an overview of the construction of the hidden direction distribution family as well as the proof of \Cref{th hidden family}. 

Consider a distribution $D^{A,a}_v \in \D$ (see \Cref{fig: base} for an example), where $\D$ is some hidden direction distribution family. 
\Cref{prop disjoint} in the construction of a hidden direction distribution family is to ensure each $D^{A,a}_v$ is consistent with some multiclass polynomial classification problem with degree $O(m)$.
Since for each $i \in [k-1]$, $J_i$ is a set of $m$ disjoint intervals, we know there is a degree-$2m$ polynomial $p_i(t): \R \to \R$ such that $p_i(t)>0$ if and only if $t \in J_i $. 
  On the other hand, since $I_{in} = \textbf{conv}\bigcup_{j \in [k-1]}J_j$ is a finite interval, there is a degree-2 polynomial $p_k(t): \R \to \R$ such that $p_k(t)>0$ if and only if $t \not\in I_{in}$. Since $J_i \cap J_j = \emptyset, \forall i \neq j$, we know that for each $j \in [k-1]$, if $v\cdot x \in J_j$, then $j=\argmax\{p_1(v\cdot x),\dots, p_{k}(v \cdot x)\}$ and if $v\cdot x \not\in I_{in}, k=\argmax\{p_1(v\cdot x),\dots, p_{k}(v \cdot x)\}$. Thus, $D^{A,a}_v$ is consistent with an instance of multiclass polynomial classification with RCN $(D^{A,a}_v,f^*,H)$, where the marginal distribution is $\sum_{j=1}^{k-1}a_j P_v^{A_j}$ and the ground truth hypothesis $f^*(x) = \argmax\{p_1(v\cdot x),\dots, p_{k}(v \cdot x)\}$. This gives an overview of the first part of \Cref{th hidden family}.

  We next focus on the second part of \Cref{th hidden family}. To simplify the notation, for each $i \in [k]$, we denote by $D^i_v = D^{A,a}_v(x \mid y =i)$ in the rest of this section. The proof strategy here is to use the standard SQ dimension (\Cref{lem:sq-lb}). 
  It is well-known that for any small constant $c$, there exists at least $2^{\Omega_c(d)}$ many unit vectors $u,v$ such that $\abs{u,v} \le c$ (see \Cref{lem:near-orthogonal}). Thus, to use \Cref{lem:sq-lb}, we only need to bound $\chi_{D_0}(D^{A,a}_v,D^{A,a}_u)$, where $\abs{u\cdot v} \le c$ as well as $\chi_{D_0}(D^{A,a}_v,D^{A,a}_v)$. Since
$       \chi_{D_0}(D^{A,a}_v,D^{A,a}_u) 
     = \sum_{i=1}^k H_{ki} \chi_{N(0,I)} \left(D^i_v,D^i_u\right),
$ 
 it is equivalent to upper bounding $\chi_{N(0,I)} \left(D^i_v,D^i_u\right)$. However, even though the base distributions $P^{A_j}_v$ are all close to $N(0,I)$ (thus have a small pairwise correlation), in general, conditional on the label $y=i$, there is no such guarantee for $D^i_v$. Here, we make use of \Cref{prop projection}, in the construction of the distribution family. Under \Cref{prop projection}, every $D^i_v$ is indeed a mixture of the distributions $P^{A_j}_v, j \in [k-1]$. Formally, we have the following technical lemma (see \Cref{fig: hard_main_sketch} for intuition), the proof of which is deferred to \Cref{app projection}.

\begin{figure}
    \centering
    \includegraphics[width=0.7\linewidth]{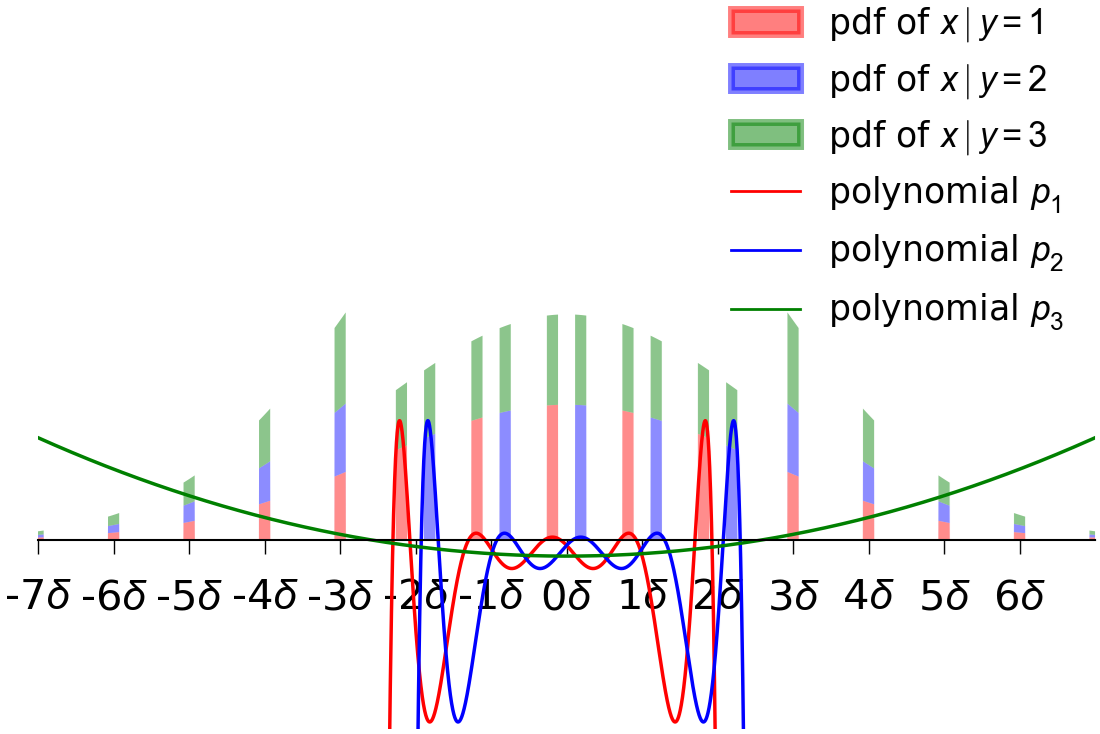}
    \caption{Illustration of $(D^{A,a}_v)_{\mid y=i}$ for $k=3$. $p_1,p_2,p_3$ colored in red, blue, green are polynomials that characterize the ground truth $f^*$. Histograms in red (resp. blue, green) correspond to distribution $(D^{A,a}_v)_{\mid y=1}$ 
    (resp. $(D^{A,a}_v)_{\mid y=2}$, $(D^{A,a}_v)_{\mid y=3}$). For each $i$, $(D^{A,a}_v)_{\mid y=i}$ has many moments close to the moments of the standard normal. }
    \label{fig: hard_main_sketch}
\end{figure}



\begin{lemma}[Distribution Projection]\label{lm projection}
    Let $\D$ be a hidden direction distribution family over $\R^d$ and let $D^{A,a}_v \in \D$ be a distribution that is consistent with an instance of multiclass polynomial classification with RCN $(D^{A,a}_v,f^*,H)$. For every $i \in [k]$, $D^{A,a}_v(x \mid y =i) = \sum_{j=1}^{k-1} \frac{a_j H_{ji}}{H_{ki}} P^{A_j}_v(x).  $
\end{lemma}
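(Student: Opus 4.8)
The plan is to read off the joint density of $D^{A,a}_v$ directly from the sampling rule in \Cref{def 1D}, show that it factors as $D^{A,a}_v(x,y=i) = \sum_{j=1}^{k-1} a_j H_{ji}\, P^{A_j}_v(x)$, and then obtain the conditional by dividing through by the label marginal $\Pr(y=i)=H_{ki}$. \textbf{Step 1 (write out the joint law).} By \Cref{def 1D}, the $x$-marginal of $D^{A,a}_v$ is the mixture $\sum_{j=1}^{k-1} a_j P^{A_j}_v$, and conditioned on $x$ the label is drawn from row $h_j$ of $H$ if $v\cdot x\in J_j$ for some $j\in[k-1]$ and from row $h_k$ if $v\cdot x\notin I_{in}$. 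Two small points make this a legitimate description: first, the sampling procedure decides the label using the \emph{component index} $j$ rather than $x$, but by \Cref{prop disjoint} the distribution $P^{A_j}_v$ is supported on $\{v\cdot x\in J_j\}\cup\{v\cdot x\notin I_{in}\}$, so whenever $v\cdot x\in J_{j_0}$ the chosen component must be $j_0$; second, $P^{A_j}_v$ assigns no mass to $\{v\cdot x\in I_{in}\setminus\bigcup_{j'}J_{j'}\}$, again by \Cref{prop disjoint}, so the two cases above exhaust the support up to a $D_X$-null set. Hence $D^{A,a}_v(x,y=i) = \left(\sum_{j=1}^{k-1} a_j P^{A_j}_v(x)\right) c_i(x)$, where $c_i(x)=H_{ji}$ for $v\cdot x\in J_j$ and $c_i(x)=H_{ki}$ for $v\cdot x\notin I_{in}$.

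\textbf{Step 2 (verify the factorization by a region split).} It remains to check $\left(\sum_{j} a_j P^{A_j}_v(x)\right) c_i(x) = \sum_{j} a_j H_{ji} P^{A_j}_v(x)$ pointwise (up to a $D_X$-null set). On $\{v\cdot x\in J_{j_0}\}$, \Cref{prop disjoint} kills every term with $j\neq j_0$ on both sides and $c_i(x)=H_{j_0 i}$, so both sides equal $a_{j_0}H_{j_0 i}P^{A_{j_0}}_v(x)$. On $\{v\cdot x\notin I_{in}\}$, \Cref{prop projection} gives $P^{A_j}_v(x)=P^{A_1}_v(x)$ for all $j\in[k-1]$; the left side is then $P^{A_1}_v(x)\,H_{ki}$ using $\sum_j a_j=1$, while the right side is $P^{A_1}_v(x)\sum_j a_j H_{ji} = P^{A_1}_v(x)\,H_{ki}$ using $h_k=\sum_j a_j h_j$ from \cref{cond convex}. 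On the remaining $D_X$-null set both sides vanish.

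\textbf{Step 3 (conclude).} Integrating the factorized joint density over $x$ gives $\Pr(y=i) = \sum_j a_j H_{ji}\int P^{A_j}_v = \sum_j a_j H_{ji} = H_{ki}$ (again by \cref{cond convex}), which re-derives the label-marginal computation of \Cref{sec test to learn}. If $H_{ki}=0$ the conditional is undefined and there is nothing to prove; otherwise dividing the joint density by $H_{ki}$ yields $D^{A,a}_v(x\mid y=i) = \sum_{j=1}^{k-1}\frac{a_j H_{ji}}{H_{ki}}P^{A_j}_v(x)$, which is a genuine mixture since the weights $a_j H_{ji}/H_{ki}$ are nonnegative and sum to one.

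The only real subtlety — and it is bookkeeping rather than a genuine obstacle — is Step 1: one must be careful that the label in \Cref{def 1D} is not literally a deterministic function of $x$ (it is decided via the mixture-component index, and the base distributions overlap in their tails outside $I_{in}$). \Cref{prop disjoint} and \Cref{prop projection} are precisely what make these issues harmless — the former forces the component to match the interval containing $v\cdot x$ and rules out mass in $I_{in}\setminus\bigcup_j J_j$, the latter makes the tails of all $P^{A_j}_v$ identical so it is irrelevant which component produced a tail sample — and once they are invoked the remainder is the elementary two-case computation of Step 2.
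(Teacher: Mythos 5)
Your proof is correct and follows essentially the same route as the paper: write the joint density $D^{A,a}_v(x,i)=\sum_j a_j P^{A_j}_v(x)\,H_{f^*(x)\,i}$, split into the two regions $v\cdot x\in J_\ell$ (where only the $\ell$th component survives by \cref{prop disjoint}) and $v\cdot x\notin I_{in}$ (where all $P^{A_j}_v$ agree by \cref{prop projection} and $h_k=\sum_j a_j h_j$ closes the computation), and then divide by $\Pr(y=i)=H_{ki}$. The one place you are slightly more explicit than the paper is the preamble in Step~1 justifying that the label sampled by component index in \Cref{def 1D} coincides with the label sampled from $h_{f^*(x)}$; the paper simply writes $H_{f^*(x)i}$ directly, implicitly invoking the same support argument you spell out. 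That extra care is sound and does not change the structure of the argument.
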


Given \Cref{lm projection}, to upper bound the pairwise correlation, it is equivalent to upper bound $\chi_{N(0,I)}(P^{A_j}_v,P^{A_j}_u)$. Using the correlation lemma developed in \cite{diakonikolas2022near}, as long as each $A_j$ has many moments close to those of a standard normal distribution, $\chi_{N(0,I)}(P^{A_j}_v,P^{A_j}_u)$ is small and we are able to prove hardness result using \Cref{lem:sq-lb}. This gives an overview of the second part of \Cref{th hidden family}.



\paragraph{Construction of Hard Distributions}
In the rest of the section, we will construct the family of hard distributions $A_1,\dots,A_{k-1}$. By \Cref{th hidden family}, we require $A_1,\dots,A_{k-1}$ to be supported in disjoint intervals and have many moments close to those of a standard normal distribution. The most natural construction of one such distribution is to restrict $N(0,1)$ over a sequence of discrete intervals. Related ideas have been used in proving hardness result
of various learning problems, such as \cite{bubeck2019adversarial,diakonikolas2022near,nasser2022optimal,tiegel24a}. By choosing such a distribution as $A_1$, we are able to construct $A_2,\dots, A_{k-1}$ by shifting the intervals constructed in $A_1$ with 
different step sizes. Such a construction can be viewed as a generalization of 
the technique in \cite{nasser2022optimal} for proving SQ-hardness of learning a halfspace under Massart noise. We present the formal construction 
of our hard distributions in \Cref{def distribution} and 
list its properties in \Cref{prop distribution}. We defer the proof of \Cref{prop distribution} to \Cref{app proof distribution}.

\begin{definition}\label{def distribution}
    For $\delta,\xi>0$ such that $\delta>4(k-1)\xi, 4(k-1)\xi<1$, we define 
\[G_{\delta,\xi} = \littlesum_{n \in \Z} \frac{\delta}{2\xi} G(x)\Ind(x \in [n\delta-\xi,n\delta+\xi])\] 
and $G^{(n)}_{\delta,\xi} = G_{\delta,\xi} / \norm{G_{\delta,\xi}}_1$, where $G(x)$ is the density of $N(0,1)$.
We define $A_1(x) = G^{(n)}_{\delta,\xi}$ and for $i \in [k-1]$, we define 
\begin{align*}
 A_i(x) =
\begin{cases}
     & A_1(x+4(i-1)\xi) \quad \abs{x} \le m\delta+(4i-3)\xi \\
     & A_1(x) \quad \abs{x} > m\delta+(4i-3)\xi,
 \end{cases}
\end{align*}
where $m \in \Z_+$. 
\end{definition}

\pagebreak

\begin{proposition}\label{prop distribution}
The univariate distributions $A_1,\dots,A_{k-1}$ constructed in \Cref{def distribution} satisfy
    \begin{enumerate}
    \item $\exists$ a set of $m$ disjoint intervals $J_i, i \in [k-1]$ such that $A_i(x)>0$, for $x \in J_i$ and $A_i(x)=0$, for $x \in I_{in} \setminus J_i, I_{in} = \textbf{conv}\bigcup_{j \in [k-1]}J_j$.
    \item $\forall x\in \R \setminus I_{in}, A_i(x)=A_j(x), \forall i,j \in [k-1]$.
        \item For $i,j \in [k-1]$, $\chi_{N(0,1)} \left(A_i,A_j\right) \le O(\delta/\xi)^2$.
        \item For $t \in \N$ and for $i \in [k-1]$, $ \abs{\E_{x \sim A_i} x^t - \gamma_t} \le O(t!)\exp(-\Omega(1/\delta^2))+ 4(k-1)\xi(1+2m\delta)^t.$
    \end{enumerate}
\end{proposition}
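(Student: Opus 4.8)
The plan is to verify the four claimed properties of the univariate distributions $A_1, \dots, A_{k-1}$ from \Cref{def distribution} essentially one at a time, with the bulk of the work concentrated in the moment estimate (item 4). First I would set up notation: recall $A_1 = G^{(n)}_{\delta,\xi}$ is $N(0,1)$ conditioned on the union of intervals $\bigcup_{n\in\Z}[n\delta-\xi,n\delta+\xi]$, and for $i\ge 2$, $A_i$ agrees with a $4(i-1)\xi$-shifted copy of $A_1$ on the ``core'' window $\abs{x}\le m\delta+(4i-3)\xi$ and with $A_1$ itself outside. The hypothesis $\delta>4(k-1)\xi$ and $4(k-1)\xi<1$ guarantees all the relevant shifts stay small relative to the interval spacing $\delta$, so the shifted intervals remain pairwise disjoint and nested correctly inside one period.

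For item 1, I would exhibit $J_i$ explicitly: $J_i$ is the union of the $m$ (or $2m+1$, depending on the indexing convention — I would match it to the $n\in[-m,m]$ range implicit in the figure) intervals $[n\delta - \xi + 4(i-1)\xi,\ n\delta + \xi + 4(i-1)\xi]$ lying in the core window, and $I_{in}=\textbf{conv}\bigcup_j J_j$. On $J_i$, $A_i$ is a positive multiple of $G$, hence $>0$; on $I_{in}\setminus J_i$, by the disjointness of the shifted interval families (which follows from $4(k-1)\xi<\delta$, so distinct shifts of the same base interval never overlap and a shifted interval of family $i$ never meets a shifted interval of family $i'\ne i$ within one period) the point lies either in a gap of $G_{\delta,\xi}$ or inside the support of a different family, where $A_i$ was defined to vanish — here I should double-check against the precise piecewise definition that $A_i$ is indeed $0$ off $J_i$ inside $I_{in}$; this is really just bookkeeping about which shifted intervals land where. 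Item 2 is immediate from the definition: for $\abs{x}>m\delta+(4i-3)\xi$ all of $A_2,\dots,A_{k-1}$ fall back to $A_1(x)$, and this region contains $\R\setminus I_{in}$ once one checks $I_{in}\subseteq[-(m\delta+\xi),m\delta+\xi]$ roughly (again using $4(k-1)\xi<\delta<1$).

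For item 3, the pairwise correlation bound, I would compute $\chi_{N(0,1)}(A_i,A_j) = \int A_i(x)A_j(x)/G(x)\,dx - 1$. Writing $A_i = c_i\cdot G\cdot \Ind_{S_i}$ with $S_i$ the support and $c_i = 1/\Pr_{z\sim N(0,1)}(z\in S_i)$ the normalizing constant, the integrand becomes $c_i c_j G(x)\Ind_{S_i\cap S_j}(x)$, so $\chi_{N(0,1)}(A_i,A_j) = c_i c_j \Pr(z\in S_i\cap S_j) - 1$. Since each $S_i$ has Gaussian mass $\Theta(\xi/\delta)$ (comparing the conditioned-interval mass to the period), $c_i,c_j = \Theta(\delta/\xi)$, and $\Pr(S_i\cap S_j)\le \Pr(S_i) = \Theta(\xi/\delta)$, giving $\chi_{N(0,1)}(A_i,A_j) = O(\delta/\xi)$ at worst and the claimed $O(\delta/\xi)^2$ after a slightly more careful accounting — here I would be careful: the stated bound is $O(\delta/\xi)^2$, which is weaker than the naive $O(\delta/\xi)$, so the naive bound already suffices, but I'd present it matching the statement. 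The one subtlety is that outside the core window $A_i$ and $A_j$ literally coincide with $A_1$, so the ``extra'' contribution to the correlation only comes from the bounded core window, which could even improve the bound; I would not chase that improvement.

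Item 4 is the main obstacle and where I would spend the effort. The strategy is to compare $\E_{x\sim A_i}x^t$ to $\gamma_t$ in two stages. Stage one: bound $\abs{\E_{x\sim A_1}x^t - \gamma_t}$. This is the standard fact (used in \cite{diakonikolas2022near,nasser2022optimal}) that conditioning $N(0,1)$ on a union of symmetric, equally spaced short intervals preserves moments up to error $O(t!)\exp(-\Omega(1/\delta^2))$; the mechanism is that $G^{(n)}_{\delta,\xi}$ and $N(0,1)$ have densities whose difference, after multiplying by $x^t$, integrates to something controlled by a Poisson-summation / Fourier argument — the discrepancy decays like $\exp(-\Omega(1/\delta^2))$ because the ``sampling comb'' at spacing $\delta$ only affects Fourier modes of order $\gtrsim 1/\delta$, and the Gaussian's Fourier transform is tiny there, while the $t!$ factor absorbs the polynomial weight $x^t$ against the Gaussian tail. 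I would cite the relevant lemma from those works rather than reprove it. Stage two: bound the perturbation from passing $A_1\to A_i$. The two distributions differ only on the core window $\abs{x}\le m\delta+(4i-3)\xi$, where one is a shift of the other by at most $4(k-1)\xi$. On that window $\abs{x}^t\le (m\delta+(4i-3)\xi)^t\le (1+2m\delta)^t$ (using $\xi$ small), and the total variation between $A_i$ and $A_1$ restricted there is $O((k-1)\xi)$ — because shifting the sub-intervals by $4(i-1)\xi$ moves a mass of order $\xi/\delta$ worth of intervals, each by a fraction of their width... more carefully, the symmetric difference of the supports has Gaussian measure $O(m\cdot\xi)$ which after renormalization contributes $O((k-1)\xi)$ to $\dtv$; multiplying the $\dtv$ bound by the sup of $\abs{x}^t$ on the window gives the second term $4(k-1)\xi(1+2m\delta)^t$. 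Combining the two stages by the triangle inequality yields exactly the claimed bound. The delicate point I expect to need care on is getting the constant structure right in stage two — specifically confirming that the renormalization constants $c_i$ and $c_1$ differ by only $1+O(\xi)$ so that the reweighting doesn't blow up the moment (it doesn't, since both supports have mass $\Theta(\xi/\delta)$ and differ by $O(m\xi\cdot G(0))=O(m\xi)$ in absolute Gaussian mass, a relative change of $O(m\delta)$, which... I'd need $m\delta$ bounded, or fold it into the $(1+2m\delta)^t$ term) — so I would keep the error accounting in terms of $\dtv$ on the window times the uniform bound on $\abs x^t$ and not separate it further.
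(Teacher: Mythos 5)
Your items 1 and 2 match the paper up to a sign slip: since $A_i(x)=A_1(x+4(i-1)\xi)$ on the core window, the positive set is $J_i = J_1 - 4(i-1)\xi$, not $J_1 + 4(i-1)\xi$ (the paper takes $J_i=\bigcup_n[n\delta-(4i-3)\xi,\,n\delta-(4i-5)\xi]$). Your item 3 has a subtler imprecision: on $J_i$ the density is $c_1\,G(x+4(i-1)\xi)$, i.e.\ a \emph{shifted} Gaussian, not $c_i\,G(x)\,\Ind_{S_i}(x)$, so $\chi_{N(0,1)}(A_i,A_j)\ne c_ic_j\Pr(S_i\cap S_j)-1$ exactly. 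The discrepancy is a multiplicative $\exp(O(|x|\cdot k\xi + k^2\xi^2))$, which the paper absorbs into an $O(1)$ factor via a direct change of variables inside the chi-squared integral, so your estimate survives but the argument as written is not rigorous.

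The genuine gap is in item 4. You bound $\left|\E_{A_1}x^t-\E_{A_i}x^t\right|$ by $\dtv(A_1,A_i)$ times $\sup_{I_{in}}|x|^t$ and assert $\dtv(A_1,A_i)=O((k-1)\xi)$. That TV estimate is wrong: inside $I_{in}$ the supports $J_1$ and $J_i$ are \emph{disjoint} (the shift $4(i-1)\xi$ exceeds the interval width $2\xi$ for $i\ge 2$), so
\[
\dtv(A_1,A_i)\;=\;\tfrac12\int_{I_{in}}\left|A_1-A_i\right|\;=\;\tfrac12\bigl(\Pr\nolimits_{A_1}(I_{in})+\Pr\nolimits_{A_i}(I_{in})\bigr)\;=\;\Theta\!\left(\min(m\delta,1)\right),
\]
which in the downstream applications ($m\delta=\Theta(\eps)$ or $\Theta(\sqrt{\log k})$ while $\xi$ is exponentially small) is astronomically larger than $O(k\xi)$, and the resulting bound $O(\min(m\delta,1))(1+2m\delta)^t$ is useless. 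The mass is not ``moved by a fraction of its width''; each interval is displaced by more than its own width, so TV sees the entire mass in $I_{in}$. What actually makes the bound small is that $A_i|_{I_{in}}$ is the push-forward of $A_1|_{I_{in}}$ under $x\mapsto x-4(i-1)\xi$; by the change of variables $u=x+4(i-1)\xi$,
\[
\E_{A_i}x^t-\E_{A_1}x^t\;=\;\int_{I_{in}}\bigl((u-4(i-1)\xi)^t-u^t\bigr)\,dA_1(u),
\]
and the \emph{integrand} is pointwise small: $\left|(u-s)^t-u^t\right|\le s\sum_{\ell=1}^{t}\binom{t}{\ell}|u|^{t-\ell}\le s(1+|u|)^t\le 4(k-1)\xi\,(1+2m\delta)^t$. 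Integrating against $A_1$ (mass $\le 1$) gives the claim. This is the optimal-coupling form of the bound — TV is the wrong functional precisely because it is blind to the fact that the two disjoint supports are a tiny translation of each other.
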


\section{Hardness of Multiclass Linear Classification Under RCN}\label{sec hard main}

In this section, we present our main hardness results for MLC. The proofs in 
this section use \Cref{th test-to-learn} to reduce  
the correlation testing problem (\Cref{def correlation testing}) 
to the MLC learning problem, and 
construct a hidden direction distribution family (\Cref{def 1D}) 
via the hard distribution defined in \Cref{def distribution} 
for the correlation testing problem. We will carefully 
choose the parameters for the hard distribution $A_1,\dots,A_{k-1}$ 
to invoke \Cref{prop distribution} and \Cref{th hidden family} 
to get SQ lower bounds for different learning guarantees.


Here we first present the intuition behind the proof of our hardness results. As we discussed in \Cref{sec hard testing}, to prove the hardness result, it is sufficient to construct a hidden direction distribution family $\D=\{D^{A,a}_v\}_{v \in \mathbb{S}^{N-1}}$ consistent with a multiclass polynomial classification instance in $\R^N$ such that the base distributions $A_1,\dots,A_{k-1}$ have many moments that are close to those of a standard normal. Recall that the construction of a hidden direction distribution family relies on a noise matrix $H$ that satisfies the SQ-hard to distinguish condition (\Cref{def condition}).
As an example, consider the noise matrix
\begin{align}\label{eq example}
    H = \begin{pmatrix}
        0.6 & 0 & 0.4 \\
        0 & 0.6 & 0.4 \\
        0.3 & 0.3 & 0.4
    \end{pmatrix},\; 
\end{align}
with $k=3$, 
where $h_3 = (h_1+h_2)/2$. Therefore, we choose base distributions $A_1,A_2$ constructed in \Cref{def distribution}, $a=(1/2,1/2)$ and $f^*(x)=\argmax\{ p_1(x),p_2(x),p_3(x)\}$ illustrated in \Cref{fig: base}. Recall that the goal of the correlation testing problem is to tell whether the label $y$ is generated according to the discrete distribution $h_k$ or is generated by some $D^{A,a}_v$. By \Cref{th test-to-learn}, for every $D \in \D$ 
$
  2\alpha:=  \err_D(k) - \opt = \sum_{j=1}^{k-1} (H_{jj}-H_{jk})\Pr(S_j),
$
the probability mass of $I_{in}$, the intervals in the middle as shown in \Cref{fig: hard_main_sketch}, with respect to $A_1$. 
This implies that the larger $\alpha$ is chosen, the better learning guarantee we can rule out. In particular, 
since $H_{jj}-H_{jk}, j \in [k-1]$ is larger than some universal constant, $\alpha$ is proportional to $\sum_{j=1}^{k-1} \Pr(S_j)$. By the construction of $A_1,\dots,A_{k-1}$, this quantity is exactly
$
    \Pr_{z \sim A_1}(z \in I_{in}) = \Pr_{z \sim A_1}(z \in I_{in})(\abs{z} \le (m+1)\delta).
$
Since $A_1$ is an approximation of a standard normal, 
the parameters $m,\delta$ of $A_1$ are selected such that 
$\Pr_{z \sim N(0,1)}(\abs{z} \le m\delta) \propto \alpha$. 
On the other hand, by \Cref{prop distribution}, 
for a given pair of $m,\delta$, by properly choosing small $\xi$, 
one can make the accuracy parameter $\tau$ in \Cref{th hidden family} 
as small as $\exp(-\poly(1/\delta))$. However, this does not imply 
that we can choose $\delta$ arbitrarily small for the following reason: 
to solve the polynomial classification problem in $\R^N$, 
we need to embed the instance to $\R^d, d=N^{O(m)}$, 
and solve it with an algorithm for MLC. Therefore, if $\delta$ is chosen 
too small, $m$ could be too large to rule out any hardness result 
for MLC. That is, a good tradeoff between $m,\delta$ 
is needed to prove our hardness result.

\paragraph{Hardness of Getting Error $\opt+\eps$}
Recall that for binary linear classifiers, 
there is an algorithm that runs in time $\poly(d,1/\epsilon)$ and 
outputs a hypothesis with error $\opt+\eps$. 
Moreover, the algorithm works even 
when $\min_{i \neq j} H_{ii}-H_{ij}=0$.
As it turns out, this is not the case for the multiclass case.
Our first SQ-hardness result shows that even if $k=3$ 
and $\min_{i,j} H_{ii}-H_{ij}>c$ for some constant $c$, 
to learn a hypothesis up to error $\opt+\epsilon$, 
one needs super-polynomial SQ complexity. 
Formally, we establish the following \Cref{th additive}, 
whose proof is deferred to \Cref{app additive}.

\begin{theorem}\label{th additive}
There is a matrix $H \in [0,1]^{3\times3}$ with $H_{ii}-H_{ij} \ge 0.1, \forall i \neq j \in [3]$, such that any algorithm $\mathcal{A}$ that distribution-free learns multiclass linear classifiers 
with RCN specified by $H$ on $\R^d$ to error $\opt+\epsilon$, 
$\epsilon \in (0,1)$, requires either (a) at least $d^{\Tilde{\Omega}(\log^{1.98}(d)/\eps^{1.98})}  $ queries, or
(b) a query of tolerance at most 
$1/d^{\Tilde{\Omega}(\log^{1.98}(d)/\eps^{1.98})}$. 
\end{theorem}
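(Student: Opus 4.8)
\textbf{Proof plan for \Cref{th additive}.}
The strategy is to apply the reduction of \Cref{th test-to-learn} together with the hidden-direction machinery of \Cref{th hidden family}, instantiated via the explicit hard distributions of \Cref{def distribution} and their guarantees in \Cref{prop distribution}. I would take $k=3$ and the noise matrix $H$ of \eqref{eq example}, for which $h_3=(h_1+h_2)/2$, so \cref{cond convex} and \cref{cond opt} of \Cref{def condition} hold with $a=(1/2,1/2)$, and $H_{ii}-H_{ij}\ge 0.1$ as required. With $\alpha$ as in \Cref{th test-to-learn} we have $2\alpha=\sum_{j=1}^{2}(H_{jj}-H_{jk})\Pr(S_j)=0.2\,\Pr_{z\sim A_1}(z\in I_{in})$, so ruling out error $\opt+\eps$ amounts to building a hard hidden-direction family with $\Pr_{z\sim A_1}(z\in I_{in})=\Theta(\eps)$.

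The core is a parameter-balancing computation. Take $A_1,A_2$ from \Cref{def distribution} with parameters $m,\delta,\xi$; since $A_1$ approximates $N(0,1)$ and $I_{in}=\textbf{conv}\bigcup_j J_j\subseteq[-(m+1)\delta,(m+1)\delta]$, choosing $m\delta=\Theta(\eps)$ (more precisely $m\delta$ just below the level where $\Pr_{z\sim N(0,1)}(|z|\le m\delta)=\Theta(\eps)$, i.e.\ $m\delta=\Theta(\eps)$ since the Gaussian density is bounded below near $0$) gives $\Pr_{z\sim A_1}(z\in I_{in})=\Theta(\eps)$ as needed. By \Cref{prop distribution}, the moment error is $\nu = O(t!)\exp(-\Omega(1/\delta^2))+4(k-1)\xi(1+2m\delta)^t$ and $\beta=\max_{i,j}\chi_{N(0,1)}(A_i,A_j)=O(\delta/\xi)^2$. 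Picking $\xi$ exponentially small, say $\xi=\exp(-\Theta(1/\delta^2))$, makes the second moment term negligible and keeps $\beta=\exp(O(1/\delta^2))$; then with $t=\Theta(1/\delta^2)$ the quantity $\tau=\nu^2+c^t\beta$ from \Cref{th hidden family} is $\exp(-\Omega(1/\delta^2))$. Now the embedding cost: we must realize the degree-$O(m)$ polynomial classifier over $\R^N$ (with $N=\Theta(1/\delta^2)$, the dimension needed for \Cref{lem:near-orthogonal}) as a linear classifier over $\R^d$ via the Veronese map, so $d=N^{O(m)}$, i.e.\ $\log d=\Theta(m\log(1/\delta))$. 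With $m=\Theta(\eps/\delta)$ this reads $\log d = \Theta\!\big((\eps/\delta)\log(1/\delta)\big)$, which one solves for $1/\delta$ in terms of $\log d$ and $\eps$: up to the logarithmic slack hidden in $\tilde\Omega$, $1/\delta^2=\tilde\Theta\!\big(\log^{2}(d)/\eps^{2}\big)$, and the exponent $1.98$ rather than $2$ absorbs these lower-order logarithmic factors. Feeding this back, \Cref{th hidden family} gives that any SQ algorithm solving the testing problem needs a query of tolerance $2\sqrt\tau = 1/d^{\tilde\Omega(\log^{1.98}(d)/\eps^{1.98})}$ or $2^{\Omega_c(N)}\tau/\beta = d^{\tilde\Omega(\log^{1.98}(d)/\eps^{1.98})}$ queries. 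Finally, \Cref{th test-to-learn} converts a learner achieving $\opt+\eps$ (note $\eps$ here plays the role of $\alpha$ up to the constant $0.2$, which only affects constants) into an algorithm for the testing problem with one extra query of tolerance $\min(\tau,\alpha/2)$, transferring the lower bound to the learning problem.

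The main obstacle is the joint optimization of $m$ and $\delta$ against the dimension blow-up: the tolerance we can rule out improves as $\delta\to 0$, but the dimension $d=N^{O(m)}$ with $m\approx\eps/\delta$ grows, and one needs $\log d$ to be at least a fixed polynomial in $1/\delta$ so that the final bound, expressed back in terms of $d$ and $\eps$, is genuinely super-polynomial. Getting the exponents to come out as $\log^{1.98}(d)/\eps^{1.98}$ (rather than, say, a weaker $\mathrm{polylog}$) requires carefully tracking the $t!$ and $(1+2m\delta)^t$ factors in the moment bound and the near-orthogonal packing count, and choosing $t$, $\xi$, $N$ as tight functions of $\delta$; this bookkeeping, together with verifying that the constructed $A_1,A_2$ indeed satisfy \cref{prop disjoint} and \cref{prop projection} (which is immediate from \Cref{prop distribution}) and that \cref{cond margin} holds by the definition of $D^{A,a}_v$ with $a=(1/2,1/2)$, is where the real work lies. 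Everything else is plugging the pieces of \Cref{th test-to-learn}, \Cref{th hidden family}, and \Cref{prop distribution} together.
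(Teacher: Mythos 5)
Your plan follows the paper's own proof essentially step for step: same noise matrix \eqref{eq example}, same hidden-direction family with base distributions from \Cref{def distribution}, same use of \Cref{th test-to-learn} to convert a learner into a tester, same Veronese embedding, and same balancing $m\delta=\Theta(\eps)$, $\delta=1/\sqrt N$, $\xi$ exponentially small. So there is no genuinely different route here.

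There is, however, one concrete error in your parameter choice that would make the argument fail as written: you take $t=\Theta(1/\delta^2)$. With that choice the factor $O(t!)=\exp(\Theta(t\log t))=\exp\!\big(\Theta\big((1/\delta^2)\log(1/\delta^2)\big)\big)$ in the moment bound of \Cref{prop distribution} \emph{dominates} the $\exp(-\Omega(1/\delta^2))$ factor for small $\delta$, so $\nu$ does not become small and \Cref{th hidden family} gives nothing. One must take $t$ polynomially sub-linear in $1/\delta^2$ --- the paper uses $t\le N^{0.99}$ with $N=1/\delta^2$ --- precisely so that $t\log t\ll 1/\delta^2$. This is also the real source of the exponent $1.98=2\cdot 0.99$: it is not a slack absorbed by the $\tilde\Omega$ as you suggest, but the direct consequence of using only $N^{0.99}$ near-matching moments, which propagates as $N^{0.99}=(\sqrt N)^{1.98}=\tilde\Theta(\log^{1.98}(d)/\eps^{1.98})$ after expressing $\sqrt N$ in terms of $\log d=\Theta(\eps\sqrt N\log N)$. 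Correspondingly $\xi$ should be taken on the scale $\exp(-\Theta(N^{0.99}))$ (as the paper does with $\xi=\exp(-2N^{0.99})$) rather than $\exp(-\Theta(1/\delta^2))$, so that the $\xi$-term matches the $t!$-term and $\tau=\exp(-\Omega(N^{0.99}))$. With this correction your argument matches the paper's.
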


The proof of \Cref{th additive} follows the above intuition. 
To show that learning up to error $\opt+\eps$ is hard, 
we choose parameters $m,\delta$ such that $\Pr(I_{in})\approx \epsilon$. 
By the concentration properties of $N(0,1)$, we only need to choose 
$m\delta \approx \eps$. In the proof, we show that $m=\eps \sqrt{N}$ 
suffices to give a super-polynomial lower bound.

\paragraph{Hardness of Approximation and Beating Random Guess}
Given the hardness result in \Cref{th additive} of getting error 
$\opt+\epsilon$, one natural question is what kind 
of error guarantee we can efficiently achieve for MLC. 
For larger values of $k$ and small separation 
$\min_{i \neq j} H_{ii}-H_{ij}$, we show it is also hard 
to get any constant factor approximation, 
or even find a hypothesis with an error nontrivially better 
than a random guess given $\opt=O(1/k)$.
Formally, we first give the following theorem, whose proof is deferred to \Cref{app multiply}.

\begin{theorem}\label{th multiply}
For any $k\in \Z_+$ and $k\geq 3$,
there is a noise matrix $H\in [0,1]^{k\times k}$ such that $\max_{i,j} H_{i,i}-H_{i,j}=\zeta>0$ and has the following property:
For any sufficiently large $d\in \Z_+$,
any SQ algorithm $\A$ that distribution-free learns multiclass linear classifiers with RCN specified by $H$ on $\R^d$ to error $1-1/k-\zeta-2\mu$ requires either (a) at least $q$ queries, or
(b) a query of tolerance at most $\mu$,
where $\min(q,1/\mu^2)=d^{\Omega(\log^{0.99} d)}$.
In particular, this holds even if $\opt \le 1/k+\zeta+1/k^3$.
\end{theorem}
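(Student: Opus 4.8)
The plan is to run the reduction pipeline of \Cref{sec test to learn}--\Cref{sec hard testing} with a carefully chosen noise matrix and a carefully tuned hidden direction family: build the family $\D$ of \Cref{def 1D} from the interval-restricted Gaussians of \Cref{def distribution}, use \Cref{th hidden family} to get SQ-hardness of the associated correlation testing problem (\Cref{def correlation testing}), and transfer it to MLC via \Cref{th test-to-learn} and the degree-$O(m)$ Veronese lift. Relative to \Cref{th additive} the two new ingredients are (i) a $k\times k$ noise matrix $H$ for which the trivial error $\err(k)=1-H_{kk}$ exceeds $\opt$ by almost everything, and (ii) a choice of the interval count $m$ that pushes $\Pr(I_{in})$ close enough to $1$ to realize that gap while still leaving a super-polynomial lower bound after the embedding into $\R^d$.

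For the matrix I would take (roughly) the following form: rows $1,\dots,k-1$ given by $H_{ii}=x$, $H_{ik}=y$ with $y$ a hair above $1/k$, and $H_{ij}=(1-x-y)/(k-2)$ for $j\notin\{i,k\}$; and row $k$ the uniform convex combination $h_k=\tfrac1{k-1}\sum_{i<k}h_i$, so that \eqref{cond convex} holds with $a_i=1/(k-1)$, $H_{kk}=y$, and $H_{kj}=(1-y)/(k-1)$. Then $H_{kk}-H_{kj}=\tfrac{yk-1}{k-1}>0$, and with $x$ chosen appropriately larger than $y$ one also gets $H_{ii}-H_{ij}\ge 0$ and $\sigma=\min_{i\neq j}(H_{ii}-H_{ij})>0$, so \eqref{cond opt} holds. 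The remaining freedom in $x$ is spent so that $\zeta=\max_{i,j}(H_{ii}-H_{ij})$ lands in the range for which, with $\err(k)=1-y$ and $2\alpha=\sum_{j<k}\Pr(S_j)(H_{jj}-H_{jk})$ computed via \Cref{lm error decomposition}, the reduction threshold $\opt+\alpha=\tfrac12(\opt+\err(k))$ exceeds $1-1/k-\zeta-2\mu$, while simultaneously $\opt=\sum_j\Pr(S_j)(1-H_{jj})\le 1/k+\zeta+1/k^3$. Both requirements bound $\Pr(S_k)=1-\Pr(I_{in})$ from above by an inverse polynomial in $k$, which is the tail budget handed to the distribution construction.

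Next, take $A_1=G^{(n)}_{\delta,\xi}$ and $A_2,\dots,A_{k-1}$ the prescribed shifts of \Cref{def distribution} with the same $a$, so that \Cref{prop distribution} applies. Choose $m\delta$ to be a ($k$-dependent) constant large enough that $\Pr_{z\sim N(0,1)}(|z|>m\delta)$ meets the tail budget, hence $m=\Theta_k(1/\delta)$; choose the matched-moment degree $t=\Theta(1/(\delta^2\log(1/\delta)))$ and $\xi=\exp(-\Theta_k(1/(\delta^2\log(1/\delta))))$ so that \Cref{prop distribution} gives $\nu:=\max_i|\E_{x\sim A_i}x^\ell-\gamma_\ell|\le\exp(-\Omega(1/\delta^2))$ for $\ell\le t$ while keeping $c^t\beta$ at scale $\exp(-\Omega_k(1/(\delta^2\log(1/\delta))))$, with $\beta=\max_{i,j}\chi_{N(0,1)}(A_i,A_j)=O(\delta/\xi)^2$; then $\tau=\nu^2+c^t\beta$ in \Cref{th hidden family} is $\exp(-\Omega_k(1/(\delta^2\log(1/\delta))))$. \Cref{th hidden family} yields hardness of the testing problem over $\R^N$; since each $D^{A,a}_v$ is consistent with a degree-$O(m)$ multiclass polynomial classifier, the Veronese map $V(x)=(x,1)^{\otimes O(m)}$ turns it into an MLC instance over $\R^d$ with $d=N^{\Theta(m)}$, and \Cref{th test-to-learn} (one extra query of tolerance $\alpha/2$) transfers the bound to learning. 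Finally, set $m=\Theta((\log N)^{199})$, up to polylogarithmic corrections absorbing the $\log$-losses above: then $\log d=\Theta(m\log N)$ puts $1/\delta^2$ at roughly $(\log d)^{1.99}$, so with $\mu:=2\sqrt\tau$ one gets $1/\mu^2=\Theta(1/\tau)=d^{\Omega(\log^{0.99}d)}$ and $q:=2^{\Omega(N)}\tau/\beta\gg d^{\Omega(\log^{0.99}d)}$ since $N\gg 1/\delta^2$; the claimed $\min(q,1/\mu^2)=d^{\Omega(\log^{0.99}d)}$, the error guarantee $1-1/k-\zeta-2\mu$, and the promise $\opt\le 1/k+\zeta+1/k^3$ then all read off the matrix computation.

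The main obstacle is the three-way balancing around $m$: forcing $\Pr(I_{in})$ near $1$ needs $m\delta=\Omega_k(1)$ and hence a fairly large $m$, which inflates $d=N^{\Theta(m)}$ and eats into the $2^{\Omega(N)}$ query bound once rewritten in $d$, so $m$ cannot grow past (roughly) $(\log N)^{199}$; meanwhile the noise matrix must be threaded so that \eqref{cond opt} and $\sigma>0$ hold, the gap $\err(k)-\opt$ is a constant fraction of $\Pr(I_{in})$, and $\zeta$ sits in the narrow window making both $\opt\le 1/k+\zeta+1/k^3$ and $1-1/k-\zeta-2\mu\le\opt+\alpha$ valid. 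Getting all of these to coexist for a single choice of $m$ — rather than the $m=\eps\sqrt{N}$ used in \Cref{th additive} — is the delicate step; downstream of that, everything is routine bookkeeping over \Cref{prop distribution}, \Cref{th hidden family}, and \Cref{th test-to-learn}.
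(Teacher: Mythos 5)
Your pipeline (hidden-direction family from \Cref{def 1D} built out of \Cref{def distribution}, then \Cref{th hidden family}, then \Cref{th test-to-learn}, then the Veronese lift) is exactly the paper's, and your noise matrix is essentially theirs as well: the paper takes $H_{ii}=(k-1)/k-\zeta$, $H_{ik}=1/k+\zeta$, and $H_{ij}=0$ for $j\notin\{i,k\}$, which is the special case $x+y=1$ of your parametrization. The genuine divergence is the scaling of $m$ and $\delta$. The paper sets $\delta=1/\sqrt N$, $m=\lceil C\sqrt{\log k}/\delta\rceil=\Theta(\sqrt{N\log k})$, and $\xi=\exp(-N^{0.99}\log k)$, matches $t\le N^{0.99}$ moments, and gets $\tau=\exp(-\Omega(N^{0.99}))$ in one clean shot; then $\log d=\Theta(m\log N)$ and the conversion to $d$ gives the super-polynomial bound (the paper's own proof in fact concludes with $d^{\Omega((\log d)^{0.9})}$). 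You instead take $\delta=\Theta_k((\log N)^{-a})$ with $m=\Theta_k((\log N)^{a})$, so $m$ is polylogarithmic rather than polynomial in $N$, and you directly tune $1/\delta^2$ against $\log d$. Both parameter regimes are valid and yield the same quality of bound; yours is a bit more self-consciously optimized against the $\log d$-rewriting, while the paper's is simpler to push through because all quantities are clean powers of $N$.

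Two small technical points that need cleaning up but don't affect the argument. First, with $\xi=\exp(-\Theta_k(1/(\delta^2\log(1/\delta))))$ you cannot get $\nu\le\exp(-\Omega(1/\delta^2))$: the term $4(k-1)\xi(1+2m\delta)^t$ from \Cref{prop distribution} is of order $\exp(-\Theta_k(t))$ with $t=\Theta(1/(\delta^2\log(1/\delta)))\ll 1/\delta^2$, so the true bound is $\nu\le\exp(-\Omega_k(t))$. This is the right bound anyway, because $\tau=\nu^2+c^t\beta$ then comes out as $\exp(-\Omega_k(t))$, which is exactly what you use; only the intermediate claim about $\nu$ is off. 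Second, with $a=199$ exactly, the final bound $1/\mu^2=\exp(\Omega_k((\log N)^{2a}/\log\log N))$ against $\log d=\Theta((\log N)^{a+1})$ gives $d^{\Omega((\log d)^{2a/(a+1)-1}/\log\log d)}$; at $a=199$ the exponent is $0.99$ but you lose a $\log\log d$ factor, so you should take $a$ slightly larger (say $a=200$) to absorb it cleanly, as you gesture at with ``up to polylogarithmic corrections.'' With those adjustments the proposal is sound and matches the paper's argument modulo the choice of scaling.
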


Given $k\in \Z_+$ and $k\geq 3$, we construct the corresponding noise matrix $H$ as
    \begin{align*}
    H = \begin{pmatrix}
        (k-1)/k-\zeta & 0 & \cdots & 1/k+\zeta \\
        0 & (k-1)/k-\zeta & \cdots & 1/k+\zeta \\
        \cdots & \cdots  & \cdots & \cdots\\
        1/k-\zeta/k & 1/k-\zeta/k & \cdots & 1/k+\zeta
    \end{pmatrix}.
    \end{align*}
Recall that for a hidden direction distribution family $\D$, 
$\err_D(k)=1-1/k-\zeta$, no matter if $D=D_0$ or $D\in \D$. 
Thus, if we can learn a hypothesis with error $1-1/k-\zeta-o(1)$, 
we are able to solve the correlation testing problem. 
To make this possible, we need to make $\opt$ as small as possible. 
By our construction, if an $x$ has ground truth label $f^*(x) \in [k-1]$, 
the probability that it is flipped is only $1/k+\zeta$. 
Thus, if we are able to choose 
$\sum_{j=1}^{k-1} \Pr(S_j)=\Pr(I_{in})=1-1/\poly(k)$, then $\opt=1/k+\zeta+1/\poly(k)$. By the tail bound of $N(0,1)$, 
to make this hold, we choose $m\delta=\Theta(\sqrt{\log k})$. 
Recall that we still need to make $m$, the degree of the polynomial we use, 
as small as possible. Here we choose $m=\Theta(\sqrt{N\log k})$, 
which suffices to give a super-polynomial SQ lower bound.


Given \Cref{th multiply}, we immediately obtain 
two corollaries for the hardness of approximate leaning 
and beating a random guess hypothesis respectively in the setting of multiclass linear classification. 
We defer the proofs of these two corollaries to \Cref{app corollary}.

\begin{corollary} [SQ hardness of Approximate Learning] \label{cor approximation}
    For any $C>1$, there exists a noise matrix $H\in [0,1]^{k\times k}$, with $k=O(C)$ and $\min_{i,j} H_{i,i}-H_{i,j}=\Omega(1/C)$ such that any SQ algorithm $\A$ that distribution-free learns multiclass linear classifiers on $\R^d$ with RCN parameterized by $H$ to error $C\opt$ given $\opt=\Omega(1/C)$ either 
    \begin{enumerate} 
        \item [(a)] requires at least $d^{\Omega(\log^{0.99} d)}$ queries, or
        \item [(b)] requires a query of tolerance at most $1/d^{\Omega(\log^{0.99} d)}$.
    \end{enumerate}
\end{corollary}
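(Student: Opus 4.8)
The plan is to derive \Cref{cor approximation} as a direct corollary of \Cref{th multiply} by choosing the parameters $k$ and $\zeta$ appropriately. First I would fix a constant $C>1$ and set $k \eqdef \lceil 3C \rceil$, so $k = O(C)$ and $k \ge 3$. From \Cref{th multiply} we obtain a noise matrix $H \in [0,1]^{k\times k}$ with $\max_{i,j} H_{ii}-H_{ij} = \zeta > 0$; by inspection of the explicit matrix displayed after \Cref{th multiply}, we have $\zeta = \min_{i\neq j} H_{ii}-H_{ij}$ as well (the separation is $\zeta$ for the first $k-1$ rows and $\zeta/k$ for the last row, so strictly speaking $\min_{i\neq j}H_{ii}-H_{ij} = \zeta/k$). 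To meet the requirement $\min_{i,j} H_{ii}-H_{ij} = \Omega(1/C)$ I would simply note that $\zeta$ is a free parameter of the construction: take $\zeta = 1/(2k) = \Theta(1/C)$, which gives $\min_{i\neq j} H_{ii}-H_{ij} = \zeta/k = \Theta(1/C^2)$; if we need exactly $\Omega(1/C)$ rather than $\Omega(1/C^2)$, we instead use the bound $\zeta$ on the first $k-1$ rows and observe that the corollary's hypothesis only needs some entry pair to have separation $\Omega(1/C)$, or alternatively rescale the last row's off-diagonal pattern. The cleanest route is to take $\zeta$ a small constant (say $\zeta = 1/10$ independent of $C$), and then $\min_{i\neq j} H_{ii}-H_{ij} = \zeta/k = \Theta(1/C)$.

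Next I would translate the additive guarantee of \Cref{th multiply} into a multiplicative one. \Cref{th multiply} says it is hard to reach error $1-1/k-\zeta-2\mu$ even when $\opt \le 1/k + \zeta + 1/k^3$. Set $\mu \eqdef 1/d^{\Omega(\log^{0.99} d)}$ so that $\min(q,1/\mu^2) = d^{\Omega(\log^{0.99} d)}$ matches the theorem. Then any SQ algorithm achieving error $C\opt$ would in particular achieve error at most $C(1/k + \zeta + 1/k^3)$, and it suffices to check that
\[
C\left(\frac1k + \zeta + \frac1{k^3}\right) \le 1 - \frac1k - \zeta - 2\mu
\]
for our choice of $k = \lceil 3C\rceil$ and $\zeta$ a small constant. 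Plugging in $k \ge 3C$, the left side is at most $C/k + C\zeta + C/k^3 \le 1/3 + C\zeta + 1/(27C^2)$; for $\zeta$ small enough (e.g. $\zeta \le 1/(10C)$, which still keeps $\zeta/k = \Theta(1/C^2)$ or, with the constant-$\zeta$ choice, we instead bound $C\zeta$ differently — so the constant-$\zeta$ option is cleaner only if we also allow $k = \Theta(C)$ with a large enough hidden constant to beat $C\zeta$), the left side is bounded below $1/2$, while the right side is $1 - 1/k - \zeta - 2\mu \ge 1 - 1/(3C) - \zeta - 2\mu > 1/2$ for large $d$. Hence learning to error $C\opt$ solves the hard instance, contradicting \Cref{th multiply}, which gives conclusions (a) and (b) verbatim.

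The one genuine subtlety — and the step I would be most careful about — is the bookkeeping on $\zeta$ versus $k$: we must simultaneously satisfy (i) $\min_{i,j} H_{ii}-H_{ij} = \Omega(1/C)$, (ii) $\opt = \Omega(1/C)$ (which holds since $\opt \ge$ something like $\zeta$ times the mass on $I_{in}$, and in any case $\opt \ge 1/k - \zeta/k = \Omega(1/C)$ from the last row once the hidden construction places positive mass there — actually $\opt$ is $\Theta(1/k+\zeta) = \Theta(1/C)$ by the construction description), and (iii) the numerical inequality $C\opt \le 1-1/k-\zeta-2\mu$. Choosing $k = \lceil \alpha C\rceil$ for a sufficiently large absolute constant $\alpha$ and $\zeta = \Theta(1/k)$ makes all three hold simultaneously, with constants that can be tracked explicitly. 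I do not anticipate any nontrivial obstacle beyond this parameter juggling, since the hard direction is entirely inherited from \Cref{th multiply}; the corollary is essentially a renaming of parameters plus one elementary inequality.
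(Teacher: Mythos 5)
Your overall strategy matches the paper's: both set $k=\lceil 3C\rceil$ and $\zeta=\Theta(1/k)$ and invoke \Cref{th multiply}, and you correctly land there at the end of the proposal. However, the reasoning in the middle contains a genuine computational error that then cascades into a false dilemma and a wrong parameter suggestion, so this is worth flagging.

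The error is in the separation of the $k$th row. You write that ``the separation is $\zeta$ for the first $k-1$ rows and $\zeta/k$ for the last row,'' and hence conclude $\min_{i\neq j}H_{ii}-H_{ij}=\zeta/k$. Both halves of this are off. For a row $i\in[k-1]$, the off-diagonal entries of $H$ are either $0$ (for $j\neq i,k$) or $1/k+\zeta$ (for $j=k$), so $\min_{j\neq i}(H_{ii}-H_{ij})=(k-1)/k-\zeta - (1/k+\zeta) = (k-2)/k-2\zeta$, which is $\Omega(1)$ for $k\geq 3$ and small $\zeta$, not $\zeta$. For the $k$th row, $H_{kk}-H_{kj}=(1/k+\zeta)-(1/k-\zeta/k)=\zeta+\zeta/k=\zeta(1+1/k)$, not $\zeta/k$. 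Thus the overall $\min_{i\neq j}H_{ii}-H_{ij}$ is $\min\{(k-2)/k-2\zeta,\;\zeta(1+1/k)\}$, which for small $\zeta$ equals $\zeta(1+1/k)=\Theta(\zeta)$. Your worry that $\zeta=\Theta(1/k)$ would only give separation $\Theta(1/C^2)$ is therefore unfounded: it gives separation $\Theta(\zeta)=\Theta(1/k)=\Theta(1/C)$, exactly what the corollary requires, with no further tricks needed.

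Because of this miscomputation you are then pushed toward the ``cleanest route'' of taking $\zeta$ a constant (say $\zeta=1/10$), and this does \emph{not} work: the additive-to-multiplicative conversion requires $C\opt\leq C(1/k+\zeta+1/k^3)$ to be bounded below $1-1/k-\zeta-2\mu$, and with $\zeta$ constant the term $C\zeta$ grows linearly in $C$ and destroys the inequality for any $C>\Theta(1/\zeta)$. The correct fix is precisely what the paper does: take $\zeta=1/(100k)$ (any $\Theta(1/k)$ scaling with a small enough constant works). Then $C\opt\leq (k/3)(1.01/k+1/k^3)=1.01/3+1/(3k^2)\leq 2/3$, while $1-1/k-\zeta-2\mu\geq 1-1.01/k-2\mu\geq 2/3$ for $k\geq 4$ (which holds since $C>1$ implies $k=\lceil 3C\rceil\geq 4$). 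The separation is $\zeta(1+1/k)=\Theta(1/k)=\Theta(1/C)$, and $\opt\geq 1/k+\zeta=\Omega(1/C)$. Once you correct the $\zeta/k$ miscomputation, your final paragraph's suggestion ($k=\lceil\alpha C\rceil$, $\zeta=\Theta(1/k)$) becomes the paper's proof verbatim, and the parameter bookkeeping you worry about is actually routine.
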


\begin{corollary} [SQ hardness of Beating Random Guess] \label{cor beat constant}
    For any $k\in \Z_+$ and $k\geq 3$, there is a noise matrix $H\in [0,1]^{k\times k}$ with $\min_{i,j} H_{i,i}-H_{i,j}=1/\poly(d)$ such that any SQ algorithm $\A$ that distribution-free learns multiclass linear classifiers on $\R^d$ with RCN parameterized by $H$ to error $1-1/k-1/\poly(d)$ given $\opt=O(1/k)$ either 
    \begin{enumerate} 
        \item [(a)] requires at least $d^{\Omega(\log^{0.99} d)}$ queries, or
        \item [(b)] requires a query of tolerance at most $1/d^{\Omega(\log^{0.99} d)}$,
    \end{enumerate}
\end{corollary}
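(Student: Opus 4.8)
The plan is to obtain \Cref{cor beat constant} as a specialization of \Cref{th multiply}, taking the parameter $\zeta$ of that construction to be an inverse polynomial in $d$ rather than a constant. First I would revisit the proof of \Cref{th multiply} and record that $\zeta$ enters only through the noise matrix $H=H(\zeta,k)$: the hard one-dimensional distributions $A_1,\dots,A_{k-1}$ of \Cref{def distribution}, the polynomial degree $m$, the spacing parameters $\delta,\xi$, and hence all quantities $\nu,\beta,\tau$ that drive the SQ lower bound of \Cref{th hidden family} are chosen independently of $\zeta$. The value of $\zeta$ affects only the additive bookkeeping, namely $\err_D(k)=1-1/k-\zeta$ (in both the null and the alternative cases), $\opt\le 1/k+\zeta+1/k^3$, and the reduction gap $2\alpha=\big((k-2)/k-2\zeta\big)\sum_{j<k}\Pr(S_j)$ produced by \Cref{th test-to-learn}. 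Since $\sum_{j<k}\Pr(S_j)=\Pr_{z\sim A_1}(z\in I_{in})=1-1/\poly(k)=\Theta_k(1)$ and $(k-2)/k-2\zeta=\Theta_k(1)$ for all small $\zeta$, the gap $\alpha$ stays $\Theta_k(1)$; so the whole argument of \Cref{th multiply} goes through verbatim for any $\zeta=\zeta(d)\in(0,c)$, with $c$ a small universal constant (chosen so that the sign condition $H_{jj}-H_{ji}\ge 0$ of \Cref{def condition} still holds, which only needs $\zeta<(k-2)/(2k)$).

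Given this, I would set $\zeta=1/d$ and take $\mu$ to be the (superpolynomially small) tolerance furnished by \Cref{th multiply}, so that $\min(q,1/\mu^2)=d^{\Omega(\log^{0.99}d)}$ and in particular $\mu\le 1/d^{\Omega(\log^{0.99}d)}$. Then the separation is $\sigma=\min_{i\neq j}(H_{ii}-H_{ij})=H_{kk}-H_{k1}=\Theta_k(\zeta)=\Theta_k(1/d)=1/\poly(d)$; the error level ruled out is $1-1/k-\zeta-2\mu\ge 1-1/k-2/d=1-1/k-1/\poly(d)$; and $\opt\le 1/k+\zeta+1/k^3=1/k+1/d+1/k^3=O(1/k)$ for $d$ large. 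Substituting these into \Cref{th multiply} gives exactly the statement of \Cref{cor beat constant}: any SQ algorithm that distribution-free learns multiclass linear classifiers on $\R^d$ with RCN specified by this $H$ to error $1-1/k-1/\poly(d)$, even assuming $\opt=O(1/k)$, must make at least $d^{\Omega(\log^{0.99}d)}$ queries or a query of tolerance at most $1/d^{\Omega(\log^{0.99}d)}$.

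The only step that is not purely mechanical is the $\zeta$-robustness observation above, and this is where I expect the care to go: one must check that driving $\zeta$ down to $1/\poly(d)$ weakens neither the reduction of \Cref{th test-to-learn} nor the testing lower bound of \Cref{sec hard testing}. It does not, precisely because the small gap $H_{kk}-H_{kj}=\Theta_k(\zeta)$ never appears in the reduction — that reduction only uses the gaps $H_{jj}-H_{jk}=(k-2)/k-2\zeta$ for $j<k$, which remain $\Theta_k(1)$ — so the single extra verification query keeps constant tolerance $\Theta_k(1)\ge\mu$, while the moment-matching construction underlying \Cref{th hidden family} is entirely independent of $\zeta$. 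Once this is verified, no further computation is needed.
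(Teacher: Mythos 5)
Your proposal is correct and follows essentially the same route as the paper's own proof: both apply \Cref{th multiply} with $\zeta$ chosen inverse-polynomially small in $d$ (the paper takes $\zeta=1/(2d^c)$, you take $\zeta=1/d$), then read off $\sigma=\Theta_k(\zeta)=1/\poly(d)$, $\opt\le 1/k+\zeta+1/k^3=O(1/k)$, and the ruled-out error $1-1/k-\zeta-2\mu\ge 1-1/k-1/\poly(d)$. The explicit $\zeta$-robustness check you flag is sound but not strictly needed, since \Cref{th multiply} is already stated and proved for an arbitrary $\zeta>0$ satisfying the sign constraint $\zeta<(k-2)/(2k)$.
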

It is worth noting that if we predict the label of an example with $y \in [k]$ uniformly at random, then the error is always $1-1/k$. Therefore, \Cref{cor beat constant} implies that it is not possible for an efficient SQ algorithm to output a hypothesis with error nontrivially better than a random guess hypothesis.

\section{Conclusion}
We conclude this paper with a conceptual implication of our 
results. Our SQ lower bounds exhibit the existence of
a very simple multi-index model that 
is easy to learn with perfect labels, 
but is hard to learn even 
with a small level of random label noise. 
Finally, we remark that the results of our 
work motivate several interesting directions, including the 
algorithmic study of MLC with more structured noise or 
structured marginal distributions. 
Recent work \cite{DIKZ25} made algorithmic progress in this 
direction for the case of Gaussian marginals.


\bibliographystyle{halpha-abbrv}
\bibliography{mydb}

\newpage

\appendix


\section*{Appendix}

The Appendix is organized as follows:
In \Cref{app pre}, we provide a complete list of 
preliminaries and give additional background on 
the Statistical Query model. In \Cref{app test to learn}, 
we present omitted proofs from \Cref{sec test to learn}. 
In \Cref{app hard testing} we present omitted proofs from 
\Cref{sec hard testing}, and in \Cref{apd sec hard main} we 
give missing proofs from \Cref{sec hard main}.

\section{Additional Preliminaries}\label{app pre}
\paragraph{Notation} 
Let $f^*:X \to Y$ be the ground truth hypothesis. For $j \in [k]$, denote by $S_j=\{x \mid f^*(x) = j\} \subseteq \R^d$ be the set of examples with $f^*(x) = j$. Let $h:X \to Y$ be an arbitrary hypothesis. For $i,j \in [k]$, we denote by $S_{ji}=\{x \mid f^*(x) = j, h(x) = i\}\subseteq \R^d$, the set of examples with ground truth label $j$, but on which $h$ predicts $i$. In this paper, 
we use $\mathbb{S}^{d-1}$ to denote the unit sphere in $\R^d$.
Let $K \subseteq \R^d$ be any set, we denote by $\textbf{conv}(K)$, the convex hull of $K$. 
For a noise matrix $H \in [0,1]^{k \times k}$, 
we denote by $h_i, i \in [k]$, the $i$th row vector of $H$. 

For a distribution $D$, we use $\E_{\bx\sim D}(x)$ to 
denote the expectation of $D$. Let $D$ be a distribution 
of $(x,y)$ over $\R^d \times [k]$. We use $D_X$ to denote 
the marginal distribution of $D$ over $\R^d$ and use $D_y$ 
to denote the marginal distribution of $D$ over $\{\pm 1\}$. 
In this paper, we will use $N(0,I)$ to denote the standard 
Gaussian distribution over $\R^d$ and use $N(0,1)$ to denote 
the standard one-dimensional normal distribution. For $N(0,1)$, 
we use $G(x)$ to denote its density function and use 
$\gamma_t, t \in \N$ to denote its standard $t$-th moment 
$\E_{x\sim N(0,1)} x^t$.

\paragraph{Background on the SQ Model}

Here we record the necessary background on the SQ model.

\begin{definition}[SQ Model] 
Let $D$ be a distribution over $X \times Y$. 
A \emph{statistical query} is a bounded function $q:X \times Y \rightarrow[-1,1]$. 
We define $\mathrm{STAT}(\tau)$ to be the oracle that given any such query $q$, outputs a value $v$ such that $|v-\E_{(x,y)\sim D}[q(x,y)]|\leq\tau$, where $\tau>0$ is the \emph{tolerance} parameter of the query.
A \emph{statistical query (SQ) algorithm} is an algorithm 
whose objective is to learn some information about an unknown 
distribution $D$ by making adaptive calls to the corresponding $\mathrm{STAT}(\tau)$ oracle.
\end{definition}

\begin{definition} [Pairwise Correlation]
The pairwise correlation of two distributions with probability density function $D_1, D_2:\R^d\mapsto \R_+$ 
with respect to a distribution with density $D:\R^d\mapsto \R_+$, 
where the support of $D$ contains the support of $D_1$ and $D_2$, 
is defined as $\chi_D(D_1,D_2)\eqdef \int_{\R^d}D_1(\x)D_2(\x)/D(\x)d\x-1$. 
Furthermore, the $\chi$-squared divergence of $D_1$ to $D$ is defined as
$\chi^2(D_1,D)\eqdef \chi_D(D_1,D_1)$.
\end{definition}

\begin{definition} [Statistical Query Dimension]
For $\beta,\gamma>0$, a decision problem $\mathcal{B}(\D,D)$, where $D$ is a fixed distribution and 
$\D$ is a family of distribution, let $s$ be the maximum integer such that
there exists a finite set of distributions $\D_D\subseteq\D$ such that 
$\D_D$ is $(\gamma,\beta)$-correlated relative to $D$ and $|\D_D|\geq s$. 
The Statistical Query dimension with pairwise correlations $(\gamma,\beta)$ of $\mathcal{B}$ is defined to be $s$, 
and denoted by $s=\mathrm{SD}(\mathcal{B},\gamma,\beta)$. We say that a set of $s$ distribution $\{D_1,\cdots,D_s\}$ over $\R^d$ is 
$(\gamma,\beta)$-correlated relative to a distribution $D$ if $\chi_D(D_i,D_j)\leq  \gamma$ for all $i\neq j$, 
and $\chi_D(D_i,D_j)\leq  \beta$ for $i=j$.
\end{definition}

\begin{lemma}[\cite{FeldmanGRVX17}]\label{lem:sq-lb}
Let $\mathcal{B}(\D,D)$ be a decision problem, where $D$ is 
the reference distribution and $\D$ is a class of distribution. For $\gamma,\beta>0$, 
let $s=\mathrm{SD}(\mathcal{B},\gamma,\beta)$. 
For any $\gamma'>0$, any SQ algorithm for $\mathcal{B}$ requires queries of tolerance 
at most $\sqrt{\gamma+\gamma'}$ or makes at least $s\gamma'/(\beta-\gamma)$ queries.
\end{lemma}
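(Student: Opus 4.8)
The plan is to prove \Cref{lem:sq-lb} by the standard ``reference-oracle'' adversary argument that underlies SQ-dimension lower bounds, phrased as a contrapositive: I would assume an SQ algorithm $\mathcal{A}$ solves the decision problem $\mathcal{B}(\D,D)$ using fewer than $s\gamma'/(\beta-\gamma)$ queries, each of tolerance $>\sqrt{\gamma+\gamma'}$, and derive a contradiction. Fix a witnessing family $\{D_1,\dots,D_s\}\subseteq\D$ that is $(\gamma,\beta)$-correlated relative to $D$. I would work in $L^2(D)$ with inner product $\langle f,g\rangle_D:=\E_{x\sim D}[f(x)g(x)]$ and set $L_i:=D_i/D-1$; then $\E_D[L_i]=0$, $\langle L_i,L_i\rangle_D=\chi^2(D_i,D)\le\beta$, and $\langle L_i,L_j\rangle_D=\chi_D(D_i,D_j)\le\gamma$ for $i\neq j$. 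The one identity to record is that for every query $q:X\times Y\to[-1,1]$, writing $\bar q:=q-\E_D[q]$, one has $\E_{D_i}[q]-\E_D[q]=\langle\bar q,L_i\rangle_D$ and $\|\bar q\|_D^2=\Var_D(q)\le1$.

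Next I would reduce the claim to a counting statement. Run $\mathcal{A}$ against the deterministic adversary oracle that answers every query $q$ with the fixed value $\E_D[q]$; since $\mathcal{A}$ now interacts with a fixed oracle, it produces a single well-defined transcript of queries $(q_1,\tau_1),\dots,(q_T,\tau_T)$. This oracle is a legal $\mathrm{STAT}(\tau_j)$ response for the hypothesis ``the unknown distribution is $D_i$'' on query $q_j$ exactly when $|\langle\bar q_j,L_i\rangle_D|\le\tau_j$. If that inequality held for all $j\in[T]$ and some fixed $i$, then $\mathcal{A}$ run on $D_i$ against this (then-valid) oracle would produce the same transcript and the same output as $\mathcal{A}$ run on $D$, so $\mathcal{A}$ could not distinguish $D_i\in\D$ from the reference $D$, contradicting its correctness. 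Hence each of the $s$ distributions $D_i$ must be \emph{detected} by some query of the transcript, meaning $|\langle\bar q_j,L_i\rangle_D|>\tau_j$ for some $j$.

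It then remains to bound, for a fixed query $q$ of tolerance $\tau>\sqrt{\gamma+\gamma'}$, how many $D_i$ it can detect. Let $A^{+}=\{i:\langle\bar q,L_i\rangle_D>\tau\}$ and put $w:=\sum_{i\in A^{+}}L_i$; then $\langle\bar q,w\rangle_D>|A^{+}|\tau$, while the correlation bounds give $\|w\|_D^2=\sum_{i,j\in A^{+}}\langle L_i,L_j\rangle_D\le|A^{+}|\beta+(|A^{+}|^2-|A^{+}|)\gamma$. Cauchy–Schwarz with $\|\bar q\|_D\le1$ forces $|A^{+}|^2\tau^2\le|A^{+}|^2\gamma+|A^{+}|(\beta-\gamma)$, hence $|A^{+}|\le(\beta-\gamma)/(\tau^2-\gamma)<(\beta-\gamma)/\gamma'$; the identical estimate holds for the negatively detected set, so $q$ detects $O((\beta-\gamma)/\gamma')$ of the $D_i$. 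Summing over the $T$ queries and using that all $s$ distributions must be detected yields $T=\Omega(s\gamma'/(\beta-\gamma))$, contradicting the assumed budget and giving the dichotomy (the factor lost by splitting the detected set by sign is absorbed into the stated bound). The step that needs genuine care is exactly this counting argument: since the pairwise-correlation hypothesis is only an upper bound on $\chi_D(D_i,D_j)$ and not a two-sided bound, one must split the detected indices by the sign of $\langle\bar q,L_i\rangle_D$ before applying Cauchy–Schwarz; alongside this, one should verify the bookkeeping that running the adaptive $\mathcal{A}$ against the fixed reference oracle yields one well-defined transcript, which is what legitimizes the union bound over the $s$ hypotheses.
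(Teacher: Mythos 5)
The paper does not prove \Cref{lem:sq-lb} at all: it is imported verbatim as a black-box result from \cite{FeldmanGRVX17} and used through \Cref{th hidden family}, so there is no in-paper argument to compare yours against. Your proposal is the standard SQ-dimension lower bound argument (run the deterministic algorithm against the adversary oracle that always answers $\E_D[q]$, observe that any $D_i$ not ``detected'' by some query of the resulting transcript would be indistinguishable from the reference distribution, and bound the number of detected indices per query by packing the $L_i=D_i/D-1$ against $\bar q$ via Cauchy--Schwarz), and the individual steps check out: $\langle\bar q,L_i\rangle_D=\E_{D_i}[q]-\E_D[q]$, $\|\bar q\|_D\le1$, $\|w\|_D^2\le n\beta+n(n-1)\gamma$ for $n=|A^+|$, giving $n<(\beta-\gamma)/(\tau^2-\gamma)<(\beta-\gamma)/\gamma'$.

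Two small points worth flagging. First, as you note yourself, splitting by sign loses a factor of two: your argument yields a query lower bound of $s\gamma'/\bigl(2(\beta-\gamma)\bigr)$ rather than the stated $s\gamma'/(\beta-\gamma)$; the original \cite{FeldmanGRVX17} statement is obtained by a slightly more careful accounting of how many hypotheses a single query can eliminate, but for the purposes of this paper (which only uses this bound up to constant factors, via $2^{\Omega_c(d)}\tau/\beta$) the factor of two is immaterial. Second, your transcript argument implicitly treats $\mathcal{A}$ as deterministic so that the reference oracle induces a single well-defined query sequence; this is the standard setting for the lemma and is consistent with how the paper invokes it, but it is worth stating explicitly if you want the write-up to be self-contained.
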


\begin{fact} [Fact 31 from \cite{diakonikolas2022near}] \label{lem:near-orthogonal}
For any constant $0<c<1/2$, there exists a set $V\subseteq \mathbb{S}^{d-1}$ such that $|V|=2^{\Omega_c(d)}$ and for any $u,v\in V$,
$\abs{u\cdot v}\leq c\;.$
\end{fact}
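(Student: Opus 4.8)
The plan is to derive \Cref{cor beat constant} as an immediate specialization of \Cref{th multiply}, by making the right choice of the free parameters $k$, $\zeta$ and $\mu$. Recall that \Cref{th multiply} constructs, for every $k \geq 3$, a noise matrix $H \in [0,1]^{k\times k}$ with $\max_{i,j} H_{ii}-H_{ij}=\zeta>0$ (and, from its explicit form, $\min_{i \neq j} H_{ii}-H_{ij}=\zeta$ — the smallest diagonal-minus-off-diagonal gap being the $1/k-\zeta/k$ entry in the last row versus the $1/k+\zeta$ in the last column, which still works out to a positive quantity of order $\zeta$ once one checks the algebra), and shows that any SQ algorithm learning MLC with RCN specified by $H$ to error $1-1/k-\zeta-2\mu$ must make either $q$ queries or a query of tolerance $\mu$, where $\min(q,1/\mu^2)=d^{\Omega(\log^{0.99}d)}$; moreover this holds with $\opt \le 1/k+\zeta+1/k^3=O(1/k)$.

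First I would fix the parameter coupling: set $\zeta = \mu = d^{-c'\log^{0.99}d}$ for a small constant $c'>0$ chosen so that the lower bound $\min(q,1/\mu^2)=d^{\Omega(\log^{0.99}d)}$ from \Cref{th multiply} is still of the form $d^{\Omega(\log^{0.99}d)}$. With this choice, $\zeta + 2\mu = 3d^{-c'\log^{0.99}d} = 1/\poly(d)$ in the sense that it is $d^{-\Omega(\log^{0.99}d)}$, so the target error $1-1/k-\zeta-2\mu$ becomes $1-1/k-1/\poly(d)$ as stated in the corollary (I would phrase the ``$1/\poly(d)$'' in the corollary statement as this super-polynomially small quantity; it is certainly at most any fixed inverse polynomial for large $d$). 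Likewise the separation $\min_{i,j} H_{ii}-H_{ij}$ is $\Theta(\zeta)=d^{-\Omega(\log^{0.99}d)}$, which is $1/\poly(d)$ in the same sense, matching the hypothesis of \Cref{cor beat constant}. And $\opt \le 1/k+\zeta+1/k^3 = O(1/k)$ since the two additive corrections are $o(1/k)$ for large $d$ (and $k$ fixed). Then a learner achieving error at most $1-1/k-1/\poly(d)$ in particular achieves error at most $1-1/k-\zeta-2\mu$, so the hypothesis of \Cref{th multiply} applies verbatim, and we conclude that such a learner needs either $q \ge d^{\Omega(\log^{0.99}d)}$ queries or a query of tolerance at most $\mu = d^{-\Omega(\log^{0.99}d)} = 1/d^{\Omega(\log^{0.99}d)}$. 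This is exactly alternatives (a) and (b) of the corollary.

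The main thing to be careful about — the only real ``obstacle,'' though it is routine — is the bookkeeping of which inverse-super-polynomial quantity dominates which, so that all three occurrences of ``$1/\poly(d)$'' in the corollary (in the separation, in the error target, and implicitly in the tolerance bound $1/d^{\Omega(\log^{0.99}d)}$) are simultaneously consistent with a single choice of $c'$. Concretely, I would verify: (i) with $\zeta=\mu=d^{-c'\log^{0.99}d}$, the conclusion $\min(q,1/\mu^2)=d^{\Omega(\log^{0.99}d)}$ forces $1/\mu^2 = d^{2c'\log^{0.99}d} \le d^{O(\log^{0.99}d)}$ automatically, so the binding constraint is on $q$, which remains $d^{\Omega(\log^{0.99}d)}$ provided the hidden constant in \Cref{th multiply}'s $\Omega(\cdot)$ exceeds $c'$ plus a slack; since $c'$ is ours to pick small, this is fine; (ii) the separation bound $\min_{i,j}H_{ii}-H_{ij}=1/\poly(d)$ is read as ``at most $1/\poly(d)$ for every polynomial,'' which $d^{-c'\log^{0.99}d}$ satisfies; (iii) $k$ is an arbitrary fixed integer $\ge 3$, independent of $d$, so $O(1/k)$ and $1-1/k$ are genuine constants and the $d$-dependent corrections are lower-order. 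No new technical lemma is needed: the corollary is a parameter instantiation of \Cref{th multiply}, and I would state it as such, closing with the one-line remark (already in the excerpt) that uniform random guessing achieves error exactly $1-1/k$, so this rules out efficient SQ learners that beat the trivial hypothesis by any super-polynomially-large-denominator margin.
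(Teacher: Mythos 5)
Your proposal does not address the statement at all. The statement you were asked to prove is Fact~\ref{lem:near-orthogonal}, a geometric packing result: for any constant $0<c<1/2$ there is a set of $2^{\Omega_c(d)}$ unit vectors in $\mathbb{S}^{d-1}$ with all pairwise inner products at most $c$ in magnitude. What you actually wrote is a derivation of Corollary~\ref{cor beat constant} from Theorem~\ref{th multiply}; nowhere does your argument mention unit spheres, inner products, packing bounds, or concentration of measure, which are the ingredients this fact requires.

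The intended content here is standard and entirely combinatorial/probabilistic, with no SQ machinery involved. One common route: draw $M$ independent uniformly random unit vectors $v_1,\dots,v_M$ from $\mathbb{S}^{d-1}$. For a fixed pair $i\neq j$, the random variable $v_i\cdot v_j$ has the same law as the first coordinate of a uniform point on $\mathbb{S}^{d-1}$, which satisfies $\Pr[|v_i\cdot v_j|>c]\le 2e^{-c^2 d/2}$ by measure concentration on the sphere (Levy-type bound). Taking $M=\lfloor e^{c^2 d/8}\rfloor$ and union-bounding over the $\binom{M}{2}<M^2$ pairs gives failure probability at most $2M^2 e^{-c^2d/2}\le 2e^{-c^2 d/4}<1$ for $d$ large, so some realization works and $|V|=M=2^{\Omega_c(d)}$. (Alternatively, a deterministic greedy/volume packing argument on spherical caps of angular radius $\arccos(c)$ gives the same exponential bound.) The paper simply cites this as Fact 31 of \cite{diakonikolas2022near} and gives no proof, so what would be expected of you is a short self-contained argument along the above lines, not a corollary of the paper's hardness theorems.

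As for the text you did write: it appears to be a reasonable sketch of the parameter instantiation proving Corollary~\ref{cor beat constant}, and is broadly consistent with the paper's Appendix D.3 proof (the paper's actual choice is $\zeta = 1/(2d^c)$ rather than $\zeta=\mu=d^{-c'\log^{0.99}d}$, but the bookkeeping is similar). That does not rescue the submission, though, because it answers a different question than the one posed.
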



\section{Omitted Proofs from \Cref{sec test to learn}}\label{app test to learn}
In this section, we provide the omitted proofs in \Cref{sec test to learn}.

\subsection{Proof of \Cref{lm error decomposition}}\label{app error decomposition}

We provide the full proof of \Cref{lm error decomposition} here and restate \Cref{lm error decomposition} as \Cref{lm error decomposition re} for convenience.

\begin{lemma}\label{lm error decomposition re}
    Let $(D,f^*,H)$ be any instance of multiclass classification with RCN. Let $h: X \to Y$ be an arbitrary multiclass hypothesis over $X$. Then,
    \begin{align*}
        \err(h) 
        = \sum_{j=1}^k  \Pr(S_j)(1-H_{jj}) + \sum_{i \neq j} \Pr(S_{ji})(H_{jj}-H_{ji}).
    \end{align*}
In particular, if $H_{jj}-H_{ji} \ge 0$ for every $j \in [k], i \neq j$, then 
\begin{align*}
    \opt = \err(f^*) = \sum_{j=1}^k \Pr(S_j)(1-H_{jj}) \;.
\end{align*}
\end{lemma}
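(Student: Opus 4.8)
\textbf{Proof proposal for \Cref{lm error decomposition re}.}
The plan is to compute $\err(h)$ by conditioning on the example $x$ and then regrouping terms. First I would observe that, for a fixed $x$ with $f^*(x) = j$ and $h(x) = i$, the noise model gives $\Pr(y = i \mid x) = H_{ji}$, so the probability of a mistake on this $x$ is exactly $1 - H_{ji}$. Integrating over $x$ and using that the sets $\{S_{ji}\}_{j,i \in [k]}$ partition $\R^d$ (each $x$ has a unique $f^*(x)$ and a unique $h(x)$), this yields
\[
\err(h) = \sum_{j,i \in [k]} \Pr(S_{ji})\,(1 - H_{ji}).
\]

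Next I would split $1 - H_{ji} = (1 - H_{jj}) + (H_{jj} - H_{ji})$ inside the sum. For the first piece, summing over $i$ and using $\sum_{i \in [k]} \Pr(S_{ji}) = \Pr(S_j)$ (since $h$ assigns exactly one label to each $x \in S_j$) gives $\sum_{j} \Pr(S_j)(1 - H_{jj})$. For the second piece, the $i = j$ term contributes $H_{jj} - H_{jj} = 0$, so it reduces to $\sum_{i \neq j} \Pr(S_{ji})(H_{jj} - H_{ji})$. Adding the two pieces gives the claimed identity. This is a short calculation with no real obstacle; the only thing to be careful about is correctly using the partition property to collapse the sum over $i$.

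For the ``in particular'' part, I would use the hypothesis $H_{jj} - H_{ji} \ge 0$ for all $j$ and $i \neq j$: since $\Pr(S_{ji}) \ge 0$, the second sum in the decomposition is non-negative for every hypothesis $h$, so $\err(h) \ge \sum_{j} \Pr(S_j)(1 - H_{jj})$ for all $h$. Applying the decomposition to $h = f^*$ itself, we have $S_{jj} = S_j$ and $S_{ji} = \emptyset$ for $i \neq j$, so the second sum vanishes and $\err(f^*) = \sum_j \Pr(S_j)(1 - H_{jj})$. Since $f^* \in \mathcal F$, this value is both an upper bound on $\opt$ (it is achieved by a member of $\mathcal F$) and a lower bound on $\opt$ (every hypothesis has error at least this much), hence $\opt = \err(f^*) = \sum_{j=1}^k \Pr(S_j)(1 - H_{jj})$.
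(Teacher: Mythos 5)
Your proof is correct and follows essentially the same route as the paper: both start from $\err(h) = \sum_{j,i} \Pr(S_{ji})(1 - H_{ji})$, regroup into the two claimed sums (you by splitting $1 - H_{ji} = (1 - H_{jj}) + (H_{jj} - H_{ji})$, the paper by substituting $\Pr(S_{jj}) = \Pr(S_j) - \sum_{i\neq j}\Pr(S_{ji})$, which is the same algebra), and then derive the ``in particular'' part by observing that the second sum is nonnegative for every $h$ and vanishes for $h = f^*$. Your explicit remark that $\err(f^*)$ both upper- and lower-bounds $\opt$ because $f^* \in \mathcal{F}$ is a small clarity improvement over the paper's terser conclusion, but the argument is the same.
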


\begin{proof}[Proof of \Cref{lm error decomposition}]
   For $j,i \in [k]$, let $x \in A_{ji}$ be any fixed example. Consider two cases, where $j=i$ and $j \neq i$. In the first case, we have 
   \begin{align*}
       \Pr(h(x) \neq y(x)) = \Pr(y(x) \neq j) = 1-H_{jj}.
   \end{align*}
In the latter case, we have 
\begin{align*}
       \Pr(h(x) \neq y(x)) = \Pr(y(x) \neq i) = 1-H_{ji}.
   \end{align*}
Thus, we have 
\begin{align}\label{eq error decomposition}
\begin{split}
    \err(h) & = \sum_{j=1}^k\sum_{i=1}^k \Pr(S_{ji})\Pr(h(x) \neq y(x) \mid x \in S_{ji}) = \sum_{j=1}^k\sum_{i=1}^k \Pr(S_{ji})(1-H_{ji}) \\
    & = \sum_{j=1}^k\sum_{i\neq j} \Pr(S_{ji})(1-H_{ji}) + \left(\Pr(S_{j}) -\sum_{i\neq j} \Pr(S_{ji})\right)(1-H_{jj}) \\
    & = \sum_{j=1}^k \Pr(S_j)(1-H_{jj}) + \sum_{j=1}^k\sum_{i \neq j} \Pr(S_{ji})(H_{jj}-H_{ji}).
    \end{split}
\end{align}
Here, in the second equation, we use the fact that $\Pr(S_{jj}) = \Pr(S_{j}) -\sum_{i\neq j} \Pr(S_{ji}), \forall j \in [k]$.
Since $\Pr(S_{ji}) \ge 0, \forall j,i \in [k]$, we know from \eqref{eq error decomposition} that $\err(h) \ge \sum_{j=1}^k \Pr(S_j)(1-H_{jj})$, if $H_{ii}-H_{ji} \ge 0, \forall i,j \in [k]$. In particular, by \eqref{eq error decomposition}, $\err(f^*) = \sum_{j=1}^k \Pr(S_j)(1-H_{jj})$, which concludes 
\begin{align*}
    \opt = \err(f^*) = \sum_{j=1}^k \Pr(S_j)(1-H_{jj}).
\end{align*}
\end{proof}

\subsection{Proof of \Cref{th test-to-learn}} \label{app test-to-learn lm}

We present the full proof of \Cref{th test-to-learn}. We restate \Cref{th test-to-learn} as \Cref{lm test-to-learn re}

\begin{lemma}
    \label{lm test-to-learn re}
    Let $\D$ be a family of distribution over $X \times Y$ such that each distribution $D \in \D$ corresponds to a multiclass classification instance $(D,f^*,H)$ that satisfies \Cref{def condition}. If there is a Statistical Query (SQ) learning algorithm $\mathcal{A}$ such that for every instance $(D,f^*,H), D \in \D$, $\mathcal{A}$ makes $q$ queries and each of them has tolerance   $\tau$, and outputs a hypothesis $\hat{h}$ such that $\err(\hat{h}) \le \opt+\alpha$, where 
    \begin{align*}
        2\alpha = \sum_{j=1}^{k-1} \Pr(S_j)(H_{jj}-H_{jk}),
    \end{align*}
    then there is an SQ learning algorithm $\mathcal{A}'$ that solves the correlation testing problem defined in \Cref{def correlation testing} by making $q+1$ queries and each of them has error tolearance $\min(\tau, \alpha/2)$.
\end{lemma}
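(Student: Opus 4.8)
The plan is to use the hypothesized learner $\mathcal{A}$ as a black box and append a single statistical query that reads off the answer to the testing problem. Given SQ access to the unknown $D$ (which is either $D_0$ or some member of $\D$) together with $H$, the tester $\mathcal{A}'$ first simulates $\mathcal{A}$ on $D$, answering each of its $q$ queries with the $\mathrm{STAT}(\min(\tau,\alpha/2))$ oracle (legitimate, since $\min(\tau,\alpha/2)\le\tau$), and obtains a hypothesis $\hat h$. It then issues the single extra query $(x,y)\mapsto \Ind\{\hat h(x)\neq y\}\in[0,1]\subseteq[-1,1]$ with tolerance $\alpha/2$, obtaining an estimate $\widehat v$ of $\err(\hat h)=\Pr_{(x,y)\sim D}(\hat h(x)\neq y)$ with $|\widehat v-\err(\hat h)|\le\alpha/2$. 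Finally it declares ``$D\in\D$'' if $\widehat v\le 1-H_{kk}-\alpha/2$ and ``$D=D_0$'' otherwise. This uses $q+1$ queries, each of tolerance $\min(\tau,\alpha/2)$, as required.

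The heart of the argument is to pin down the error of the constant hypothesis $h\equiv k$ to the \emph{same known value} $1-H_{kk}$ under both hypotheses. Under the null, $y$ is independent of $x$ with $\Pr(y=i)=H_{ki}$, so $\err(h\equiv k)=1-H_{kk}$; moreover, for \emph{every} hypothesis $h$, $\err(h)=\E_x[1-H_{k,h(x)}]\ge 1-\max_i H_{ki}=1-H_{kk}$, using $H_{kk}\ge H_{ki}$ from \cref{cond opt}. Under the alternative, the computation displayed just before \Cref{def correlation testing} shows that \cref{cond convex} and \cref{cond margin} force $D_y=h_k$, hence again $\err(h\equiv k)=1-H_{kk}$. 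On the other hand, \Cref{lm error decomposition} applied to $h\equiv k$ (for which $S_{jk}=S_j$ for $j\in[k-1]$ and all remaining $S_{ji}$ are empty) gives $\err(h\equiv k)=\opt+\sum_{j=1}^{k-1}\Pr(S_j)(H_{jj}-H_{jk})=\opt+2\alpha$ for every $D\in\D$. Equating the two expressions yields the key identity $\opt=1-H_{kk}-2\alpha$.

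Given this identity the reduction is immediate. If $D\in\D$, the guarantee on $\mathcal{A}$ gives $\err(\hat h)\le\opt+\alpha=1-H_{kk}-\alpha$, so $\widehat v\le 1-H_{kk}-\alpha/2$ and $\mathcal{A}'$ answers correctly. If $D=D_0$, then regardless of which $\hat h$ the adaptively run $\mathcal{A}$ outputs, the worst-case bound above gives $\err(\hat h)\ge 1-H_{kk}$, so $\widehat v\ge 1-H_{kk}-\alpha/2$ and $\mathcal{A}'$ again answers correctly.

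I expect the only delicate point to be precisely the step of invoking $\mathcal{A}$ on $D_0\notin\D$: since $\mathcal{A}$ is promised to work only on the alternative family, I must not use any structural property of the particular $\hat h$ it returns, and instead rely solely on the hypothesis-independent lower bound $\err(h)\ge 1-H_{kk}$ valid for all $h$ under the null. A purely cosmetic issue is the tie at $\widehat v=1-H_{kk}-\alpha/2$, which is handled by running the extra query at any tolerance strictly below $\alpha/2$ (or by noting that one of the two bounds is strict whenever $H_{kk}>H_{ki}$ for some $i\neq k$); it affects neither the query count nor the tolerance up to constants. Everything else is routine unwinding of \Cref{lm error decomposition} and \Cref{def condition}.
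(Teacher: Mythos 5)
Your proof is correct and follows essentially the same route as the paper's: run $\mathcal{A}$, estimate $\err(\hat h)$ with one extra query at tolerance $\alpha/2$, and threshold at $1-H_{kk}-\alpha/2$, using $H_{kk}\ge H_{ki}$ for the null case and the identity $\err(h\equiv k)-\opt=2\alpha$ from \Cref{lm error decomposition} for the alternative. The only cosmetic difference is that you obtain $\err(h\equiv k)=1-H_{kk}$ under the alternative directly from $D_y=h_k$ (established just before \Cref{def correlation testing}), whereas the paper re-derives it by expanding the error decomposition and invoking \cref{cond convex} and \cref{cond margin} explicitly; both are correct and equivalent.
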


\begin{proof}[Proof of \Cref{th test-to-learn}]
    The algorithm $\mathcal{A'}$ works as follows. 
    We run $\mathcal{A}$ over $D$ to get a hypothesis $\hat{h}: X \to Y$. Given $\hat{h}$, we make one more statistical query $q$ to estimate $\err(\hat{h})$ with tolerance $\alpha/2$. Denote by $\hat{\err}(\hat{h})$ the returned answer of $q$.
    We reject the null hypothesis if $\hat{\err}(\hat{h})<1-H_{kk}-\alpha/2$ and accept the null hypothesis otherwise. The SQ complexity of the algorithm directly follows its definition.
    In the rest of the proof, we prove its correctness.

    If $D=D_0$, since the label $y$ is drawn independently from $x$ and $\Pr(y=k) = H_{kk} \ge \Pr(y=j) = H_{kj}, \forall j \neq k$, any hypothesis $h:X \to Y$ has 
    \begin{align*}
        \err(h) \ge \err(k) = \Pr(y \neq k) = 1-H_{kk}.
    \end{align*}
 This implies that, $\hat{\err}(\hat{h}) \ge 1-H_{kk}-\alpha/2$ and $\mathcal{A'}$ will not reject the null hypothesis.

In the rest of the proof, we will show that if $D \in \D$, the algorithm $\mathcal{A'}$ will reject the null hypothesis. To start with, we will show that $\opt$ is $2\alpha$ far from $1-H_{kk}$. On the one hand, by \Cref{lm error decomposition}, we have 
\begin{align*}
    \err(k) & = \sum_{j=1}^k \Pr(S_j)(1-H_{jj}) + \sum_{j=1}^{k-1} \Pr(S_{jk})(H_{jj}-H_{jk}) \\
    & = \sum_{j=1}^{k-1} \Pr(S_j)(1-H_{jk}) + \Pr(S_k)(1-H_{kk}) \\
    & = \left(1-\Pr(S_k)\right)\sum_{j=1}^{k-1} \frac{\Pr(S_j)}{\left(1-\Pr(S_k)\right)}(1-H_{jk}) + \Pr(S_k)(1-H_{kk}) \\
     & = \left(1-\Pr(S_k)\right)\sum_{j=1}^{k-1}a_j(1-H_{jk}) + \Pr(S_k)(1-H_{kk}) = \sum_{j=1}^k\Pr(S_j)(1-H_{kk}) = 1-H_{kk}.
\end{align*}
Here, the second equation holds because by the definition of the constant hypothesis $S_j = S_{jk}, \forall j \in [k]$ and
the fourth and the fifth equations are followed by \Cref{def condition}. On the other hand, by \Cref{lm error decomposition}, $\opt = \sum_{j=1}^k \Pr(S_j)(1-H_{jj})$. Thus, 
\begin{align*}
  1-H_{kk}- \opt =   \err(k) - \opt = \sum_{j=1}^k \Pr(S_j)(1-H_{jj}) + \sum_{j=1}^{k-1} \Pr(A_{jk})(H_{jj}-H_{jk}) - \sum_{j=1}^k \Pr(S_j)(1-H_{jj}) = 2\alpha,
\end{align*}
which gives us that $\opt=1-H_{kk}-2\alpha$. Given any hypothesis $\hat{h}$ output by $\mathcal{A}$ with $\err(\hat{h}) \le \opt+\alpha$, we have $\err(\hat{h}) \le 1-H_{kk}-\alpha$. Thus, $\hat{\err}(\hat{h}) \ge 1-H_{kk}-2\alpha/2$ and $\mathcal{A'}$ will reject the null hypothesis. 
This concludes the proof of \Cref{th test-to-learn}.

\end{proof}

\section{Omitted Proofs from \Cref{sec hard testing}}\label{app hard testing}

\subsection{Proof of \Cref{lm projection}}\label{app projection}

In this section, we present the proof of \Cref{lm projection}. For convenience, we restate \Cref{lm projection} as \Cref{lm projection re}.

\begin{lemma}[Distribution Projection]\label{lm projection re}
    Let $\D$ be a hidden direction distribution family over $\R^d$ and let $D^{A,a}_v \in \D$ be a distribution that is consistent with an instance of multiclass polynomial classification with RCN $(D^{A,a}_v,f^*,H)$. For every $i \in [k]$, 
    \begin{align*}
    D^{A,a}_v(x \mid y =i) = \sum_{j=1}^{k-1} \frac{a_j H_{ji}}{H_{ki}} P^{A_j}_v(x).    
    \end{align*}
\end{lemma}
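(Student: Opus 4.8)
The plan is to compute $D^{A,a}_v(x \mid y = i)$ directly from the sampling description of $D^{A,a}_v$ in \Cref{def 1D}, using Bayes' rule together with the key structural fact (\Cref{prop projection}) that $P^{A_1}_v, \dots, P^{A_{k-1}}_v$ all agree outside $I_{in}$ (more precisely, $A_j(t) = A_1(t)$ for $t \notin I_{in}$, and the $J_j$ partition the mass of the base distributions inside $I_{in}$). First I would write the joint density: a point $x$ with $v\cdot x \in J_j$ is produced only by the $j$-th mixture component (with prior $a_j$) and then receives label $i$ with probability $H_{ji}$; a point $x$ with $v\cdot x \notin I_{in}$ is produced by component $j$ with probability $a_j P^{A_j}_v(x) = a_j P^{A_1}_v(x)$ and then receives label $i$ with probability $H_{ki}$. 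This gives, for $v\cdot x \in J_j$, $D^{A,a}_v(x, y=i) = a_j P^{A_j}_v(x) H_{ji}$, and for $v \cdot x \notin I_{in}$, $D^{A,a}_v(x, y=i) = \big(\sum_{j=1}^{k-1} a_j\big) P^{A_1}_v(x) H_{ki} = P^{A_1}_v(x) H_{ki}$ since $\sum_j a_j = 1$.

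Next I would divide by $\Pr(y = i) = H_{ki}$, which holds by the computation already displayed after \Cref{def condition} (the marginal over $y$ is exactly $h_k$). This yields, for $v\cdot x \in J_j$, $D^{A,a}_v(x \mid y=i) = a_j P^{A_j}_v(x) H_{ji}/H_{ki}$, and for $v\cdot x \notin I_{in}$, $D^{A,a}_v(x\mid y=i) = P^{A_1}_v(x)$. It then remains to check that the claimed formula $\sum_{j=1}^{k-1} \frac{a_j H_{ji}}{H_{ki}} P^{A_j}_v(x)$ matches both cases. On $\{v\cdot x \in J_j\}$: since the $J_\ell$ are disjoint and $A_\ell$ vanishes on $I_{in}\setminus J_\ell$, we have $P^{A_\ell}_v(x) = 0$ for $\ell \neq j$, so the sum collapses to the single term $\frac{a_j H_{ji}}{H_{ki}} P^{A_j}_v(x)$, as needed. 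On $\{v\cdot x \notin I_{in}\}$: by \Cref{prop projection} all $P^{A_j}_v(x)$ equal $P^{A_1}_v(x)$, so the sum becomes $P^{A_1}_v(x)\sum_{j=1}^{k-1}\frac{a_j H_{ji}}{H_{ki}} = P^{A_1}_v(x)\cdot \frac{1}{H_{ki}}\sum_{j=1}^{k-1} a_j H_{ji} = P^{A_1}_v(x)$, using condition \eqref{cond convex} of \Cref{def condition}, namely $\sum_{j=1}^{k-1} a_j H_{ji} = (h_k)_i = H_{ki}$.

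I would also note that $\bigcup_{j\in[k-1]} J_j$ together with $\R \setminus I_{in}$ covers all of $\R$ up to the boundary points of the $J_j$ and the finitely many endpoints of $I_{in}$, which have measure zero under every $P^{A_j}_v$; so the two cases above account for almost every $x$, and the identity of densities holds a.e., which is all that is needed. The main (very mild) obstacle is bookkeeping: being careful that $P^{A_j}_v$ genuinely vanishes off $J_j$ inside $I_{in}$ so the mixture collapses correctly, and that the normalization $\Pr(y=i)=H_{ki}$ is exactly the quantity appearing in the denominator — both of which are immediate from \Cref{def 1D}, \Cref{prop projection}, and condition \eqref{cond convex}. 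No delicate estimate is involved; this is a direct identity verification.
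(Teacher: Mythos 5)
Your proposal is correct and follows essentially the same route as the paper: compute the joint density $D^{A,a}_v(x,y=i)$ by the two cases $v\cdot x\in J_j$ versus $v\cdot x\notin I_{in}$, use that $P^{A_\ell}_v$ vanishes off $J_\ell$ inside $I_{in}$ and that the $A_j$ agree outside $I_{in}$, invoke $h_k=\sum_j a_j h_j$, and divide by $\Pr(y=i)=H_{ki}$. The only cosmetic difference is that the paper massages both cases into the common form $\sum_j a_j P^{A_j}_v(x)H_{ji}$ before dividing, while you divide first and then verify the claimed formula case-by-case; the added remark about boundary points having measure zero is a harmless bit of extra care.
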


\begin{proof}[Proof of \Cref{lm projection}]
We consider the density function of $D^{A,a}_v$ at a fixed point $(x,i)$,    
\begin{align*}
    D^{A,a}_v(x,i) = \sum_{j=1}^{k-1} a_j P^{A_j}_v(x)H_{f^*(x) i} \;.
\end{align*}
By \Cref{def 1D}, we consider two cases for $x$. In the first case, $v\cdot x \in J_\ell$ for some $\ell \in [k-1]$.
By construction of the distribution family $A=(A_1,\dots,A_{k-1})$, $P^{A_j}_v(x)=0, \forall j \neq \ell$. Thus, 
\begin{align*}
    D^{A,a}_v(x,i) = \sum_{j=1}^{k-1} a_j P^{A_j}_v(x)H_{\ell i} = a_\ell P^{A_\ell}_v(x)H_{\ell i}+ \sum_{j \neq \ell} a_j P^{A_j}_v(x)H_{j i} = \sum_{j=1}^{k-1} a_j P^{A_j}_v(x)H_{ji}.
\end{align*}
In the second case, $v\cdot x \not \in I_{in}$ and thus $f^*(x) = k$. In this case, 
\begin{align*}
    D^{A,a}_v(x,i) = \sum_{j=1}^{k-1} a_j P^{A_j}_v(x)H_{k i} = P^{A_1}_v(x)H_{k i} = P^{A_1}_v(x) \sum_{j=1}^{k-1} a_j H_{ji} = \sum_{j=1}^{k-1} a_j P^{A_j}_v(x)H_{ji}.
\end{align*}
Here, the second and the last equation holds because $P^{A_j}_v(x)$ is the same for every $j \in [k]$. The third equation holds because $h_k=\sum_{j=1}^{k-1}h_j$. Since $(D^{A,a}_v,f^*,H)$ satisfies \Cref{def condition}, we know that $\Pr(y=i) = H_{ki}$. Thus, $\forall x \in \R^d$, 
\begin{align*}
    D^{A,a}_v(x \mid y =i) = \sum_{j=1}^{k-1} \frac{a_j H_{ji}}{H_{ki}} P^{A_j}_v(x).
\end{align*}

\end{proof}

\subsection{Proof of \Cref{th hidden family}}\label{app hidden family}

In this section, we present the proof of \Cref{th hidden family}. For convenience, we restate \Cref{th hidden family} as follows.
\begin{theorem}\label{th hidden family re}
    Let $\mathcal{B}(D_0,\D)$ be a correlation testing problem, where $(D_0)_X = N(0,I)$ and $\D$ is a hidden direction distribution family. Suppose there exists some $\nu>0$ such that for $\ell \le t \in \Z_+$, the family of one-dimensional distribution $A_1,\dots,A_{k-1}$ satisfies $ \abs{\E_{x \sim A_i} x^\ell - \gamma_\ell} \le \nu$. Then the following holds: 
    \begin{enumerate} 
        \item \label{prop realize re} Every distribution $D^{A,a}_v \in \D$ is consistent with an instance of multiclass polynomial classification with RCN $(D^{A,a}_v,f^*,H)$ with degree at most $2m$ that satisfies \Cref{def condition}.
        \item \label{prop query re} For any small enough constant $c>0$,
    let $\beta = \max_{i,j}\chi_{N(0,1)}(A_i,A_j)$ and let $\tau:= \nu^2 + c^t\beta$. Any statistical query algorithm that solves $\mathcal{B}(D_0,\D)$ must make a query with accuracy better than $2\sqrt{\tau}$ or make $2^{\Omega_c(d)}\tau/\beta$ queries. 
    \end{enumerate}
\end{theorem}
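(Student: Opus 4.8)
The plan is to prove the two parts separately, with Part~\ref{prop realize re} following essentially from the construction already discussed in the text, and Part~\ref{prop query re} from an application of the standard SQ dimension bound of \Cref{lem:sq-lb} combined with the correlation control afforded by \Cref{lm projection}. For Part~\ref{prop realize re}, I would argue as follows. Since each $J_i$, $i\in[k-1]$, is a union of $m$ disjoint intervals, there exists a univariate polynomial $p_i$ of degree at most $2m$ with $p_i(t)>0$ iff $t\in J_i$ (take $p_i$ to be minus a product of quadratic factors vanishing at the interval endpoints, suitably normalized). Since $I_{in}=\textbf{conv}\bigcup_j J_j$ is a single bounded interval $[\ell,r]$, take $p_k(t)=-(t-\ell)(t-r)$, a degree-$2$ polynomial positive exactly off $I_{in}$; shift all $p_i$ by a large constant if needed so that $p_k$ dominates outside $I_{in}$ and each $p_i$ dominates on $J_i$ (this uses $J_i\cap J_j=\emptyset$). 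Then $f^*(x)=\argmax_{j\in[k]} p_j(v\cdot x)$ equals $i$ on $\{v\cdot x\in J_i\}$ and $k$ on $\{v\cdot x\notin I_{in}\}$, and the marginal of $D^{A,a}_v$ on $x$ is $\sum_{j\le k-1}a_j P^{A_j}_v$; by the sampling rule in \Cref{def 1D}, the conditional law of $y$ given $x$ is exactly $H_{f^*(x),\cdot}$. Finally I would verify \Cref{def condition}: \cref{cond convex} and \cref{cond opt} hold by hypothesis on $H$; \cref{cond margin} holds because $\Pr_{x\sim D_X}(x\in S_i\mid x\notin S_k)=a_i\Pr(v\cdot x\in J_i)/\sum_j a_j\Pr(\cdot)$, and by \Cref{prop disjoint} and \Cref{prop projection} one checks $P^{A_j}_v$ puts the same mass outside $I_{in}$, so the normalization works out to $a_i$.

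For Part~\ref{prop query re}, the plan is to instantiate \Cref{lem:sq-lb} with the near-orthogonal set $V$ from \Cref{lem:near-orthogonal}. The key computation is bounding $\chi_{D_0}(D^{A,a}_v,D^{A,a}_u)$ for $u,v\in V$ and for $u=v$. Writing $D^i_v=D^{A,a}_v(x\mid y=i)$ and using that $(D_0)_y$ and $(D^{A,a}_v)_y$ are both the distribution $h_k$ (i.e.\ $\Pr(y=i)=H_{ki}$), independence of $y$ from $x$ under $D_0$ gives the decomposition
\[
\chi_{D_0}(D^{A,a}_v,D^{A,a}_u)=\sum_{i=1}^k H_{ki}\,\chi_{N(0,I)}(D^i_v,D^i_u).
\]
Now \Cref{lm projection} gives $D^i_v=\sum_{j=1}^{k-1}\frac{a_jH_{ji}}{H_{ki}}P^{A_j}_v$, so by bilinearity of pairwise correlation,
\[
\chi_{N(0,I)}(D^i_v,D^i_u)=\sum_{j,j'}\frac{a_ja_{j'}H_{ji}H_{j'i}}{H_{ki}^2}\,\chi_{N(0,I)}(P^{A_j}_v,P^{A_{j'}}_u).
\]
Here I would invoke the correlation lemma of~\cite{diakonikolas2022near}: if $A_j,A_{j'}$ each match the first $t$ moments of $N(0,1)$ up to error $\nu$, then $|\chi_{N(0,I)}(P^{A_j}_v,P^{A_{j'}}_u)|\le \nu^2+c^t\sqrt{\chi^2(A_j,N(0,1))\chi^2(A_{j'},N(0,1))}\le\nu^2+c^t\beta=\tau$ whenever $|u\cdot v|\le c$, and that $\chi_{N(0,I)}(P^{A_j}_v,P^{A_{j'}}_v)\le\beta$ for the diagonal case (taking $\beta=\max_{i,j}\chi_{N(0,1)}(A_i,A_j)$, noting $\chi^2(A_j,N(0,1))=\chi_{N(0,1)}(A_j,A_j)\le\beta$). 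Summing with the weights $\sum_j a_jH_{ji}=H_{ki}$ collapses the coefficient sums to $1$, yielding $\chi_{D_0}(D^{A,a}_v,D^{A,a}_u)\le\tau$ for $u\ne v$ and $\le\beta$ for $u=v$. Feeding $s=2^{\Omega_c(d)}$, $\gamma=\tau$, $\beta=\beta$, and $\gamma'=\tau$ into \Cref{lem:sq-lb} gives that any SQ algorithm needs a query of tolerance at most $\sqrt{2\tau}\le 2\sqrt\tau$ or at least $s\gamma'/(\beta-\gamma)=2^{\Omega_c(d)}\tau/\beta$ queries (after adjusting constants to absorb the factor $2$).

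The main obstacle I anticipate is the correlation lemma step: one needs the version from~\cite{diakonikolas2022near} that handles \emph{two different} base distributions $P^{A_j}_v$ and $P^{A_{j'}}_u$ with \emph{different hidden directions}, rather than the single-distribution statement, and one must track that the error term is governed by the moment-matching accuracy $\nu$ of \emph{both} marginals and their $\chi^2$-norms via Cauchy--Schwarz. A secondary subtlety is making sure the weighting identity $\sum_{j=1}^{k-1}a_jH_{ji}=H_{ki}$ (which is exactly \cref{cond convex}) is used correctly to collapse the double sum over $(j,j')$, so that the final bound does not pick up a spurious factor of $k$ or of $\max_i 1/H_{ki}$ — this requires that the cross terms be bounded by $\tau$ uniformly and then the nonnegative coefficients sum to a perfect square $\bigl(\sum_j a_jH_{ji}/H_{ki}\bigr)^2=1$. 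Everything else is bookkeeping.
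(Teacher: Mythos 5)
Your Part~1 argument matches the paper's. For Part~2, both you and the paper use the decomposition $\chi_{D_0}(D^{A,a}_v,D^{A,a}_u)=\sum_{i=1}^k H_{ki}\,\chi_{N(0,I)}(D^i_v,D^i_u)$ together with \Cref{lm projection}, but you then diverge. You expand $D^i_v$ and $D^i_u$ bilinearly and land on cross-terms of the form $\chi_{N(0,I)}(P^{A_j}_v,P^{A_{j'}}_u)$, which requires a correlation lemma for \emph{two different} base distributions along \emph{two different} directions; you correctly flag this as the main thing to supply, and it can indeed be extracted from the Hermite-expansion form of the result in \cite{diakonikolas2022near} plus Cauchy--Schwarz. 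The paper instead observes that $D^i_v=P^{B_i}_v$ where $B_i:=\sum_j\frac{a_jH_{ji}}{H_{ki}}A_j$ is itself a one-dimensional distribution, notes that moment-matching to accuracy $\nu$ is preserved under convex combinations, and then applies the stated single-distribution Lemma~13 directly to $B_i$. The bilinear expansion (which is valid because the coefficients sum to one) is used by the paper only to bound the scalar quantity $\chi^2(B_i,N(0,1))\le\beta$, where both arguments share the same direction, so no cross-direction lemma is needed. This is a real simplification: the paper gets away with a strictly weaker black-box ingredient, whereas your route needs the two-distribution, two-direction version and a Cauchy--Schwarz step to absorb the mixed $\chi^2$ factors into $\beta$. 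Both are correct; the paper's collapse-first trick is the cleaner one and worth adopting.
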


\begin{proof}[Proof of \Cref{th hidden family}]
We first prove \Cref{prop realize} in \Cref{th hidden family}.

 Since for each $i \in [k-1]$, $J_i$ is a set of $m$ disjoint intervals, we know there is a degree-$2m$ polynomial $p_i(t): \R \to \R$ such that $p_i(t)>0$ if and only if $t \in J_i $. 
  On the other hand, since $I_{in} = \textbf{conv}\bigcup_{j \in [k-1]}J_j$ is a finite interval, there is a degree-2 polynomial $p_k(t): \R \to \R$ such that $p_k(t)>0$ if and only if $t \not\in I_{in}$. Since $J_i \cap J_j = \emptyset, \forall i \neq j$. We know that for each $j \in [k-1]$, if $v\cdot x \in J_j$, then $j=\argmax\{p_1(v\cdot x),\dots, p_{k}(v \cdot x)\}$ and if $v\cdot x \not\in I_{in}, k=\argmax\{p_1(v\cdot x),\dots, p_{k}(v \cdot x)\}$. In particular, $\Pr(v \cdot x \in I_{in} \setminus \bigcup_j J_j)=0$ by the construction of the hidden direction distribution family.
  Thus, $D^{A,a}_v$ is consistent with an instance of multiclass polynomial classification with RCN $(D^{A,a}_v,f^*,H)$, where the marginal distribution is $\sum_{j=1}^{k-1}a_j P_v^{A_j}$ and the ground truth hypothesis $f^*(x) = \argmax\{p_1(v\cdot x),\dots, p_{k}(v \cdot x)\}$. In particular, by the definition of $(D^{A,a}_v,f^*,H)$, it satisfies \Cref{def condition}.

  Next, we prove \Cref{prop query} in \Cref{th hidden family}. Our proof strategy is to make use of \Cref{lem:sq-lb}. To do this, we will bound $\chi_{D_0}(D^{A,a}_v,D^{A,a}_u)$ for $v,u \in S$ for a pair of unit vectors $u,v$.
  For convenience, we mention the following lemma that will be used in the proof.

  \begin{lemma}[Lemma 13 in \cite{diakonikolas2022near}]\label{lm correlation massart}
    Suppose there exists some $\nu>0$ such that for $\ell \le t \in \Z_+$, a univariate distribution $A$ satisfies $ \abs{\E_{x \sim A} x^\ell - \gamma_\ell} \le \nu$, then for every $u,v \in \R^d$, with $\abs{u\cdot v}$ less than a sufficiently small constant, we have 
    \begin{align*}
        \chi_{N(0,I)}(P^A_v,P^A_u) \le \abs{u\cdot v}^t \chi^2(A,N(0,1))+ \nu^2.
    \end{align*}
\end{lemma}

We start by upper-bounding the pairwise correlation $\chi_{D_0}(D^{A,a}_v,D^{A,a}_u)$.

By \Cref{lm projection}, we know that for each $i \in [k]$, 
\[D^i_v = \sum_{j=1}^{k-1}\frac{a_j H_{ji}}{H_{ki}} P^{A_j}_v(x) = P^{\sum_{j=1}^{k-1}\frac{a_j H_{ji}}{H_{ki}} A_j}_v(x) \;.\] 
Since for each $j \in [k-1]$, 
$\abs{\E_{x\sim A_j}x^\ell-\gamma_\ell} \le \nu$ and $\sum_{j=1}^{k-1}\frac{a_j H_{ji}}{H_{ki}}=1$, we know that 
\[\abs{\E_{x\sim \sum_{j=1}^{k-1}\frac{a_j H_{ji}}{H_{ki}} A_j}x^\ell-\gamma_\ell} \le \nu,\]
for $\ell \le t$. 
Thus, we obtain
\begin{align*}
    \chi_{D_0}(D^{A,a}_v,D^{A,a}_u) & = \sum_{i=1}^k H_{ki} \chi_{D_0 \mid y=i} \left(D^{A,a}_v(x \mid y =i),D^{A,a}_u(x \mid y =i)\right) 
     = \sum_{i=1}^k H_{ki} \chi_{N(0,I)} \left(D^i_v,D^i_u\right) \\
    & \le \sum_{i=1}^k H_{ki} \left(\nu^2 + \abs{v\cdot u}^t \chi^2\left(\sum_{j=1}^{k-1}\frac{a_j H_{ji}}{H_{ki}} A_j ,N(0,1)\right) \right) \\
    & = \nu^2 + \abs{v\cdot u}^t \sum_{i=1}^k H_{ki} \chi^2\left(\sum_{j=1}^{k-1}\frac{a_j H_{ji}}{H_{ki}} A_j ,N(0,1)\right)\\
    & \le \nu^2 + \abs{v\cdot u}^t \beta \;.
\end{align*}
Here the first inequality holds because of \Cref{lm correlation massart}, and the last inequality is shown as follows
\begin{align*}
    \chi^2\left(\sum_{j=1}^{k-1}\frac{a_j H_{ji}}{H_{ki}} A_j ,N(0,1)\right) &= \int_{-\infty}^\infty \frac{\sum_{j=1}^{k-1} \frac{a_j H_{ji}}{H_{ki}} A_j(x) \sum_{\ell=1}^{k-1} \frac{a_\ell H_{\ell i}}{H_{ki}} A_\ell(x)}{G(x)} dx -1 \\
    &= \sum_{j=1}^{k-1} \frac{a_j H_{ji}}{H_{ki}} \sum_{\ell=1}^{k-1} \frac{a_\ell H_{\ell i}}{H_{ki}} \left( \int_{-\infty}^\infty \frac{A_j(x) A_\ell(x)}{G(x)} dx -1\right) \\
    & = \sum_{j=1}^{k-1} \frac{a_j H_{ji}}{H_{ki}} \sum_{\ell=1}^{k-1}\frac{a_\ell H_{\ell i}}{H_{ki}} \chi_{N(0,1)} \left(A_i,A_j \right) \le \beta.
\end{align*}
Thus, for every $u,v \in \mathbb{S}^{d-1}$ such that $\abs{u\cdot v} \le c$, we have $\chi_{D_0}(D^{A,a}_v,D^{A,a}_u) \le \nu^2 + c^{-k}\beta=\tau$.

Similarly, we upper bound $\chi^2\left(D^{A,a}_u),N(0,I)\right)$ as follows.

\begin{align*}
    \chi^2\left(D^{A,a}_u,N(0,I)\right) & = \sum_{i=1}^k H_{ki} \chi_{D_0 \mid y=i} \left(D^{A,a}_v(x \mid y =i),D^{A,a}_v(x \mid y =i)\right) 
   \\
   &= \sum_{i=1}^k H_{ki} \chi^2 \left(D^i_v,N(0,I)\right) 
    = \sum_{i=1}^k H_{ki} \chi^2\left(\sum_{j=1}^{k-1}\frac{a_j H_{ji}}{H_{ki}} A_j ,N(0,1)\right) \le \beta \;.
\end{align*}
By \Cref{lem:near-orthogonal}, for any small constant $c>0$, there exists a set $S$ of $2^{\Omega_c(d)}$ unit vectors such that for every $u,v \in S$, $\abs{u\cdot v} \le c$. Thus, $\mathrm{SD}(\mathcal{B},\gamma,\beta) = 2^{\Omega_c(d)}$. By \Cref{lem:sq-lb}, we conclude the proof.

\end{proof}

\subsection{Proof of \Cref{prop distribution}}\label{app proof distribution}

In this section, we present the full proof of \Cref{prop distribution}. For convenience, we restate \Cref{prop distribution} as \Cref{prop distribution re}.

\begin{proposition}\label{prop distribution re}
The univariate distributions $A_1,\dots,A_{k-1}$ constructed in \Cref{def distribution} satisfy
    \begin{enumerate} 
    \item $\exists$ a set of $m$ disjoint intervals $J_i, i \in [k-1]$ such that $A_i(x)>0$, for $x \in J_i$ and $A_i(x)=0$, for $x \in I_{in} \setminus J_i, I_{in} = \textbf{conv}\bigcup_{j \in [k-1]}J_j$.
    \item $\forall x\in \R \setminus I_{in}, A_i(x)=A_j(x), \forall i,j \in [k-1]$.
        \item For $i,j \in [k-1]$, $\chi_{N(0,1)} \left(A_i,A_j\right) \le O(\delta/\xi)^2$.
        \item For $t \in \N$ and for $i \in [k-1]$, $ \abs{\E_{x \sim A_i} x^t - \gamma_t} \le O(t!)\exp(-\Omega(1/\delta^2))+ 4(k-1)\xi(1+2m\delta)^t.$
    \end{enumerate}
\end{proposition}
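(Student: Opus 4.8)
The plan is to verify the four claims in turn. Claims~1 and~2 are essentially built into the construction of \Cref{def distribution}, claim~3 is a short $\chi^2$ computation, and the substance is claim~4, whose two error terms correspond respectively to (a) the moment matching of the base distribution $A_1=G^{(n)}_{\delta,\xi}$ against $N(0,1)$, and (b) the perturbation caused by the tiny translations defining $A_2,\dots,A_{k-1}$. Throughout I write $I_n=[n\delta-\xi,n\delta+\xi]$ and $W_i=\{x:|x|\le m\delta+(4i-3)\xi\}$, so that inside $W_i$ we have $A_i(x)=A_1(x+4(i-1)\xi)$ and outside $W_i$ we have $A_i\equiv A_1$.

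\emph{Claims 1, 2 and 3.} Inside $W_i$, $A_i$ is supported on the translates $I_n-4(i-1)\xi$. The hypotheses $\delta>4(k-1)\xi$ and $4(k-1)\xi<1$ ensure that, as $n$ ranges over $\Z$ and $i$ over $[k-1]$, these intervals are pairwise disjoint (consecutive $I_n$ are $\delta$ apart, distinct shifts differ by a positive multiple of $4\xi>2\xi$) and that each endpoint $\pm(m\delta+(4i-3)\xi)$ of $W_i$ lies in a gap of $A_i$. Hence one can fix a bounded interval $I_{in}\subseteq\bigcap_i W_i$ meeting exactly $m$ of the translated intervals of each $A_i$; putting $J_i:=\mathrm{supp}(A_i)\cap I_{in}$ yields $m$ disjoint intervals with $A_i>0$ on $J_i$ and $A_i=0$ on $I_{in}\setminus J_i$, the $J_i$ pairwise disjoint and $\textbf{conv}\bigcup_i J_i=I_{in}$, which is claim~1; claim~2 follows since off $I_{in}$ all $A_i$ agree with $A_1$. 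For claim~3 with $i\ne j$, by the above the supports of $A_i$ and $A_j$ meet only where both equal $A_1$, so $\chi_{N(0,1)}(A_i,A_j)\le\chi^2(A_1,N(0,1))=\frac{\delta}{2\xi\|G_{\delta,\xi}\|_1}-1$; comparing the defining sum of $G_{\delta,\xi}$ with $\int G$ gives $\|G_{\delta,\xi}\|_1=\Theta(\max(1,\delta))$, so this is $O(\delta/\xi)\le O(\delta/\xi)^2$ as $\delta/\xi>4(k-1)\ge1$. The case $i=j$ is the same up to a bounded factor $e^{(4(i-1)\xi)^2/2}\le e$ from the ratio $G(u)/G(u-4(i-1)\xi)$ after undoing the translation.

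\emph{Claim 4.} Split $|\E_{A_i}x^t-\gamma_t|\le|\E_{A_1}x^t-\gamma_t|+|\E_{A_i}x^t-\E_{A_1}x^t|$. For the first term, Poisson summation gives $\frac{\delta}{2\xi}\Ind\{x\in\bigcup_n I_n\}=\sum_{m\in\Z}\mathrm{sinc}(2\pi m\xi/\delta)\,e^{2\pi imx/\delta}$, so $G_{\delta,\xi}(x)=G(x)\big(1+\sum_{m\ne0}\mathrm{sinc}(2\pi m\xi/\delta)e^{2\pi imx/\delta}\big)$ and $\E_{G_{\delta,\xi}}[x^t]-\gamma_t=\sum_{m\ne0}\mathrm{sinc}(2\pi m\xi/\delta)\int x^tG(x)e^{2\pi imx/\delta}\,dx$, where each integral has magnitude $|\mathrm{He}_t(2\pi m/\delta)|\,e^{-(2\pi m/\delta)^2/2}$; bounding the Hermite polynomial (treating $t\lesssim1/\delta$ and $t\gtrsim1/\delta$ separately so that either the Gaussian factor or the factor $t!$ dominates) bounds the $m\ne0$ sum by $O(t!)e^{-\Omega(1/\delta^2)}$, and the $t=0$ instance controls $\big|\|G_{\delta,\xi}\|_1-1\big|$ so that normalizing to $A_1$ costs only a constant factor (for $\delta$ large the bound is vacuous since $e^{-\Omega(1/\delta^2)}=\Omega(1)$); this is the moment-matching estimate for interval-restricted Gaussians used in \cite{diakonikolas2022near,nasser2022optimal}. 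For the second term, $A_i$ and $A_1$ agree off $[-(m\delta+(4i-3)\xi),m\delta+(4i-3)\xi]\subseteq[-(1+2m\delta),1+2m\delta]$, and on it $A_i$ is $A_1$ with each $I_n$ translated by $\le4(k-1)\xi$ and unchanged $G$-mass; writing $\E_{A_i}x^t-\E_{A_1}x^t$ as a sum over these intervals, the one near $n\delta$ contributes a change of at most $(\text{its mass})\cdot\sup_{|x|\le1+2m\delta}|x^t-(x\pm4(k-1)\xi)^t|\le(\text{its mass})\cdot4(k-1)\xi\,t(1+2m\delta)^{t-1}$ by the mean value theorem, and summing (total mass $\le1$) with $t(1+2m\delta)^{t-1}\le(1+2m\delta)^t$ gives $4(k-1)\xi(1+2m\delta)^t$; the $O(1)$ intervals straddling an endpoint of $W_i$ have $G$-mass $O(\delta e^{-m^2\delta^2/2})$ and lie at $|x|\le1+2m\delta$, contributing $O(\delta e^{-m^2\delta^2/2}(1+2m\delta)^t)$, of the same order.

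\emph{Main obstacle.} The delicate point is making the aliasing estimate in the first term of claim~4 quantitative: exploiting that $\widehat G$ is Gaussian so the spectral copies at frequencies $2\pi m/\delta$ are damped by $e^{-\Omega(m^2/\delta^2)}$, uniformly beating the $O(t!)$-type growth of the Hermite polynomials, together with the boundary bookkeeping in the second term and in claims~1--3 needed to justify the ``disjoint bumps / small translations'' picture at the endpoints of the windows $W_i$.
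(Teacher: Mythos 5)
Your overall plan matches the paper's: Claims~1--2 follow from the explicit construction of the $J_i$, Claim~3 is a $\chi^2$ computation that reduces to bounding $\chi^2(A_1,N(0,1))$, and Claim~4 splits into the moment-matching of $A_1$ against $N(0,1)$ (which the paper, like you, imports from \cite{nasser2022optimal}) plus the perturbation cost of the shifts. However, there are two concrete errors in the two nontrivial estimates.

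\textbf{Claim 4, second term.} You bound $\sup_{|x|\le 1+2m\delta}|x^t-(x\pm 4(k-1)\xi)^t|$ by the mean value theorem as $4(k-1)\xi\, t(1+2m\delta)^{t-1}$, and then invoke $t(1+2m\delta)^{t-1}\le(1+2m\delta)^t$. That last inequality is false whenever $t>1+2m\delta$, which is exactly the regime the paper needs: in the applications one takes $m\delta=O(1)$ or $m\delta=O(\sqrt{\log k})$ while $t$ ranges up to $N^{0.99}$. The paper's route is to expand $x^t-(x-c)^t$ with the binomial theorem, bound $\bigl|\sum_{\ell\ge 1}\binom{t}{\ell}(-c)^\ell x^{t-\ell}\bigr|\le \sum_{\ell\ge 1}\binom{t}{\ell}c^\ell|x|^{t-\ell}$, and then use the hypothesis $c=4(k-1)\xi<1$ to replace $c^\ell$ by $c$ for every $\ell\ge 1$, giving $\le c\,(1+|x|)^t\le 4(k-1)\xi(1+2m\delta)^t$ with no extraneous factor of $t$. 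Your MVT estimate can be salvaged (the extra $t$ is harmless in the downstream applications because $\xi$ is chosen exponentially small), but as written it does not prove the stated bound.

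\textbf{Claim 3, $i=j$ case.} You assert a uniform bound $G(u)/G(u-c)\le e^{c^2/2}$ ``after undoing the translation.'' But $G(u)/G(u-c)=e^{-uc+c^2/2}$ is unbounded over $u<0$ (and $u$ runs over the full symmetric support $\bigcup_n I_n$, whose extent $m\delta$ is not assumed small), so this ratio is not controlled by $e^{c^2/2}\le e$ alone. The paper instead completes the square in the squared ratio: $G(u)^2/G(u-c)=G(u+c)\,e^{c^2}$, a Gaussian density times the constant $e^{c^2}$, which upon integrating gives $\int_{I_{in}}A_i^2/G\le\bigl(\tfrac{\delta}{2\xi\|G_{\delta,\xi}\|_1}\bigr)^2 e^{c^2}=O(\delta/\xi)^2$; the factor $e^{c^2}\le e$ is exactly where the hypothesis $4(k-1)\xi<1$ enters. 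Your bound for $i\ne j$ (the supports are disjoint inside $I_{in}$, so the cross-correlation reduces to the outside region) is correct and matches the paper's.

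The remaining pieces (identification of $J_i$ and $I_{in}$, the disjointness from $\delta>4(k-1)\xi$, and the Poisson-summation sketch of the $A_1$ moment-matching estimate) are consistent with the paper, which simply cites \cite{nasser2022optimal} for the latter.
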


Before presenting the proof, 
it will be convenient to recall the following property proved by \cite{nasser2022optimal}.
\begin{fact}\label{fact measure}
    For $\delta,\xi>0$, $\abs{\norm{G_{\delta,\xi}}_1-1} \le \exp\left(-\Omega(1/\delta^2)\right), \abs{1/\norm{G_{\delta,\xi}}_1 -1} \le O(1)\exp\left(-\Omega(1/\delta^2)\right)$.
\end{fact}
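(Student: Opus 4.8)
The plan is to recognize $\norm{G_{\delta,\xi}}_1$ as a step-$\delta$ quadrature sum of a smooth, rapidly decaying function, and to control the sum-versus-integral discrepancy by the Poisson summation formula. First, by Tonelli's theorem (the summands are nonnegative),
\[
\norm{G_{\delta,\xi}}_1 = \littlesum_{n\in\Z} \frac{\delta}{2\xi}\int_{n\delta-\xi}^{n\delta+\xi} G(x)\,\d x = \littlesum_{n\in\Z}\delta\, \wt G_\xi(n\delta),
\]
where $\wt G_\xi \eqdef G * U_\xi$ is the convolution of the standard Gaussian density $G$ with the uniform density $U_\xi(x)=\frac{1}{2\xi}\Ind(|x|\le\xi)$; indeed $\wt G_\xi(y)=\frac{1}{2\xi}\int_{y-\xi}^{y+\xi}G(x)\,\d x$. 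The function $\wt G_\xi$ is smooth and Schwartz (a convolution of a Schwartz function with a bounded compactly supported function), so the Poisson summation formula applies to $x\mapsto \delta\,\wt G_\xi(\delta x)$, yielding
\[
\norm{G_{\delta,\xi}}_1 = \littlesum_{n\in\Z}\delta\,\wt G_\xi(\delta n) = \littlesum_{k\in\Z} \widehat{\wt G_\xi}(k/\delta),
\]
using the Fourier normalization $\widehat f(\eta)=\int f(x)e^{-2\pi i x\eta}\,\d x$.

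The next step is to evaluate the transform explicitly: $\widehat{\wt G_\xi}=\widehat G\cdot\widehat{U_\xi}$, where $\widehat G(\eta)=e^{-2\pi^2\eta^2}$ (the characteristic function of $N(0,1)$ evaluated at $2\pi\eta$) and $\widehat{U_\xi}(\eta)=\frac{\sin(2\pi\xi\eta)}{2\pi\xi\eta}$, a sinc factor that is bounded by $1$ in absolute value and equals $1$ at $\eta=0$. Hence the $k=0$ term of the Poisson sum is exactly $\widehat G(0)\cdot 1 = 1$, while for $k\neq 0$ each term is bounded in absolute value by $e^{-2\pi^2 k^2/\delta^2}$. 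This gives
\[
\bigl|\norm{G_{\delta,\xi}}_1 - 1\bigr| \le 2\littlesum_{k\ge 1} e^{-2\pi^2 k^2/\delta^2} \le 2\littlesum_{k\ge 1} e^{-2\pi^2 k/\delta^2} = \frac{2e^{-2\pi^2/\delta^2}}{1-e^{-2\pi^2/\delta^2}} = e^{-\Omega(1/\delta^2)},
\]
which is the first claimed bound, and is uniform in $\xi>0$ (this uniformity is precisely what the $\xi$-independent bound $|\widehat{U_\xi}|\le 1$ buys us). For the second bound, once $\delta$ is small enough that the above forces $\norm{G_{\delta,\xi}}_1 \ge 1/2$, I would write $\bigl|1/\norm{G_{\delta,\xi}}_1 - 1\bigr| = \bigl|\norm{G_{\delta,\xi}}_1 - 1\bigr|/\norm{G_{\delta,\xi}}_1 \le 2\bigl|\norm{G_{\delta,\xi}}_1 - 1\bigr| = O(1)\,e^{-\Omega(1/\delta^2)}$, and absorb the finitely many remaining (larger-$\delta$) values of $\delta$ into the $O(1)$ constant.

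The only genuinely delicate point is the justification of the Poisson summation step and keeping the Fourier conventions consistent — in particular verifying that $\widehat{U_\xi}$ is bounded by $1$ independently of $\xi$, which is what makes the final estimate uniform in $\xi$; the remaining estimates are a routine geometric-series bound. I do not anticipate any real obstacle here: this is an instance of the standard fact that the trapezoidal/midpoint rule for the integral of a Schwartz function has error exponentially small in the inverse step size. Alternatively, one may simply invoke the corresponding computation already carried out in \cite{nasser2022optimal}.
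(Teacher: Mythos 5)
The paper does not give its own proof of this fact; it is imported verbatim as a known property from \cite{nasser2022optimal}, so there is no internal argument to compare against. Your Poisson-summation proof is correct and self-contained: rewriting $\norm{G_{\delta,\xi}}_1$ as the step-$\delta$ lattice sum of the Schwartz function $\wt G_\xi = G * U_\xi$, observing that $\widehat{\wt G_\xi} = \widehat G\cdot\widehat{U_\xi}$ with $\widehat{U_\xi}(0)=1$ and $|\widehat{U_\xi}|\le 1$ uniformly in $\xi$, and then summing the $k\neq 0$ terms geometrically, gives exactly the claimed $\xi$-uniform $e^{-\Omega(1/\delta^2)}$ bound; this is a cleaner route than an Euler--Maclaurin/Riemann-sum estimate would be. Two small wrinkles are worth flagging. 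First, as literally written the fact cannot hold for all $\delta>0$: your own intermediate bound $2e^{-2\pi^2/\delta^2}/(1-e^{-2\pi^2/\delta^2})$ diverges as $\delta\to\infty$, and indeed $\norm{G_{\delta,\xi}}_1$ itself is of order $\delta$ for $\delta\gg 1$, so the $\exp(-\Omega(1/\delta^2))$ form should be read with $\delta$ bounded above (which is the regime the paper uses, $\delta=1/\sqrt{N}$) or supplemented by the constraint $\delta > 4(k-1)\xi$, $4(k-1)\xi<1$ from the construction, under which $\norm{G_{\delta,\xi}}_1 \gtrsim \delta\, G(1/8)$ for large $\delta$. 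Second, ``absorb the finitely many remaining (larger-$\delta$) values'' is a slip of the tongue since $\delta$ ranges over a continuum, but the intent is right: on that complementary range $\norm{G_{\delta,\xi}}_1$ is bounded below by a positive constant, so the second inequality holds with a suitable $O(1)$ factor anyway.
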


\begin{proof}[Proof of \Cref{prop distribution}]
    We first prove the first two properties. For $i \in [k-1]$, we define $J_i:= \bigcup_{-m \le n \le m}[n\delta-(4i-3)\xi,n\delta-(4i-5)\xi]$. Notice that $I_{in} = \textbf{conv}\bigcup_{j \in [k-1]}J_j=[-m\delta-(4k-7)\xi,m\delta+\xi]$. By construction, $A_i(x) = A_j(x)$ if $x \not \in I_{in}$. On the other hand, consider $x \in I_{in}$. For $i=1$, and $x \in I_{in}$, $A_1(x)>0$ if and only if $x \in J_1$. By construction for $i \in [k-1]$ and $x \in I_{in}$, $A_i(x)>0$ if and only if $(x+4(i-1))\xi \in J_1$, which is equivalent to $x \in J_i$. Since $\delta>4(k-1)\xi$, we know that $J_i \cap J_j = \emptyset, \forall i \neq j$. This implies that for every $i \in [k-1]$, and $x\in I_{in}$ $A_1(x)>0$ if $x \in J_i$ and $A_i(x)=0$ if $x \in J_j$.

    We next prove the third property. It is convenient to mention the fact that $\chi^2(A_1(x), N(0,1))\le O(\delta/\xi)^2$, proved in \cite{nasser2022optimal}.
    For any pair of $i,j \in [k-1]$, we have 
    \begin{align*}
        \chi_{N(0,1)}\left(A_i,A_j\right) & = \int_{-\infty}^\infty \frac{A_i(x)A_j(x)}{G(x)} dx -1 = \int_{x \not \in I_{in}} \frac{G^2(x)}{G(x)} dx + \int_{x \in I_{in}} \frac{A_i(x)A_j(x)}{G(x)} dx -1 \\
        & \le \chi^2(A_1(x), N(0,1)) + \int_{x  \in I_{in}} \frac{A_i(x)A_j(x)}{G(x)} dx  \\ & = O(\frac{\delta}{\xi})^2 + \int_{x \in I_{in}} \frac{A_i(x)A_j(x)}{G(x)} dx.
    \end{align*}
    Notice that if $i \neq j$, then for each $x \in I_{in}, A_i(x)A_j(x)=0$, which implies that 
    \begin{align*}
        \int_{x \in I_{in}} \frac{A_i(x)A_j(x)}{G(x)} dx =0.
    \end{align*}
    It remains to consider the case where $i=j$. In this case, we have 
    \begin{align*}
        \int_{x \in I_{in}} \frac{A_i^2(x)}{G(x)} dx &\le 2\sum_{0 \le n \le m} \int_{n\delta - (4i-3)\xi}^{n\delta-(4i-5)\xi} \frac{A_i^2(x)}{G(x)} dx   = 2 \left(\frac{\delta}{\xi}\right)^2 \frac{1}{\norm{G_{\delta,\xi}}_1} \sum_{0 \le n \le m} \int_{n\delta - (4i-3)\xi}^{n\delta-(4i-5)\xi} \frac{G^2(x+4(i-1)\xi)}{G(x)} dx \\
        & = 2 \left(\frac{\delta}{\xi}\right)^2 \frac{1}{\norm{G_{\delta,\xi}}_1} \sum_{0 \le n \le m} \int_{n\delta - (4i-3)\xi}^{n\delta-(4i-5)\xi} \frac{1}{\sqrt{2\pi}} \exp\left(-\frac{(x+4(i-1)\xi)^2}{2}\right) \exp((4(i-1)\xi)^2) \\
        & \le 2 \left(\frac{\delta}{\xi}\right)^2 \frac{1}{\norm{G_{\delta,\xi}}_1} \int_{x \in \R} \frac{1}{\sqrt{2\pi}} \exp\left(-\frac{(x+4(i-1)\xi)^2}{2}\right) \exp((4(i-1)\xi)^2) \le O(1) \;.
    \end{align*}
Here, the last inequality holds when $\xi \le 1/k$. Thus, for $i,j \in [k-1]$, $\chi_{N(0,1)} \left(A_i,A_j\right) \le O(\delta/\xi)^2$.

Finally, we prove the last property. It is convenient to mention the fact that $\abs{\E_{x\sim A_1} x^t- \gamma_t} \le O(t!)\exp(-\Omega(1/\delta^2))$, proved in \cite{nasser2022optimal}, which implies that it suffices to upper bound $\abs{\E_{x\sim A_1} x^t- \E_{x \sim A_i} x^t}$. We have 
\begin{align*}
    \abs{\E_{x\sim A_1} x^t- \E_{x \sim A_i} x^t} & = \abs{ \int_{x \in I_{in}} x^t d A_1(x) - \int_{x \in I_{in}} x^t d A_i(x) } = \abs{ \int_{x \in I_{in}} x^t d A_1(x) - \int_{x \in I_{in}} (x-4(i-1)\xi)^t d A_1(x) } \\
    & \le \sup_{x \in I_{in}} \left((x-4(i-1)\xi)^t-x^t\right) \le \sum_{\ell=1}^t \binom{t}{\ell}(4(k-1)\xi)^\ell x^{t-\ell} \le 4(k-1)\xi \sum_{\ell=1}^t \binom{t}{\ell} x^{t-\ell}\\
    &\le 4(k-1)\xi (1+\abs{x})^t \le 4(k-1)\xi (1+2m\delta)^t.
\end{align*}
    This concludes the proof of \Cref{prop distribution}.
\end{proof}

\section{Omitted Proofs from \Cref{sec hard main}} \label{apd sec hard main}
In this section, we provide the omitted proofs in \Cref{sec hard main}.

\subsection{Proof of \Cref{th additive}} \label{app additive}
We present the full proof of \Cref{th additive}.
For convenience, we restate \Cref{th additive} below.
\begin{theorem}\label{th additive re}
There is a matrix $H \in [0,1]^{3\times3}$ with $H_{ii}-H_{ij} \ge 0.1, \forall i \neq j \in [3]$ such that any SQ algorithm $\mathcal{A}$ that learns multiclass linear classifiers with random classification noise specified by $H$ on $\R^d$ to error $\opt+\epsilon, \epsilon \in (0,1)$ either 
\begin{enumerate} 
    \item [(a)] requires at least $d^{\Omega(\log^{1.98}(d)/\eps^{1.98})}$ queries, or
    \item [(b)] requires a query of accuracy at least $d^{-\Omega(\log^{1.98}(d)/\eps^{1.98})}$. 
\end{enumerate}
\end{theorem}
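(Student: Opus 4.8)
The plan is to instantiate the general machinery of \Cref{th test-to-learn}, \Cref{th hidden family}, and \Cref{prop distribution} with the explicit $3\times 3$ noise matrix $H$ of \eqref{eq example} and a carefully tuned choice of the parameters $m,\delta,\xi$ of \Cref{def distribution} (and of the working dimension $N$). Since $h_3=(h_1+h_2)/2$, this $H$ satisfies \cref{cond convex} and \cref{cond opt} with $a=(1/2,1/2)$, one checks $\min_{i\neq j}(H_{ii}-H_{ij})=0.1$, and $H_{11}-H_{13}=H_{22}-H_{23}=0.2$. Taking $A_1,A_2$ from \Cref{def distribution} (parameters $m,\delta,\xi$ to be fixed) and forming the hidden-direction family $\D=\{D^{A,a}_v\}_{v\in\mathbb{S}^{N-1}}$ over $\R^N\times[3]$ from \Cref{def 1D}, \Cref{prop realize} gives that each $D^{A,a}_v$ is consistent with a degree-$\le 2m$ multiclass polynomial classification instance with RCN $(D^{A,a}_v,f^*,H)$ satisfying \Cref{def condition}; condition \cref{cond margin} holds by construction, since the two branches $P^{A_1}_v,P^{A_2}_v$ are mixed with weights $a_1=a_2=1/2$, $S_i=\{x:v\cdot x\in J_i\}$, and $\Pr_{z\sim A_i}(z\in I_{in})$ is independent of $i\in\{1,2\}$.

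With this in place I would assemble the reduction. The Veronese embedding $V(x)=(x,1)^{\otimes 2m}$ turns degree-$\le 2m$ polynomial classification on $\R^N$ into linear classification on $\R^d$ with $d=\binom{N+2m}{2m}=N^{\Theta(m)}$: a linear rule on $\R^d$ restricted to $V(\R^N)$ is exactly a degree-$\le 2m$ polynomial rule on $\R^N$, so $\opt$ is unchanged, each SQ query of the $\R^d$-learner is answered by a query to the $\R^N$-oracle of the same tolerance, and the output linear rule pulls back to a polynomial rule of the same error. Hence a distribution-free SQ learner for MLC on $\R^d$ with noise $H$ achieving error $\opt+\epsilon$ yields, with no loss in query complexity or tolerance, a learner for the instances in $\D$. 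Apply \Cref{th test-to-learn} with this $H$: there $2\alpha=\sum_{j=1}^{2}\Pr(S_j)(H_{jj}-H_{j3})=0.2\,\Pr_{z\sim A_1}(z\in I_{in})$, and since $A_1$ approximates $N(0,1)$ while $I_{in}\subseteq[-m\delta-O(\xi),m\delta+O(\xi)]$, one has $\Pr_{z\sim A_1}(z\in I_{in})=\Theta(m\delta)$; choosing $m\delta=\Theta(\epsilon)$ with a large enough constant forces $\alpha\ge\epsilon$, so the learner for $\D$ becomes, after one extra query, an SQ solver for the correlation test $\mathcal B(D_0,\D)$ with $(D_0)_X=N(0,I_N)$. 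Finally, \Cref{prop query} together with \Cref{prop distribution} shows this test requires $2^{\Omega(N)}\tau/\beta$ queries or a query of tolerance below $2\sqrt{\tau}$, where $\beta=O(\delta/\xi)^2$, $\nu=O(t!)\,e^{-\Omega(1/\delta^2)}+O(k\xi)(1+2m\delta)^t$ is the moment-matching error at degree $t$, and $\tau=\nu^2+c^t\beta$.

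The remaining step, and where I expect the real difficulty, is the parameter optimization: several desiderata pull against each other. Decreasing $\delta$ lets one match more moments (up to $t=\widetilde\Theta(1/\delta^2)$ moments within $\nu=e^{-\poly(1/\delta)}$, provided $\xi$ is taken exponentially small but not so small that $c^t\beta=c^t\,O(\delta/\xi)^2$ blows up), which drives $\tau$ down and the SQ-dimension lower bound up; but $m\delta$ is pinned to $\Theta(\epsilon)$, so small $\delta$ forces the degree $m=\Theta(\epsilon/\delta)$ large, inflating the ambient dimension to $d=N^{\Theta(m)}$ and thus shrinking $N$ relative to $d$. Balancing these effects leads to the choice $m=\widetilde\Theta(\epsilon\sqrt{N})$, i.e.\ $\delta=\widetilde\Theta(1/\sqrt{N})$ and $1/\delta^2=\widetilde\Theta(N)$, whence $\log d=\widetilde\Theta(\epsilon\sqrt{N})$ and $N=\widetilde\Theta(\log^2 d/\epsilon^2)$. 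Plugging back (and tuning $\xi$ so that $\tau=d^{-\widetilde\Omega(\cdots)}$ while $\tau/\beta$ stays large enough that $2^{\Omega(N)}\tau/\beta$ remains super-polynomial in $d$), both the query count and $1/\sqrt{\tau}$ become $d^{\widetilde\Omega(\log^{1.98}(d)/\epsilon^{1.98})}$, the exponent $1.98$ rather than $2$ absorbing the logarithmic slack in the attainable moment-matching degree in \Cref{prop distribution} (one really matches only $\widetilde\Theta(1/\delta^2)$, not $\Theta(1/\delta^2)$, moments before the $t!$ factor in $\nu$ overtakes $e^{-\Omega(1/\delta^2)}$). Carrying out this bookkeeping carefully, tracking every hidden constant, is the crux; everything else is composing the black boxes above.
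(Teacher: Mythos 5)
Your proposal follows the paper's proof essentially verbatim: same noise matrix from \eqref{eq example}, same choice $a=(1/2,1/2)$ with $A_1,A_2$ from \Cref{def distribution}, the same Veronese reduction from degree-$O(m)$ polynomial classification on $\R^N$ to linear classification on $\R^d$, the reduction via \Cref{th test-to-learn} to the correlation test, and the same parameter balance $\delta=1/\sqrt{N}$, $m=\epsilon\sqrt{N}$, $\xi=\exp(-\Theta(N^{0.99}))$ yielding $\log d=\Theta(\epsilon\sqrt{N}\log N)$. The bookkeeping you defer at the end is exactly what the paper carries out in \Cref{app additive}, and your identification of the $t!$-versus-$e^{-\Omega(1/\delta^2)}$ tension as the source of the $0.99$ exponent on $N$ (hence $1.98$ in the final bound) matches the paper's reasoning.
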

\begin{proof}
Consider following noise matrix, 
\begin{align*}
    H = \begin{pmatrix}
        0.6 & 0 & 0.4 \\
        0 & 0.6 & 0.4 \\
        0.3 & 0.3 & 0.4
    \end{pmatrix}.
\end{align*}
Notice that one can
reduce learning polynomial classifiers with RCN to MLC with RCN using the Veronese mapping.
Suppose we have an algorithm $\A$ for solving multiclass linear classification problems. Then given an input distribution $D$ of $(x,y)$ over $\R^N\times [k]$ consistent with an instance of  
multiclass degree-$m$ polynomial classification problem. 
We apply the Veronese mapping $V(x)\eqdef (x,1)^{\otimes m}$ on $x$. The distribution of $(V(x),y)$ over $\R^{(N+1)^{O(m)}}\times [k]$ is consistent with an instance of MLC with RCN specified by $H$.
Therefore, to get an SQ lower bound for MLC over $\R^d$, it suffices for us to give an SQ lower bound for learning degree-$m$ polynomial classifiers over $\R^N$, where $d=N^{O(m)}$.

Furthermore, by \Cref{th test-to-learn}, to get the SQ lower bound for learning polynomial classifiers, it suffices for us to give an SQ lower bound on a corresponding testing problem.
Therefore, we construct a distribution family $\D$ of joint distributions of $(x,y)$ over $\R^N\times [k]$ such that 
each distribution in $\D$ is consistent with a multiclass polynomial classification problem with RCN $(D,f^*,H)$ as required by \Cref{th test-to-learn}.

The construct $\D$ as the hidden direction distribution family defined in \Cref{def 1D}.
We choose $A_1$ and $A_2$ as specified in \Cref{def distribution}, $a=(1/2,1/2)$. Since the noise matrix $H$ satisfies $h_3 = (h_1+h_2)/2$, by \Cref{th hidden family}, we know that each distribution $D$ is consistent with an instance of multiclass polynomial classification problem with degree-$O(m)$ with RCN specified by $H$. To make use of \Cref{th hidden family} to get an SQ lower bound, it remains to choose parameters for $A_1,A_2$ such that it is hard to solve $\mathcal{B}(D_0,\D)$. 

Fix any small enough constant $\epsilon>0$.
We choose the parameters $m,\delta$ such that $m\delta = \eps$.
By \Cref{prop distribution}, we know that for every $t \in \N$, we have 
\begin{align*}
    \abs{\E_{x \sim A_i} x^t - \gamma_t} & \le O(t!)\exp\left(-\Omega(1/\delta^2)\right)+ 12\xi(1+2m\delta)^t \\
    & \le O(1) \left( \exp\left(t\log(t)-\Omega(1/\delta^2)\right)+ \xi \exp(2\eps t) \right).
\end{align*}
We choose $\delta=1/\sqrt{N}, m = \eps \sqrt{N}$ and $\xi=\exp(-2N^{0.99})$. For any $t\le N^{0.99}$, we have 
\begin{align*}
    \abs{\E_{x \sim A_i} x^\ell - \gamma_\ell}& \le O\left( \exp\left(t\log(t)-\Omega(1/\delta^2)\right)+ \xi \exp(t) \right) \\
    &\le \exp(-\Omega(N))+\exp(-2N^{0.99}+N^{0.99})=\exp(-\Omega(N^{0.99})) =: \nu.
\end{align*}


By \Cref{th test-to-learn}, we know that any statistical query learning algorithm that learns $(D^{A,a}_v,f^*,H)$ up to error $\opt+\alpha$, where $\alpha = \frac{1}{2}\sum_{j=1}^2(H_{jj}-H_{j3})\Pr(S_j)$ can solve $\mathcal{B}(D_0,\D)$. 
By the construction of $D^{A,a}_v$ and $H$, 
\begin{align*}
\alpha & = 0.1 \Pr_{x\sim A_1}(x \in I_{in}) = 0.1 \sum_{-m \le n \le m} \int_{n\delta-\xi}^{n\delta+\xi}\frac{\delta}{\xi} \frac{1}{\norm{G_{\delta,\xi}}} G(x) dx \\
& \ge \Omega(1) \sum_{-m \le n \le m} \int_{n\delta-\xi}^{n\delta+\xi} (\frac{\delta}{\xi})G(x) dx 
 \ge \Omega(1) \sum_{-m \le n \le m} 2\xi (\frac{\delta}{\xi}) G(2m\delta) \ge \Omega(1)(2m+1)\delta = \Omega(\epsilon).
\end{align*}
Here, the first inequality holds because of \Cref{fact measure}, the second inequality holds because $G(x)$ is decreasing with respect to $\abs{x}$ and the last inequality holds because $\epsilon=m\delta$. This implies that any statistical learning algorithm that learns the multiclass polynomial classification problem $(D^{A,a}_v,f^*,H), v \in \mathbb{S}^{N-1}$ up to error $\opt+O(\epsilon)$ must make at least $2^{\Omega(N)}$ statistical queries or a query with accuracy better than $\exp(-\Omega(N^{0.99}))$. 

Finally, we conclude the proof of \Cref{th additive} by embedding the multiclass polynomial classification problem $(D^{A,a}_v,f^*,H), v \in \mathbb{S}^{N-1}$ into $\R^d, d= O(N^m)$ as a multiclass linear classification problem via choosing $m$ properly. By choosing $m= \eps \sqrt{N}$, we obtain that 
\begin{align*}
    \log(d) = \Theta(m \log(N)) = \Theta(\eps \sqrt{N} \log(N)). 
\end{align*}

any statistical learning algorithm that learns the multiclass linear classification problem over $\R^d$ up to error $\opt+O(\epsilon)$ must make at least 
\begin{align*}
 \exp(\Omega(N^{0.99})) = d^{\Omega(N^{0.99}/\log(d))} = d^{\Tilde{\Omega}(\log^{1.98}(d)/\eps^{1.98})}   
\end{align*}
statistical queries or a query with accuracy better than $\exp(-\Omega(N^{0.99}))=d^{-\Tilde{\Omega}(\log^{1.98}(d)/\eps^{1.98})}$.

\end{proof}

\subsection{Proof of \Cref{th multiply}}\label{app multiply}

We give the proof of \Cref{th multiply} below. For convenience, we state \Cref{th multiply} as \Cref{th multiply re}.
\begin{theorem}\label{th multiply re}
For any $k\in \Z_+$ and $k\geq 3$,
there is a noise matrix $H\in [0,1]^{k\times k}$ such that $\max_{i,j} H_{i,i}-H_{i,j}=\zeta>0$ and has the following property:
For any sufficiently large $d\in \Z_+$,
any SQ algorithm $A$ that distribution-free learns multiclass linear classifiers with random classification noise specified by $H$ on $\R^d$ to error $1-1/k-\zeta-2\mu$ either 
\begin{enumerate} 
    \item [(a)] requires at least $q$ queries, or
    \item [(b)] requires a query of tolerance at most $\mu$,
\end{enumerate}
where $\min(q,1/\mu^2)=d^{\Omega(\log^{0.99} d)}$.
In particular, this holds even if $\opt \le 1/k+\zeta+1/k^3$.
\end{theorem}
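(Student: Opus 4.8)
The plan is to feed a tailored hard instance into the machinery already built in the paper: reuse the testing-to-learning bridge of \Cref{th test-to-learn}, and take as the hard correlation-testing problem a hidden direction distribution family (\Cref{def 1D}) whose base distributions are the ones from \Cref{def distribution}. Concretely, take $H$ to be the $k\times k$ matrix displayed right after the theorem statement: rows $1,\dots,k-1$ have $H_{ii}=(k-1)/k-\zeta$, $H_{ik}=1/k+\zeta$ and $H_{ij}=0$ otherwise, while row $k$ has $H_{kk}=1/k+\zeta$ and $H_{kj}=1/k-\zeta/(k-1)$ for $j<k$. One checks directly that $h_k=\frac{1}{k-1}\sum_{j<k}h_j$, so \eqref{cond convex} in \Cref{def condition} holds with $a_j=1/(k-1)$, and $H_{jj}\ge H_{ji}$ for all $j,i$, so \eqref{cond opt} holds as well (the minimum separation $\min_{i\neq j}(H_{ii}-H_{ij})$ is $\Theta(\zeta)$, achieved on the $k$-th row). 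The key feature of this $H$ is that the trivial hypothesis $\equiv k$ has error $1-H_{kk}=1-1/k-\zeta$ both under the null distribution and under every member of a hidden direction family built on $H$, while $f^*$ incurs only the tiny noise rate $1/k+\zeta$ on the ``inner'' region and the large rate $1-1/k-\zeta$ off it.

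I would then form, over $\R^N$, the family $\D=\{D^{A,a}_v\}_v$ as in \Cref{def 1D}, with $A_1,\dots,A_{k-1}$ from \Cref{def distribution} and $a_j=1/(k-1)$. By the first part of \Cref{th hidden family}, each $D^{A,a}_v$ is consistent with a degree-$O(m)$ polynomial MLC-with-RCN instance satisfying \Cref{def condition}, and by \Cref{lm error decomposition},
\[
\opt=P_{in}(1/k+\zeta)+(1-P_{in})(1-1/k-\zeta),\qquad P_{in}:=\Pr_{z\sim A_1}(z\in I_{in}).
\]
Since $A_1$ is a discretized $N(0,1)$ and $I_{in}$ is essentially $[-m\delta,m\delta]$, a Gaussian tail bound gives $1-P_{in}\le 1/k^3$ once $m\delta=\Theta(\sqrt{\log k})$, so $\opt\le 1/k+\zeta+1/k^3$ as required. (As in the proof of \Cref{th additive}, a degree-$m$ polynomial MLC over $\R^N$ with noise $H$ reduces to a linear MLC over $\R^{(N+1)^{O(m)}}$ with the same $H$ via the Veronese map $V(x)\eqdef(x,1)^{\otimes m}$, so it suffices to prove the bound for polynomial classifiers over $\R^N$, with $d=N^{O(m)}$.)

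For the lower bound itself, suppose $\mathcal A$ is an SQ learner that on every $D\in\D$ outputs $\hat h$ with $\err(\hat h)\le 1-1/k-\zeta-2\mu$ using $q$ queries of tolerance $\tau'$. Run $\mathcal A$ on the unknown $D$, spend one more query to estimate $\err(\hat h)$ to tolerance $\mu$, and reject the null exactly when this estimate is below $1-1/k-\zeta-\frac{3}{2}\mu$. Under the null no hypothesis beats error $1-1/k-\zeta$ (since $y\perp x$ and $k=\argmax_j\Pr(y=j)$), so we never reject; under $D\in\D$ we always do; hence the composed algorithm solves $\mathcal B(D_0,\D)$ with $q+1$ queries of tolerance $\min(\tau',\mu)$ --- the same reasoning as \Cref{th test-to-learn}, now measured against the trivial error $1-1/k-\zeta$ rather than $\opt+\alpha$. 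Now apply the second part of \Cref{th hidden family}: with $\beta=\max_{i,j}\chi_{N(0,1)}(A_i,A_j)=O((\delta/\xi)^2)$ and $A_i$ matching the first $t$ moments of $N(0,1)$ up to $\nu=O(t!)\,e^{-\Omega(1/\delta^2)}+O(k\xi)(1+2m\delta)^t$ (both from \Cref{prop distribution}), any tester needs a query of tolerance $<2\sqrt{\tau}$ or $2^{\Omega_c(d)}\tau/\beta$ queries, where $\tau=\nu^2+c^t\beta$. Choosing $\delta=\Theta(1/\sqrt N)$ (hence $m=\Theta(\sqrt{N\log k})$ to keep $m\delta=\Theta(\sqrt{\log k})$), the moment-matching degree $t=\tilde\Theta(1/\delta^2)=\tilde\Theta(N)$, and $\xi=e^{-\tilde\Theta(N)}$ small enough that $c^t\beta$ stays controlled, yields $\nu,\tau=e^{-\tilde\Omega(N)}$ and $\tau/\beta\ge c^t=e^{-\tilde\Omega(N)}$; since $d=N^{O(m)}$, the factor $2^{\Omega(d)}$ dominates, so $\min(q,1/\mu^2)\ge e^{\tilde\Omega(N)}$. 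Finally $\log d=\tilde\Theta(m)=\tilde\Theta(\sqrt N)$, so $N=\tilde\Theta(\log^2 d)$ and $e^{\tilde\Omega(N)}=d^{\tilde\Omega(\log d)}\ge d^{\Omega(\log^{0.99}d)}$, which is the stated bound.

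The main obstacle is closing the degree--dimension tradeoff. Shrinking $\delta$ sharpens the moment match and hence the internal hardness ($\sim e^{1/\delta^2}$ in $N$), but $\delta=\Theta(\sqrt{\log k})/m$ forces the polynomial degree $m$ up, and the Veronese embedding then inflates the ambient dimension to $d=N^{O(m)}$, so $\log d\approx m\log N$ grows and the per-$d$ exponent degrades. One must verify that $m=\Theta(\sqrt{N\log k})$ lies in the sweet spot where the $e^{\tilde\Omega(N)}$ hardness still reads as $d^{\Omega(\log^{0.99}d)}$, and, simultaneously, that $t$ and $\xi$ can be chosen so that both $\nu$ and $c^t\beta$ are super-polynomially small: these pull against each other, since a small $\xi$ shrinks $\nu$ but blows up $\beta=O((\delta/\xi)^2)$, so one needs $\log(1/\xi)$ comparable to $t$ and $t$ comparable to $1/\delta^2$. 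Checking that the three scalings $m\delta=\Theta(\sqrt{\log k})$, $1/\delta^2=\Theta(N)$ and $\log(1/\xi)\asymp t\asymp N$ are mutually consistent and produce the claimed bound is the quantitative heart of the argument; the rest is bookkeeping on top of \Cref{th test-to-learn}, \Cref{th hidden family} and \Cref{prop distribution}.
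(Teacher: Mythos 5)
Your proposal follows exactly the paper's route: the same noise matrix (and you have the correct entry $H_{kj}=1/k-\zeta/(k-1)$ for $j<k$, where the paper's displayed matrix has a small typo $\zeta/k$ that would make row $k$ not sum to $1$), the same hidden-direction family built from \Cref{def distribution} with $a_j=1/(k-1)$, the same parameter regime $\delta\sim 1/\sqrt N$, $m\delta\sim\sqrt{\log k}$, $t\lesssim 1/\delta^2$, $\xi$ exponentially small, followed by \Cref{prop distribution}, \Cref{th hidden family}, and the Veronese embedding exactly as in the proof of \Cref{th additive}. Your bound on $\opt$ via the error-decomposition formula of \Cref{lm error decomposition} and a Gaussian tail on $1-P_{in}$ is also the paper's argument.

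One small arithmetic slip in the testing reduction: you reject when the estimate of $\err(\hat h)$ (computed to tolerance $\mu$) falls below $1-1/k-\zeta-\tfrac32\mu$. Under the alternative you only have $\err(\hat h)\le 1-1/k-\zeta-2\mu$, so the estimate is bounded above by $1-1/k-\zeta-\mu$, which is \emph{larger} than your threshold $1-1/k-\zeta-\tfrac32\mu$; the test can therefore fail to reject. This is purely bookkeeping: take the extra query to have tolerance $\mu/2$ (so the alternative estimate is $\le 1-1/k-\zeta-\tfrac32\mu$ and the null estimate is $\ge 1-1/k-\zeta-\mu/2$, with the threshold at $1-1/k-\zeta-\mu$ in between), or invoke \Cref{th test-to-learn} verbatim with $\alpha=\mu$. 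With that fix the argument is sound and matches the paper.
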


\begin{proof}[Proof of \Cref{th multiply}]
    Given $k\in \Z_+$ and $k\geq 3$, we construct the corresponding noise matrix $H$ as
    \begin{align*}
    H = \begin{pmatrix}
        (k-1)/k-\zeta & 0 & \cdots & 1/k+\zeta \\
        0 & (k-1)/k-\zeta & \cdots & 1/k+\zeta \\
        \cdots & \cdots  & \cdots & \cdots\\
        1/k-\zeta/k & 1/k-\zeta/k & \cdots & 1/k+\zeta
    \end{pmatrix}.
    \end{align*}
    Namely, for all $i\in [k-1]$, the $i$th row is defined as 
    $H_{i,i}=(k-1)/k-\zeta$, $H_{i,k}=1/k+\zeta$ and $H_{i,j}=0$ for any $j$ such that $j\neq i$ and $j\neq k$.
    Then the $k$th row is defined as $H_{k,k}=1/k+\zeta$ and $H_{k,j}=1/k-\zeta/k$ for any $j\neq k$.
    
    Given the noise matrix $H$, we will construct a hidden direction distribution family $\D$ over $\R^N$ that is consistent with a family of multiclass polynomial classification problems with RCN $(D^{A,a},f^*,H)$ using polynomials of degree $O(m)$ for some $m$ to be determined later and prove the SQ hardness for these multiclass polynomial classification problems. Given the hardness, the hard instance of multiclass linear classification problems with RCN, would be an instance in $\R^d$, where $d=N^{O(m)}$ of the form $(M(x),y), (x,y) \sim D^{A,a}$, where $M(x):\R^N \to \R^d$ defined as $M(x)=[x,1]^{\otimes m}$ is the degree-$m$ Veronese mapping that maps a vector $x \in \R^N$ to monomials of degree at most $m$. 

    To start with, we construct the hidden direction distribution family $\D$ using the distributions $A_1,\dots,A_{k-1}$ constructed in \Cref{def distribution}. Notice that $H$ satisfies \Cref{def condition} because
    $h_k=\sum_{i\in [k-1]}a_i h_i$, where $a_i=1/(k-1)$ for all $i$. We know from \Cref{th hidden family} that for every $A=(A_1,\dots,A_{k-1})$, where $A_1,\dots,A_{k-1}$ are one dimension distributions constructed in \Cref{def distribution}, $\D = \{D^{A,a}_v\}_{v \in \mathbb{S}^{N-1}}$ is a hidden direction distribution family consistent with a family of multiclass polynomial classification problems with RCN $(D^{A,a},f^*,H)$ using polynomials of degree $O(m)$. Now, we choose parameters for $A_1,\dots,A_{k-1}$ such that it is hard to solve $\mathcal{B}(D_0,\D)$. 
    
    We will choose $\delta=1/\sqrt{N}$, $\xi=\exp(-N^{0.99}\log k)$ and $m=\lceil C\sqrt{\log k}/\delta\rceil$, where $C$ is a sufficiently large constant.
    By \Cref{prop distribution}, we know that for every $t \in \N$, we have 
    \begin{align*}
    \abs{\E_{x \sim A_i} x^t - \gamma_t} & \le O(t!)\exp\left(-\Omega(1/\delta^2)\right)+ 12\xi(1+2m\delta)^t \\
    & \le O\left( \exp\left(t\log(t)-\Omega(1/\delta^2)\right)+ \xi \exp(t\sqrt{\log k}) \right).
    \end{align*}
Therefore, we get for any $t\leq N^{0.99}$,
\begin{align*}
    \abs{\E_{x \sim A_i} x^\ell - \gamma_\ell}& \le O\left( \exp\left(t\log(t)-\Omega(1/\delta^2)\right)+ \xi \exp(t\sqrt{\log k}) \right) \\
    &\le \exp(-\Omega(N))+\exp(-N^{0.99}\log k+N^{0.98}\sqrt{\log k})=\exp(-\Omega(N^{0.99})) =: \nu.
\end{align*}
By \Cref{prop distribution}, we know that 
\begin{align*}
    \beta := \max_{i,j} \chi_{N(0,1)}(A_i,A_j) = O(\delta/\xi)^2 = \exp(O(N^{0.99}\log k)).
\end{align*}
This implies that (we take the constant $c$ in \Cref{lm correlation massart} as $c^{-1}>2^{10\log k}$)
\begin{align*}
    \tau:= \nu^2 +c^t\beta \le \exp(-\Omega(N^{0.99}))+\exp(-\Omega(N^{0.99}\log k))=\exp(-\Omega(N^{0.99})).
\end{align*}
By \Cref{th hidden family}, we know that to solve the correlation testing problem $\mathcal{B}(D_0,\D)$, one need at least $2^{\Omega_k(N)}\tau/\beta = 2^{\Omega_k(N)}$ (given $N$ is at least a sufficiently large constant depending on $N$) statistical queries or a query with accuracy better than $2\sqrt{\tau}=\exp(-\Omega(N^{0.99}))$. 
Furthermore, since $N$ is at least a sufficiently large constant depending on $k$, the lower bound on the number of queries is $2^{\Omega_k(N)}\geq 2^{\Omega(N^{0.99})}$, where we simply take $N^{0.01}\geq c(k)$ and $c(k)$ is the constant factor in $2^{\Omega_k(N)}$ that depends on $k$.

Notice that by \Cref{lm error decomposition}, for any $D\in \cal D$, we have
\begin{align*}
    \opt= & (1/k+\zeta)\pr_{t\sim G_{\delta,\xi}} [t\in [-m\delta,m\delta]]+((k-1)/k-\zeta)\pr_{t\sim G_{\delta,\xi}} [t\in (-\infty,-m\delta-\delta/2]\cup [m\delta+\delta/2,\infty)]\\
    \le & (1/k+\zeta)+\pr_{t\sim G_{\delta,\xi}} [t\in (-\infty,-m\delta-\delta/2]\cup [m\delta+\delta/2,\infty)]\\
    \le & (1/k+\zeta)+2\sum_{i>m}\int_{i\delta-\xi}^{i\delta+\xi} G_{\delta,\xi}(t)dt \\
    \le & (1/k+\zeta)+2\pr_{t\sim \normal(0,1)}[t\geq m\delta]
    =(1/k+\zeta)+1/\poly(k)\; .
\end{align*}
On the other hand, if the input distribution is $D_0$, then every hypothesis $h$ has an error 
\[\err_{D_0}(h) \ge  1-(1/k+\zeta)=1-1/k-\zeta.\]
Therefore, any algorithm for learning multiclass polynomial classification with RCN matrix $H$ and achieving an error better than $1-1/k-\zeta-2\mu$ can be used to solve $\mathcal{B}(D_0,\D)$ with one more query with accuracy $2\mu$.  Thus, any such algorithm must either uses $q$ queries or a query of tolerance at most $\mu$, where $q=1/\mu=2^{\Omega(N^{0.99})}$.

Finally, we conclude the proof of \Cref{th additive} by embedding the multiclass polynomial classification problem $(D^{A,a}_v,f^*,H), v \in \mathbb{S}^{N-1}$ into $\R^d, d= O(N^m)$ as a multiclass linear classification problem and rewrite. 
Given $d=O(N^m)$, we get that $2^{\Omega(N^{0.99})}=d^{\Omega(N^{0.99}/m)}=d^{\Omega(N^{0.49}/(c\sqrt{\log k}))}\geq d^{\Omega(N^{0.45})}\geq d^{\Omega(m^{0.9})}\geq d^{\Omega((\log d)^{0.9})}$.
Therefore, 
any SQ learning algorithm that learns the multiclass linear classification problem over $\R^d$ to error  
$1-(1/k+\zeta)=1-1/k-\zeta$ (even given $\opt\leq 1/k+\zeta+1/\poly(k)$) must make at least $d^{\Omega((\log d)^{0.9})}$ statistical queries or a query with accuracy better than $d^{-\Omega((\log d)^{0.9})}$. 
\end{proof}

\subsection{Proof of \Cref{cor approximation} and \Cref{cor beat constant}} \label{app corollary}
We present the proof of \Cref{cor approximation} and \Cref{cor beat constant}. For convenience, we restate \Cref{cor approximation} and \Cref{cor beat constant} as \Cref{cor approximation re} and \Cref{cor beat constant re} respectively.
\begin{corollary} [SQ hardness of approximate learning] \label{cor approximation re}
    For any $C>1$, there exists a noise matrix $H\in [0,1]^{k\times k}$, where $k=O(C)$ and $\min_{i,j} H_{i,i}-H_{i,j}=\Omega(1/C)$ that has the following property: For any $d\in \Z_+$ that is at least a sufficiently large constant depending on $\alpha$, any SQ algorithm $A$ that distribution-free learns multiclass linear classifiers on $\R^d$ with RCN parameterized by $H$ to error $C\opt$ given $\opt=\Omega(1/C)$ either 
    \begin{enumerate}
        \item [(a)] requires at least $d^{\Omega(\log^{0.99} d)}$ queries, or
        \item [(b)] requires a query of tolerance at most $1/d^{\Omega(\log^{0.99} d)}$.
    \end{enumerate}
\end{corollary}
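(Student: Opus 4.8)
The plan is to obtain \Cref{cor approximation} as an immediate consequence of \Cref{th multiply}, by instantiating the parameters $k$ and $\zeta$ of its noise matrix appropriately. Fix $C>1$. I would take $k:=\lceil c_0 C\rceil$ for a sufficiently large absolute constant $c_0$ (so in particular $k\ge 3$) and $\zeta:=1/k$. With these choices the noise matrix $H$ of \Cref{th multiply} has the following separations: for a row $i\in[k-1]$ one has $H_{ii}-H_{ij}=(k-1)/k-\zeta=\Theta(1)$ when $j\notin\{i,k\}$ and $H_{ii}-H_{ik}=(k-2)/k-2\zeta=\Theta(1)$, while for the last row $H_{kk}-H_{kj}=\zeta(1+1/k)=\Theta(1/k)=\Theta(1/C)$. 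Hence the bottleneck is the last row, so $\min_{i\neq j}(H_{ii}-H_{ij})=\Theta(1/C)=\Omega(1/C)$ and $k=O(C)$, exactly as the corollary requires.

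Next I would record the size of $\opt$ on the hard family. The last assertion of \Cref{th multiply} gives $\opt\le 1/k+\zeta+1/k^3$ for every instance in the family; and the computation of $\opt$ in its proof via \Cref{lm error decomposition} also yields $\opt\ge 1/k+\zeta$, since $1-H_{jj}=1/k+\zeta$ for $j\in[k-1]$ and $1-H_{kk}=(k-1)/k-\zeta\ge 1/k+\zeta$, so $\opt=\sum_j\Pr(S_j)(1-H_{jj})\ge (1/k+\zeta)\sum_j\Pr(S_j)=1/k+\zeta$. Therefore $\opt=\Theta(1/k)=\Theta(1/C)$, so the promise $\opt=\Omega(1/C)$ holds on every instance of the family. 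Moreover, with $\zeta=1/k$ we get $C\opt\le C(2/k+1/k^3)\le 3C/k$; choosing $c_0$ large enough makes $3C/k\le 1/2$, while $1-1/k-\zeta=1-2/k$ is at least a fixed constant bigger than $1/2$. Hence there is a fixed constant $\mu_0>0$ with $C\opt\le 1-1/k-\zeta-2\mu$ for every $\mu\le\mu_0$.

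Finally I would conclude as follows. Any SQ algorithm that learns MLC with RCN specified by this $H$ to error $C\opt$ in particular outputs, on each instance of the family, a hypothesis of error at most $1-1/k-\zeta-2\mu$ for every $\mu\le\mu_0$; hence it is a valid learner in the sense of \Cref{th multiply}. Taking $\mu:=d^{-c_1\log^{0.99}d}$ for an appropriate constant $c_1$ (which is $\le\mu_0$ once $d$ exceeds a constant depending on $C$), \Cref{th multiply} forces $\min(q,1/\mu^2)=d^{\Omega(\log^{0.99}d)}$, and since $1/\mu^2=d^{\Omega(\log^{0.99}d)}$ this gives $q=d^{\Omega(\log^{0.99}d)}$; so the algorithm must make at least $d^{\Omega(\log^{0.99}d)}$ queries or a query of tolerance at most $\mu=1/d^{\Omega(\log^{0.99}d)}$, which is precisely the claim. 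The only genuinely delicate point is the joint feasibility of the three requirements --- $k=O(C)$, separation $\Omega(1/C)$, and $C\opt$ bounded below $1-1/k-\zeta$ by a constant --- which forces $\zeta=\Theta(1/k)$ and $k$ a sufficiently large constant multiple of $C$; once that balance is identified, everything else is bookkeeping on top of \Cref{th multiply}.
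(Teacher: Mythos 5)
Your proof is correct and follows essentially the same route as the paper: instantiate \Cref{th multiply} with $k=\Theta(C)$ and $\zeta=\Theta(1/k)$, check that the separation of the resulting noise matrix is $\Theta(1/C)$ and that $C\opt$ falls below the $1-1/k-\zeta$ threshold by a constant, and then invoke the theorem. The paper takes $k=\lceil 3C\rceil$, $\zeta=1/(100k)$ where you take $k=\lceil c_0 C\rceil$, $\zeta=1/k$ — a purely cosmetic difference — and you are in fact slightly more careful than the paper in also verifying the lower bound $\opt\ge 1/k+\zeta=\Omega(1/C)$ so that the promise in the statement is actually met on the hard family.
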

\begin{proof}
    This directly follows from \Cref{th multiply}, where we take $k=\lceil 3C\rceil$ and $\zeta=1/(100k)$.
    Then we have $\opt=1/k+\zeta+1/k^3=1.01/k+1/k^3$.
    An algorithm that achieves error $\alpha\opt$ given $\opt=O(1/C)$ will in this case, output a hypothesis with error $\alpha\opt\le (k/3)\opt\le 2/3$.
    Notice that the SQ lower bound in \Cref{th multiply} holds against any algorithm that outputs a hypothesis with error at most $1-1/k-\zeta-1/\poly(d)=1-1/k-0.01/k-1/\poly(d)\ge 2/3$.
    This completes the proof.
\end{proof}

\begin{corollary} [SQ hardness of beating random guess] \label{cor beat constant re}
    For any $k\in \Z_+$ and $k\geq 3$, there is a noise matrix $H\in [0,1]^{k\times k}$ that $\min_{i,j} H_{i,i}-H_{i,j}=O(1/d)$ and has the following property:
    For any $d\in \Z_+$ that is at least a sufficiently large constant depending on $k$, any SQ algorithm $A$ that distribution-free learns multiclass linear classifiers on $\R^d$ with RCN parameterized by $H$ to error $1-1/k-1/\poly(d)$ given $\opt=O(1/k)$ either 
    \begin{enumerate}
        \item [(a)] requires at least $d^{\Omega(\log^{0.99} d)}$ queries, or
        \item [(b)] requires a query of tolerance at most $1/d^{\Omega(\log^{0.99} d)}$,
    \end{enumerate}
\end{corollary}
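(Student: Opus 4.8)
The plan is to deduce \Cref{cor beat constant} directly from \Cref{th multiply} by instantiating the two free parameters there---the separation $\zeta$ and the tolerance $\mu$---as functions of $d$. Fix $k\geq 3$ and let $H$ be the $k\times k$ noise matrix constructed in the proof of \Cref{th multiply}, with $\zeta:=1/d$ (any $\zeta=\Theta(1/d)$ works). First I would read off the separation of this matrix: for rows $i\in[k-1]$ the gaps $H_{ii}-H_{ij}$ equal $(k-1)/k-\zeta$ (for $j\notin\{i,k\}$) or $(k-2)/k-2\zeta$ (for $j=k$), while for row $k$ the gap $H_{kk}-H_{kj}$ is $\Theta(\zeta)$. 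Since $k\geq 3$ and $\zeta$ is tiny, the minimum is attained in row $k$, so $\min_{i\neq j}(H_{ii}-H_{ij})=\Theta(\zeta)=\Theta(1/d)$, which is positive and of the form $O(1/d)$ as the statement demands.

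Next comes the bookkeeping with the remaining quantities. Choose $\mu:=d^{-\Omega(\log^{0.99}d)}$, so that $1/\mu^{2}=d^{\Omega(\log^{0.99}d)}$ and hence $\min(q,1/\mu^{2})=d^{\Omega(\log^{0.99}d)}$ in the notation of \Cref{th multiply}. Then $\zeta+2\mu=\Theta(1/d)=1/\poly(d)$, so the error threshold ruled out by \Cref{th multiply}, namely $1-1/k-\zeta-2\mu$, is exactly $1-1/k-1/\poly(d)$; in particular any SQ algorithm learning MLC with RCN $H$ on $\R^d$ to error $1-1/k-1/\poly(d)$ also learns to error $1-1/k-\zeta-2\mu$ for these choices. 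For the optimum, \Cref{th multiply} asserts its lower bound even when $\opt\le 1/k+\zeta+1/k^{3}$; since $k\geq 3$ we have $\zeta+1/k^{3}=O(1/k)$, so $\opt=O(1/k)$, matching the hypothesis of the corollary. Invoking \Cref{th multiply} with this $H$ and $\mu$ then gives exactly the two alternatives claimed: at least $d^{\Omega(\log^{0.99}d)}$ queries, or a query of tolerance at most $d^{-\Omega(\log^{0.99}d)}$.

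The one place I would be careful is checking that the proof of \Cref{th multiply} is unaffected by letting $\zeta$ depend on $d$. Inspecting that argument, the hard construction---the choice of $\delta,\xi,m$ in \Cref{def distribution}, the moment and chi-squared estimates of \Cref{prop distribution}, and the resulting SQ-hardness of the correlation testing problem obtained from \Cref{th hidden family}---is completely independent of $\zeta$; the parameter $\zeta$ enters only through the explicit entries of $H$ and through the two elementary estimates $\opt\le 1/k+\zeta+1/\poly(k)$ and $\err_{D_0}(h)\ge 1-1/k-\zeta$, all of which remain valid verbatim for $\zeta=1/d$ (one needs only $\zeta\le (k-1)/k$, i.e., $d$ at least a constant depending on $k$, which is precisely the regime assumed in the statement). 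So the only genuine content is this consistency check together with the $1/\poly(d)$ slack bookkeeping above; there is no new technical obstacle, the hardness being inherited wholesale from \Cref{th multiply}.
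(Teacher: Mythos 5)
Your proof is correct and takes essentially the same route as the paper's: both deduce the corollary from \Cref{th multiply} by choosing $\zeta$ as an inverse-polynomial function of $d$ and plugging in a super-polynomially small $\mu$, then checking that $\opt = 1/k + \zeta + 1/\poly(k) = O(1/k)$ and that the separation is $\Theta(\zeta)$. The only cosmetic difference is that you fix $\zeta = 1/d$, whereas the paper's proof works with a parameterized $\zeta = 1/(2d^c)$ to match an arbitrary target error $1-1/k-1/d^c$; your additional sanity check that the hard construction in the proof of \Cref{th multiply} (the choices of $\delta,\xi,m$, \Cref{prop distribution}, and \Cref{th hidden family}) is genuinely $\zeta$-free is a sensible and correct observation that the paper leaves implicit.
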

\begin{proof}
    This directly follows from \Cref{th multiply}.
    Suppose that there is an algorithm achieving error $1-1/k-1/d^c$ for any $c>0$.
    Then we take $\zeta=1/(2d^c)$.
    Given $d$ is a sufficiently large constant depending on $k$, it is easy to check that $\opt=1/k+\zeta+1/k^3=1/k+1/(2d^c)+1/k^3=O(1/k)$.
    Furthermore, the SQ lower bound holds against any algorithm that outputs a hypothesis with error at most $1-1/k-\zeta-1/\poly(d)=1-1/k-\zeta-2\tau\ge 1-1/k-1/d^c$,
    where the last inequality follows from $\tau=d^{\Omega(\log d)^{0.99}}$.
    This completes the proof.
\end{proof}

\begin{remark}
   {\em  We want to remark that in \Cref{cor beat constant}, to rule out an efficient learning algorithm that has a better error guarantee than $1-1/k$, any choice of $\zeta=o_{k}(1)$ is sufficient, and the separation $\min_{i\neq j} H_{ii}-H_{ij}$ of $H$ is in fact $O(\zeta)$.}
\end{remark}

\end{document}


\end{document}

